\documentclass{article} %
\usepackage{iclr2026_conference,times}

\usepackage{amsmath,amsfonts,bm}

\def\eqref#1{equation~\ref{#1}}

\def\1{\bm{1}}

\DeclareMathAlphabet{\mathsfit}{\encodingdefault}{\sfdefault}{m}{sl}
\SetMathAlphabet{\mathsfit}{bold}{\encodingdefault}{\sfdefault}{bx}{n}

\newcommand{\R}{\mathbb{R}}

\newcommand{\softmax}{\mathrm{softmax}}

\usepackage[utf8]{inputenc} %
\usepackage[T1]{fontenc}    %
\usepackage{url}            %
\usepackage{microtype}      %
\usepackage{xcolor}         %

\definecolor{darkblue}{rgb}{0, 0, 0.5}
\usepackage[colorlinks=true, citecolor=darkblue,linkcolor=darkblue, urlcolor=darkblue]{hyperref}

\usepackage{fancyhdr}
\usepackage{algorithm}
\usepackage{algorithmic}
\usepackage{forloop}

\usepackage{booktabs}
\usepackage{makecell}
\usepackage{multirow}
\usepackage{arydshln}

\usepackage{graphicx}
\usepackage{comment}
\usepackage{xspace}
\usepackage{enumitem}

\usepackage{tikz}
\usetikzlibrary{decorations.pathmorphing}
\usetikzlibrary{shapes.geometric}

\usepackage{subcaption}

\usepackage{amsmath}
\usepackage{amssymb}
\usepackage{mathtools}
\usepackage{amsthm}
\usepackage{amsfonts}
\usepackage{bm}
\usepackage{nicefrac}
\usepackage{dsfont}
\theoremstyle{plain}

\usepackage{mdframed}
\definecolor{theoremcolor}{rgb}{0.94, 0.94, 0.94}
\definecolor{examplecolor}{rgb}{1, 1, 1.0}
\mdfsetup{
    backgroundcolor=theoremcolor,
    linewidth=0pt,
}
\newmdtheoremenv[linewidth=0pt,innerleftmargin=4pt,innerrightmargin=4pt]{definition}{Definition}
\newmdtheoremenv[linewidth=0pt,innerleftmargin=4pt,innerrightmargin=4pt]{proposition}{Proposition}
\newmdtheoremenv[linewidth=0pt,innerleftmargin=0pt,innerrightmargin=0pt,backgroundcolor=examplecolor]{example}{Example}
\newmdtheoremenv[linewidth=0pt,innerleftmargin=4pt,innerrightmargin=4pt]{corollary}{Corollary}
\newmdtheoremenv[linewidth=0pt,innerleftmargin=4pt,innerrightmargin=4pt]{theorem}{Theorem}
\newmdtheoremenv[linewidth=0pt,innerleftmargin=4pt,innerrightmargin=4pt]{lemma}{Lemma}
\newmdtheoremenv[linewidth=0pt,innerleftmargin=4pt,innerrightmargin=4pt]{remark}{Remark}

\DeclareMathOperator{\aentmax}{\alpha\text{-}entmax}

\makeatletter
\def\adl@drawiv#1#2#3{%
        \hskip.5\tabcolsep
        \xleaders#3{#2.5\@tempdimb #1{1}#2.5\@tempdimb}%
                #2\z@ plus1fil minus1fil\relax
        \hskip.5\tabcolsep}
\newcommand{\cdashlinelr}[1]{%
  \noalign{\vskip 2pt
           \global\let\@dashdrawstore\adl@draw
           \global\let\adl@draw\adl@drawiv}
  \cdashline{#1}[.4pt/2pt]
  \noalign{\global\let\adl@draw\@dashdrawstore
           \vskip 2pt}}
\makeatother

\renewcommand\sfdefault{cmss}

\definecolor{set10-red}{HTML}{e41a1c}
\definecolor{set10-blue}{HTML}{377eb8}
\definecolor{set10-green}{HTML}{4daf4a}

\title{
Long-Context Generalization with \\ Sparse Attention
}

\author{
 \textbf{Pavlo Vasylenko\textsuperscript{1,2}},
 \textbf{Hugo Pitorro\textsuperscript{2}},
 \textbf{André F. T. Martins\textsuperscript{1,2,3,4}},
 \textbf{Marcos Treviso\textsuperscript{1,2,4}}
\\
 \textsuperscript{1}Instituto Superior Técnico, Universidade de Lisboa,
 \textsuperscript{2}Instituto de Telecomunicações, \\
 \textsuperscript{3}TransPerfect,
 \textsuperscript{4}ELLIS Unit Lisbon
}

\iclrfinalcopy %

\begin{document}

\maketitle

\begin{abstract}
    Transformer-based architectures traditionally employ softmax to compute attention weights, which produces dense distributions over all tokens in a sequence. 
    While effective in many settings, this density has been shown to be detrimental for tasks that demand precise focus on fixed-size patterns: as sequence length increases, non-informative tokens accumulate  attention probability mass, leading to dispersion and representational collapse.
    We show in this paper that dynamically sparse attention mechanisms using $\alpha$-entmax can avoid these issues, due to their ability to assign exact zeros to irrelevant tokens. Furthermore, we introduce Adaptive-Scalable Entmax (ASEntmax), which endows $\alpha$-entmax with a learnable temperature parameter, allowing the attention distribution to interpolate between sparse (pattern-focused) and dense (softmax-like) regimes.
    Our empirical evaluation on synthetic tasks and language modeling demonstrates that ASEntmax substantially outperforms softmax, scalable softmax, and fixed-temperature $\alpha$-entmax baselines, achieving up to 1000$\times$ length extrapolation on synthetic benchmarks and superior long-context generalization on language modeling while preserving short-context performance, including better perplexity trends and higher retrieval accuracies at 8$\times$ training length.
    Source code: \url{https://github.com/deep-spin/asentmax}
    \looseness=-1
\end{abstract}

\section{Introduction}

The transformer architecture \citep{vaswani2017attention} has become the foundation of modern large language models (LLMs), establishing new benchmarks across diverse domains. 
However, as researchers push these models toward increasingly longer contexts---from thousands to millions of tokens---several fundamental limitations emerge that can be traced to the \textbf{softmax} transformation used in attention. 
Three critical limitations stand out: \textbf{representational collapse} occurs due to softmax's inability to maintain distinct attention patterns as sequence length grows, erasing meaningful distinctions between tokens~\citep{barbero2024transformers}; 
\textbf{over-squashing} is exacerbated by softmax's dense probability distribution, 
leading to exponential dilution of gradients~\citep{alon2021on,barbero2024transformers}; 
and \textbf{attention dispersion} arises from softmax's fundamental property that forces probability mass to be distributed across all tokens, with attention weights necessarily approaching an uniform distribution as context grows~\citep{velivckovic2024softmax,nakanishi2025scalable}.

Previous approaches to address these challenges include positional encoding innovations such as ALiBi \citep{press2022train} and RoPE \citep{su2024roformer}, which help to mitigate position bias issues.
Recent works directly target the root cause---the softmax function itself. \citet{nakanishi2025scalable} proposes Scalable-Softmax to scale logits based on context length, 
while \citet{velivckovic2024softmax} identify fundamental limitations of softmax for sharp out-of-distribution generalization and propose to learn adaptive temperatures to control the sharpness of softmax.
While effective, these solutions often require careful tuning or address only a subset of the challenges.

In this paper, we address the root cause of these problems by replacing softmax with \textbf{$\alpha$-entmax}~\citep{peters-etal-2019-sparse}, a \emph{differentiable} sparse transformation that induces probability distributions where irrelevant tokens receive \emph{exactly zero attention}. 
While $\alpha$-entmax has been used successfully in transformers \citep{correia-etal-2019-adaptively,gonccalves2025adasplash}, its length generalization properties, to the best of our knowledge, have never been studied. 
We show theoretically and empirically that $\alpha$-entmax consistently helps to address challenges in long context modeling.
Our key contributions include:%
\footnote{Our code will be available upon acceptance.}

\begin{itemize}[leftmargin=*]

    \item \textbf{Non-dispersion}: We establish that $\alpha$-entmax attention distributions maintain consistent focus regardless of sequence length, with entropy bounded by $\mathcal{O}(\log s)$  rather than approaching maximum entropy $\mathcal{O}(\log n)$ as with softmax, where $s \ll n$ is the number of tokens with nonzero probability.

    \item \textbf{Representational preservation}: $\alpha$-entmax attention with sparse support can avoid \emph{representational collapse} and reduces the number of gradient paths from $\mathcal{O}(n^L)$ to $\mathcal{O}(s^L)$, alleviating \emph{over-squashing} by strengthening the gradient flow for long-range dependencies.

    \item \textbf{Adaptive-scalable $\alpha$-entmax}:
    We introduce \textbf{ASEntmax}, which adaptively adjust sparsity based on sequence length to maintain optimal token selection even in extremely long contexts. 

    \item \textbf{Empirical results}: We demonstrate that
    ASEntmax achieves superior performance across synthetic and real-world tasks. 
    For example, as shown in Figure~\ref{fig:fig1}, ASEntmax achieves 95.3\% accuracy on associative recall at 65K tokens after training on just 64 tokens---a 1000$\times$ length extrapolation. 
\end{itemize}

\begin{figure}
    \centering
    \includegraphics[width=1.0\linewidth]{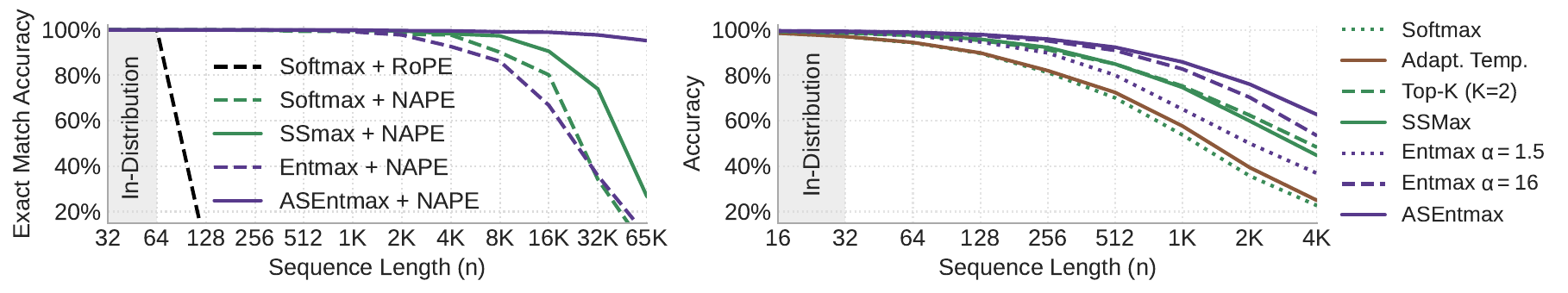}

    \caption{Long-context generalization on Multi-query Multi-token Associative Recall (left) and Max Retrieval (right). SSMax represents the Scalable Softmax approach by \citet{nakanishi2025scalable}, and Adaptive Temperature (Adapt. Temp.) represents the approach by \citet{velivckovic2024softmax}.
    While all methods benefit from using NAPE (NoPE + ALiBi), our adaptive-scaling version of $\alpha$-entmax exhibits the best extrapolation results, effectively handling extremely long sequences.
    }
    \label{fig:fig1}
\end{figure}

\section{Background}

\subsection{Transformers}

In this work, we study (causal) transformers with sparse attention distributions created by replacing softmax with $\alpha$-entmax. We present the precise mathematical formulation below, following closely the notation from \citep{barbero2024transformers}.
Concretely, given a sequence of token embeddings $\bm{X} \in \mathbb{R}^{n \times d}$, where $n$ is the sequence length and $d$ is the hidden dimension, transformers compute query, key, and value projections $\bm{Q} = \bm{X}\bm{W}_Q$, $\bm{K} = \bm{X}\bm{W}_K$, and $\bm{V} = \bm{X}\bm{W}_V$. 
We denote with $\bm{q}_i, \bm{k}_i, \bm{v}_i \in \mathbb{R}^d$ the $d$-dimensional query, key, and value vectors of the $i$-th token. 
For each query position $i$, the representation at layer $\ell$ for the $i$-th token is computed as:
\begin{align}
\bm{u}^{(\ell)}_i = \sum_{j \leq i} p_{ij}^{(\ell)} \text{norm}_1^{(\ell)}\left(\bm{v}^{(\ell-1)}_j\right) + \bm{v}^{(\ell-1)}_i, 
\quad
\bm{v}_i^{(\ell)} = \text{FFN}^{(\ell)}\left(\text{norm}_2^{(\ell)}\left(\bm{u}_i^{(\ell)}\right)\right) + \bm{u}_i^{(\ell)}, 
\end{align}
where $p_{ij}^{(\ell)}$ are attention weights, $\text{FFN}^{(\ell)}$ is the feed-forward network, $\text{norm}(\cdot)$ represent LayerNorm modules~\citep{xiong2020layer}.
The output is computed as $\bm{y}_i = \text{norm}_3\left(\bm{v}^{(L)}_i\right)$.
The attention weights $p_{ij}^{(\ell)} = \pi(\bm{z}_{i}^{(\ell)})_j$ are computed by applying a transformation $\pi: \mathbb{R}^n \to \triangle_n$ to the attention logits $z_{ij}^{(\ell)} = \langle \bm{q}_i^{(\ell)}, \bm{k}_j^{(\ell)} \rangle / \sqrt{d}$, 
where $\triangle_n := \{\bm{p} \in \mathbb{R}^n \,:\, \bm{p} \ge \mathbf{0}, \mathbf{1}^\top \bm{p} = 1\}$ represents the probability simplex.
Standard transformers employ the softmax function as $\pi$.
In this work, we study transformers by casting $\pi$ as the $\alpha$-entmax transformation.\looseness=-1

\subsection{$\alpha$-entmax}

\begin{figure}[t]
    \centering
    \includegraphics[width=1\linewidth]{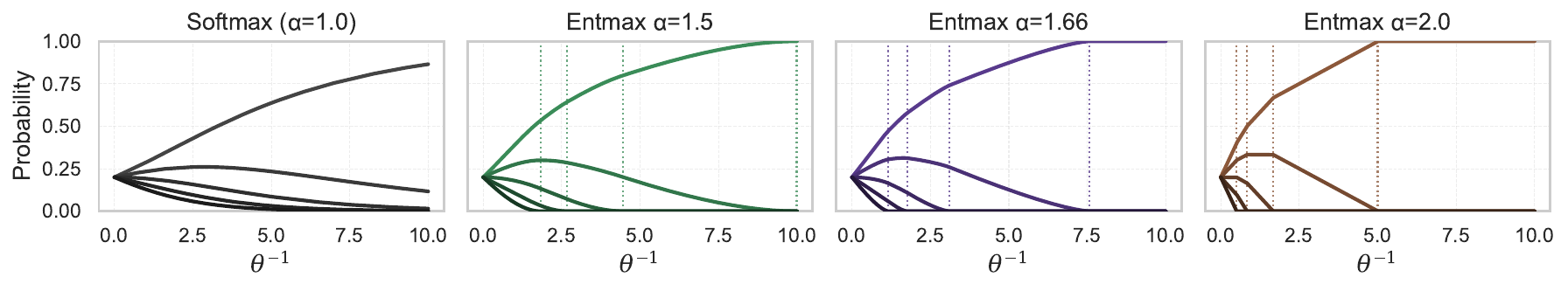}
    \caption{Visualization of $\alpha\text{-entmax}(\bm{z}/\theta)$ for different values of $\alpha$. Each panel shows how probability mass is distributed among five elements of $\bm{z} = [2.0, 1.8, 1.6, 1.4, 1.2]$ as the temperature parameter  decreases ($\theta^{-1}$ increases). 
    The vertical lines show the temperature that leads to zero probability.}
    \label{fig:entmax_visualizations_main}
\end{figure}

$\alpha$-entmax \citep{peters-etal-2019-sparse} is a \textbf{differentiable} transformation that generalizes softmax by allowing for \textbf{sparse} probability distributions. For an input vector $\bm{z} \in \mathbb{R}^n$ and $\alpha > 1$, $\alpha$-entmax is defined as:
\begin{equation}\label{eq:entmax}
    \aentmax(\bm{z})_i = \left[(\alpha-1)z_i - \tau(\bm{z})\right]_+^{\frac{1}{\alpha-1}},
\end{equation}
where $[\cdot]_+ := \max(0,\cdot)$ and $\tau: \mathbb{R}^n \to \mathbb{R}$ yields a threshold that ensures the resulting distribution sums to $1$. 
A key property of $\alpha$-entmax is that tokens with scores below the threshold receive \textbf{exactly zero probability}, creating sparse attention patterns. 
When $\alpha \rightarrow 1^+$, this reduces to the standard softmax function.
The sparsity level increases with $\alpha$, with $\alpha = 2$ corresponding to the sparsemax function \citep{martins2016softmax}.
Figure~\ref{fig:entmax_visualizations_main} illustrates $\alpha$-entmax for different values of $\alpha$.
We provide more information on $\alpha$-entmax in \S\ref{section:app_alpha_entmax}.
While $\alpha$-entmax is a suitable choice for sparse attention, its theoretical and empirical impact on long inputs is still unclear.
In the next section, we demonstrate how it fundamentally changes the way attention behaves for long contexts.

\section{Theoretical Properties of $\alpha$-entmax for Long Contexts}
\label{sec:theoretical_entmax_long}

We analyze the theoretical properties of $\alpha$-entmax that make it especially suitable for long-context modeling, focusing on how it addresses the fundamental limitations of softmax. 

\subsection{Non-Vanishing Attention Probabilities}

A critical limitation of softmax in transformers is that attention weights inevitably decrease as the sequence length increases. Our first result demonstrates how $\alpha$-entmax avoids this issue.

\begin{lemma}[Non-Vanishing Attention Property]
\label{lemma:entmax-3cases}
Consider scalars $a_1,...,a_{n-1},c\in\mathbb{R}$. Let
$\bm{x} = [a_1,...,a_{n-1},c]^\top \in\mathbb{R}^n$ and
$\bm{x}^* = [a_1,...,a_{n-1},b,c]^\top \in \mathbb{R}^{n+1}$,
with all entries bounded.
The following properties hold:
\begin{itemize}[leftmargin=*]
    \item For all $\alpha \geq 1$, we have $\alpha\text{-entmax}(\bm{x})_n \geq \alpha\text{-entmax}(\bm{x}^*)_{n+1}$. In the softmax case ($\alpha = 1$), 
    \citet[Lemma~B.1]{barbero2024transformers} have shown that the inequality is always strict: $\text{softmax}(\bm{x})_n > \text{softmax}(\bm{x}^*)_{n+1}$.
    
    \item For all $\alpha > 1$, there is some $b_{\max} \in \mathbb{R}$ such that, for any $b \le b_{\max}$, we have $\alpha\text{-entmax}(\bm{x})_n = \alpha\text{-entmax}(\bm{x}^*)_{n+1}$. 
\end{itemize}

Furthermore, for $\alpha > 1$, the difference $\alpha\text{-entmax}(\bm{x})_{n} - \alpha\text{-entmax}(\bm{x}^*)_{n+1}$ can take any value in $[0, \alpha\text{-entmax}(\bm{x})_n]$ by appropriate choice of $b$.
\end{lemma}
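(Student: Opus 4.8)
The plan is to analyze the $\alpha$-entmax map through its defining threshold function $\tau$, treating the new coordinate $b$ as a continuous parameter and tracking how the value at coordinate $c$ responds. Write $p = \alpha\text{-entmax}(\bm{x})$ on $\mathbb{R}^n$ and $p^*(b) = \alpha\text{-entmax}(\bm{x}^*)$ on $\mathbb{R}^{n+1}$, with components given by $(\cdot)_+^{1/(\alpha-1)}$ of $(\alpha-1)(\text{logit}) - \tau$. The first observation is that $p^*_{n+1}(b)$, the entry corresponding to $c$, is a continuous function of $b$ on $\mathbb{R}$: this follows because $\tau(\bm{x}^*)$ is continuous in $b$ (standard for $\alpha$-entmax, and can be cited from \S\ref{section:app_alpha_entmax} or \citep{peters-etal-2019-sparse}), and $z \mapsto [z]_+^{1/(\alpha-1)}$ is continuous.

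\textbf{Step 1: the two endpoint values.} I would first nail down the extreme behaviors of $p^*_{n+1}(b)$. For $b$ very large (say $b \to +\infty$), the new coordinate dominates, forces $\tau(\bm{x}^*)$ up, and drives $p^*_{n+1}(b) \to 0$ — more precisely, once $b \ge b_{\min}$ for some finite $b_{\min}$, the entry at $c$ is thresholded to exactly $0$, since $c$ is fixed and bounded while $b$ can be made to exhaust the mass. For $b$ very negative, the second bullet of the Lemma (already proved) gives the other end: there is $b_{\max}$ such that for all $b \le b_{\max}$ the coordinate $b$ receives zero probability, the support and threshold coincide with those of the $n$-vector problem, and hence $p^*_{n+1}(b) = p_n$. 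So at one end the difference $p_n - p^*_{n+1}(b)$ equals $0$, and at the other end it equals $p_n - 0 = p_n$, which is the full interval $[0, p_n]$ in terms of endpoints. (If $p_n = 0$ already, the interval is the single point $\{0\}$ and there is nothing to do — though one should remark this degenerate case.)

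\textbf{Step 2: intermediate value theorem.} Having a continuous function $b \mapsto p_n - p^*_{n+1}(b)$ that attains the value $0$ (for $b \le b_{\max}$) and the value $p_n$ (for $b$ large enough), the intermediate value theorem immediately yields that every value in $[0, p_n]$ is attained for some $b \in \mathbb{R}$. Combined with the first bullet's monotonicity ($p_n \ge p^*_{n+1}(b)$ for all $b$), which guarantees the difference never leaves $[0,p_n]$, this is exactly the claim. I would phrase the argument so that monotonicity plus the two extreme values gives both containment in $[0,p_n]$ and surjectivity onto it.

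\textbf{Main obstacle.} The only real work is rigorously justifying continuity of $\tau(\bm{x}^*)$ in $b$ — i.e. that the $\alpha$-entmax threshold depends continuously on its input — and cleanly establishing that $p^*_{n+1}(b) \to 0$ as $b$ grows (equivalently, that for $b$ large the $c$-coordinate gets thresholded out). Continuity of $\tau$ is standard and follows from the fact that $\alpha$-entmax is the gradient of a smooth convex conjugate (or, elementarily, from the strict monotonicity of $t \mapsto \sum_i [(\alpha-1)(\text{logit})_i - t]_+^{1/(\alpha-1)}$ and the implicit definition of $\tau$ as the root of this equation equal to $1$); I would cite the appendix rather than reprove it. The $b \to \infty$ limit is intuitively clear but deserves a one-line argument: as $b$ increases, $\tau(\bm{x}^*)$ is nondecreasing and unbounded (since otherwise the normalization sum would blow up), so eventually $(\alpha-1)c - \tau(\bm{x}^*) < 0$ and the $c$-entry vanishes. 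Everything else is routine.
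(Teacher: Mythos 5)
Your continuity-plus-IVT argument for the final claim is sound, and it is actually a different (and more careful) route than the paper takes there: the paper's proof of the ``furthermore'' part merely asserts that varying $b$ controls the threshold $\tau(\bm{x}^*)$ and hence the difference, whereas you pin down both endpoint regimes --- $p^*_{n+1}(b)=\aentmax(\bm{x})_n$ for $b\le b_{\max}$, and $p^*_{n+1}(b)=0$ once $\tau(\bm{x}^*)\ge(\alpha-1)c$, which your unboundedness argument for $\tau(\bm{x}^*)$ as $b\to\infty$ correctly forces --- and then interpolate using continuity of $\tau$ in its input, which is indeed standard and fine to cite rather than reprove.

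However, as a proof of the Lemma the proposal has a genuine gap: the first two bullets are assumed rather than proved. You invoke ``the first bullet's monotonicity'' and call the second bullet ``already proved,'' but nothing in the attempt establishes either one. The paper derives both from the closed form $\aentmax(\bm{z})_i=\bigl[(\alpha-1)z_i-\tau(\bm{z})\bigr]_+^{1/(\alpha-1)}$ together with the threshold-monotonicity fact $\tau(\bm{x}^*)\ge\tau(\bm{x})$ when a coordinate is appended (Lemma~3 of \citet{peters-etal-2019-sparse}): this immediately yields $\aentmax(\bm{x})_n\ge\aentmax(\bm{x}^*)_{n+1}$ for all $\alpha\ge 1$ (strict for softmax, which never outputs zeros), and it is exactly what breaks the apparent circularity in defining $b_{\max}$ --- since $\tau(\bm{x}^*)$ itself depends on $b$, one needs the $b$-independent sufficient condition $b\le\tau(\bm{x})/(\alpha-1)=:b_{\max}$, after which one verifies that the inactive new coordinate leaves the normalization untouched, so $\tau(\bm{x}^*)=\tau(\bm{x})$ and the two entries coincide exactly. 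Your description of the $b\to-\infty$ regime is precisely this statement, but stating it is not proving it; supply the threshold-monotonicity ingredient (or an equivalent argument) and the explicit $b_{\max}$, and then the rest of your plan goes through, with your IVT step cleanly covering the last claim.
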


The proof can be found in \S\ref{subsec:app_non_vanishing_proba}. 
This result demonstrates a fundamental difference between softmax and $\alpha$-entmax. Unlike softmax, where adding a new token always reduces the attention probability of existing tokens strictly, $\alpha$-entmax allows a distinct behavior: the attention probability can remain unchanged. 
This occurs because the $\alpha$-entmax's thresholding effect allows tokens with logits below a certain threshold to receive exactly zero attention, letting the model focus only on the relevant tokens.

Having established that $\alpha$-entmax prevents the vanishing of individual attention weights, we now formalize the broader concept of attention dispersion to better understand how attention distributions as a whole behave as the sequence length increases.

\subsection{Attention Dispersion and Concentration}

Recent work by \citet{nakanishi2025scalable} and \citet{velivckovic2024softmax} has highlighted attention dispersion as a fundamental limitation of softmax for long context generalization. 
Building upon these insights, we provide a formal definition to characterize attention dispersion and show how $\alpha$-entmax naturally exhibits concentration properties that address these limitations.

\begin{definition}[Attention Dispersion]\label{def:dispersion}
Let $f: \mathbb{R}^n \rightarrow \triangle_n$ denote a transformation (such as softmax) mapping logits to the probability simplex $\triangle_n := \{\bm{p} \in \mathbb{R}^n \,:\, \bm{p} \ge \mathbf{0}, \mathbf{1}^\top \bm{p} = 1\}$.  
\begin{enumerate}[leftmargin=*]
    \item $f$ exhibits \textbf{complete dispersion} if for any bounded sequence of logits $(z_n)_{n \in \mathbb{N}}$, the normalized entropy approaches 1 as the sequence length increases:
    \begin{equation}
    \lim_{n \to \infty} \frac{H(f(\bm{z}_{1:n}))}{\log n} = 1.
    \end{equation}
    
    \item $f$ exhibits \textbf{concentration resilience} if there are bounded sequences of logits where the normalized entropy remains bounded away from 1:
    \begin{equation}
    \lim_{n \to \infty} \frac{H(f(\bm{z}_{1:n}))}{\log n} < 1.
    \end{equation}
    
\end{enumerate}
\end{definition}

These definitions allow us to examine how softmax and $\alpha$-entmax behave as sequence length grows:

\begin{proposition}[Dispersion Properties of Attention Mechanisms]\label{thm:dispersion-properties-entmax}
Comparing softmax and $\alpha$-entmax ($\alpha > 1$) attention mechanisms:

\begin{enumerate}[leftmargin=*]
    \item \textbf{$\alpha$-entmax can retain probability, while softmax always leaks:} For any $\alpha> 1$ and any logits $\bm{z} \in \mathbb{R}^n$, there are logits $\bm{z}^{*} \in \mathbb{R}^N$ with $N > n$ such that:
    \begin{equation}
    \aentmax(\bm{z})_i = \aentmax(\bm{z}^{*})_i \quad \forall i \leq n.
    \end{equation}
    This is impossible for $\alpha = 1$ (softmax), for which we always have $\softmax(\bm{z})_i > \softmax(\bm{z}^{*})_i$. 
    
    \item \textbf{Softmax exhibits complete dispersion:} For any fixed temperature $\theta > 0$ and any bounded sequence of logits $(z_n)_{n \in \mathbb{N}}$:
    \begin{equation}
    \lim_{n \to \infty} \frac{H(\softmax(\bm{z}_{1:n} / \theta))}{\log n} = 1.
    \end{equation}

    \item \textbf{$\alpha$-entmax can exhibit strong concentration resilience:} When the support size grows sublinearly as $|\mathcal{S}| = \mathcal{O}(n^\beta)$ with $\beta < 1$, $\alpha$-entmax maintains bounded normalized entropy:
    \begin{equation}
    \lim_{n \to \infty} \frac{H(\aentmax(\bm{z}_{1:n}))}{\log n} \leq \beta < 1.
    \end{equation}

\end{enumerate}
\end{proposition}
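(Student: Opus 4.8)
The plan is to prove the three claims separately, since each rests on a different elementary idea, with the first being essentially a corollary of Lemma~\ref{lemma:entmax-3cases}.

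\textbf{Claim 1 (retention vs. leakage).} For the $\alpha$-entmax direction I would construct $\bm{z}^*$ from $\bm{z}$ by appending $N-n$ extra coordinates, each set to some value $b$ below the threshold of the original vector, i.e. with $(\alpha-1)b - \tau(\bm{z}) \le 0$. The point — which is the mechanism already behind Lemma~\ref{lemma:entmax-3cases} — is that such a coordinate receives exactly zero mass and, crucially, leaves $\tau$ unchanged: the same value $\tau(\bm{z})$ still satisfies the normalization $\sum_i [(\alpha-1)z^*_i - \tau]_+^{1/(\alpha-1)} = 1$ together with the optimality conditions for $\bm{z}^*$, so $\aentmax(\bm{z}^*)_i = \aentmax(\bm{z})_i$ for every $i \le n$ (not just the last coordinate). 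Appending the coordinates one at a time, each application is the single-step result of Lemma~\ref{lemma:entmax-3cases}, so composing $N-n$ of them gives the claim. For softmax, I would argue by contradiction: if $\softmax(\bm{z})_i = \softmax(\bm{z}^*)_i$ for all $i \le n$ with $N>n$, then summing over $i\le n$ forces the remaining $N-n>0$ coordinates of $\softmax(\bm{z}^*)$ to sum to zero, contradicting strict positivity of softmax; and when $\bm{z}^*$ is taken to literally extend $\bm{z}$, the identity $\softmax(\bm{z}^*)_i = e^{z_i}/(\sum_{j\le n}e^{z_j} + (\text{extra mass}))$ gives the strict coordinatewise comparison directly.

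\textbf{Claim 2 (softmax disperses).} Let the sequence be bounded, say $z_n \in [m,M]$ for all $n$, and put $\Delta := (M-m)/\theta$. Controlling numerator and denominator by $e^{m/\theta}$ and $e^{M/\theta}$ sandwiches every probability as $e^{-\Delta}/n \le \softmax(\bm{z}_{1:n}/\theta)_i \le e^{\Delta}/n$. Writing $H = \sum_i p_i \log(1/p_i)$ and using $\log(1/p_i) \ge \log n - \Delta$ for $n$ large gives $H \ge \log n - \Delta$; together with the universal bound $H \le \log n$, dividing by $\log n$ and letting $n\to\infty$ yields the limit $1$.

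\textbf{Claim 3 ($\alpha$-entmax can resist dispersion).} I would first recall the maximum-entropy fact that any $\bm{p}\in\triangle_n$ with support $\mathcal{S}$ satisfies $H(\bm{p}) \le \log|\mathcal{S}|$, the uniform distribution on $\mathcal{S}$ being the maximizer. Under the hypothesis $|\mathcal{S}| = \bigO{n^\beta}$, i.e. $|\mathcal{S}| \le C n^\beta$ for large $n$, this gives $H(\aentmax(\bm{z}_{1:n})) \le \beta \log n + \log C$, and dividing by $\log n$ and taking $n\to\infty$ yields the bound $\beta < 1$. To see the hypothesis is non-vacuous with bounded logits, I would exhibit a construction: place $\lceil n^\beta\rceil$ coordinates at value $M$ and the remaining $n - \lceil n^\beta\rceil$ at value $m$; by symmetry the $\alpha$-entmax support is contained in the set of "large" coordinates as soon as $(\alpha-1)(M-m)$ exceeds $|\mathcal{S}|^{-(\alpha-1)}$, which a fixed gap $M-m \ge 1/(\alpha-1)$ (for $\alpha=2$: $M-m\ge 1$) already guarantees, so the support has size $\Theta(n^\beta)$ while all logits stay in $[m,M]$.

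\textbf{Main obstacle.} None of the three steps involves heavy computation; the only real care is in Claim 1, where I must verify that appending sub-threshold coordinates leaves \emph{all} original probabilities untouched — that is, that $\tau$ genuinely does not change — rather than just invoking Lemma~\ref{lemma:entmax-3cases} for the last coordinate. A secondary point worth stating cleanly in Claim 3 is that the conditional hypothesis $|\mathcal{S}|=\bigO{n^\beta}$ is realizable for $\alpha$-entmax under bounded logits while being outright impossible for softmax, which is precisely the contrast drawn by Claims 1 and 2.
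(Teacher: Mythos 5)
Your proposal is correct and follows essentially the same route as the paper's proof: sub-threshold padding (leaving $\tau$ unchanged) for part 1, the $e^{\pm\Delta/\theta}/n$ sandwich for part 2, and the support-size entropy bound $H \le \log|\mathcal{S}|$ for part 3, with your two-level construction mirroring the paper's Lemma~\ref{lemma:threshold-two-level}. The only differences are minor refinements: you prove the softmax strict-leakage claim directly from positivity rather than citing prior work, and you make the entropy lower bound $H \ge \log n - \Delta/\theta$ explicit where the paper merely asserts the limit.
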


The full proof can be consulted in \S\ref{subsec:app_nondispersion_entmax}.
The key takeaway of this result is that the entropy of attention distributions reveals how concentrated or dispersed they are across tokens. 
While softmax distributions approach maximum entropy $\Theta(\log n)$ as the sequence length increases (indicating complete dispersion), $\alpha$-entmax distributions maintain bounded entropy $\mathcal{O}(\log s)$ where $s$ is the support size. This allows models with $\alpha$-entmax to maintain focused, low-entropy attention patterns even when processing extremely long sequences, as long as the support size is smaller than the full sequence length.
This non-dispersion property means transformers with $\alpha$-entmax \textbf{can scale to very long contexts without the attention becoming dispersed}, maintaining their ability to focus on relevant information regardless of how much additional context is present. 
However, attention dispersion is not the only obstacle to effective long-sequence modeling.

\subsection{Representational Preservation and Over-squashing Alleviation}

Two other critical challenges in long-context transformers are \emph{representational collapse} and \emph{over-squashing}, both exacerbated by the diffuseness of softmax attention~\citep{barbero2024transformers}. 
More concretely,
\emph{collapse} means $\|\bm{v}_i^{(L)}-\bm{v}_j^{(L)}\|_1\!\to\!0$ as $n\!\to\!\infty$, and \emph{over-squashing} means gradients from distant inputs vanish across $\mathcal{O}(n^L)$ paths. Here, we show that $\alpha$-entmax can mitigate both:

\begin{proposition}[Representational Preservation and Reduced Gradient Paths]
\label{prop:entmax_repcollapse_and_oversquashing}
Let $\alpha>1$. Consider a depth-$L$ transformer with residual connections and attention weights given by $\alpha$-entmax. Suppose each attention distribution has support size at most $s$ with $s\ll n$. Then:
\begin{enumerate}[leftmargin=1.1em,itemsep=0pt,topsep=2pt]
\item (\textbf{Preserved representations}) There exist input families $\bm{v}^{(0)}\!\in\!\R^{n\times d}$ and $\bm{v}^{*(0)}\!\in\!\R^{(n+1)\times d}$ and a constant $c>0$ such that
$\|\bm{v}_n^{(L)}-\bm{v}^{*(L)}_{n+1}\|_1 \ge c$ for all $n$; i.e., $\alpha$-entmax can maintain distinct token representations as $n\to\infty$. In contrast, \citet{barbero2024transformers} shows that for softmax attention, $\|\bm{v}_n^{(L)} - \bm{v}_{n+1}^{*(L)}\|_1 \to 0$ as $n \to \infty$. 
\item (\textbf{Alleviated over-squashing}) The number of effective gradient paths scales as $\mathcal{O}(s^L)$ (rather than $\mathcal{O}(n^L)$ under softmax),  
alleviating over-squashing by concentrating gradient flow on fewer active paths.
\end{enumerate}
Full statements and proofs are in App.~\S\ref{subsec:app_representational_preservation}–\S\ref{subsec:app_oversquashing_alleviation}.
\end{proposition}

\paragraph{Empirical evidence.} We corroborate Proposition~\ref{prop:entmax_repcollapse_and_oversquashing} with controlled experiments. 
For representational \emph{collapse}, following the \citet{barbero2024transformers}-style one-token extension probe, we implement a counterexample with $\alpha \in \{1.0, 1.5, 1.75, 2.0\}$, and report the $L_1$ gap between $\bm{v}_n^{(L)}$ and $\bm{v}_{n+1}^{*(L)}$.
With softmax attention, the gap rapidly decays toward $0$ with length, while $\alpha$-entmax preserves a non-vanishing margin up to 128K tokens, with larger $\alpha$ yielding stronger preservation (App.~\S\ref{subsec:app_representational_preservation}).
For over-squashing, we analyzed gradient flow through an 8-layer network on a copying task, where the model must copy information across long distances.
We find that $\alpha$-entmax attention sustains substantially larger norms across depths and lengths, consistent with $\mathcal{O}(s^L)$ path growth, whereas softmax degrades sharply with sequence length (App.~\S\ref{subsec:app_oversquashing_alleviation}).

\section{Adaptive-Scalable $\alpha$-entmax (ASEntmax)}

In the previous section we saw that $\alpha$-entmax, for any choice of $\alpha>1$, can avoid some of the pitfalls of softmax thanks to its ability to assign zero weight to many tokens, ignoring irrelevant information. But what if it ignores \textit{too many} tokens? Can it handle situations where many tokens are \textit{relevant} and should be attended? We show in this section that indeed the model might not be able to cope with this for a fixed $\alpha$ and a fixed temperature, and we propose a practical solution.

\subsection{Controlling Sparsity in Long Contexts via ASEntmax}

As sequence length grows, the spread of attention logits increases---for IID Gaussian logits, the expected range satisfies
$\mathbb{E}[\Delta] \sim 2\sigma \sqrt{2\log n}$~\citep{kamath2015bounds}.
With a fixed temperature, this makes attention overly peaky at long $n$. 
We address this with \emph{Adaptive-Scalable $\alpha$-entmax} (ASEntmax), which rescales logits \emph{per head} as a function of context length and content:
\begin{equation}\label{eq:asentmax}
\text{ASEntmax}(\bm{z}) = \aentmax((\delta + \beta (\log n)^{\gamma}) \bm{z}),
\end{equation}
where $\beta, \gamma, \delta \in \mathbb{R}$ are head-specific scalars (inverse temperature). Concretely, for each head, we obtain vectors $\bm{\beta}$ and $\bm{\gamma}$ whose entries contain these coefficients for each query: \begin{align}
\bm{\beta} &= \text{softplus}(\bm{X}\bm{w}_{\beta}) \in \mathbb{R}^n_+,\quad 
\bm{\gamma} = s \tanh(\bm{X}\bm{w}_{\gamma}) \in (-s,s)^n,
\end{align}
where $\bm{w}_{\beta},\bm{w}_{\gamma} \in \mathbb{R}^{d}$ are learnable, head-specific projection vectors. 
This characterization allows the model to learn a slowly rising ($\gamma>0$) or dampening ($\gamma<0$) temperature schedule without interfering in the positional encodings (which would happen with negative values of $\beta$).
Specifically, for IID Gaussian logits $\mathcal{N}(0, \sigma)$, 
when $\delta=0$ and $\gamma = -0.5$ the scaling counteracts the growth of logit ranges ($\Delta_n$) as context increases:
\begin{equation}
\beta(\log n)^{-0.5} \cdot \Delta_n = \beta(\log n)^{-0.5} \cdot 2\sigma\sqrt{2\log n} = 2\sigma\beta\sqrt{2},
\end{equation}
which remains constant as $n$ increases, preventing excessive sparsification.
Furthermore, with this parameterization, ASEntmax can recover standard $\alpha$-entmax when $\beta=0$, hence allowing a smooth transition between scaled and unscaled regimes.
By making $\bm{w}_\beta$ and $\bm{w}_\gamma$ head-specific and learnable, ASEntmax can adapt to the optimal scaling behavior for each head, balancing the natural concentration benefits of $\alpha$-entmax with precise control over how sparsity patterns evolve with sequence length.
Finally, we note that simply scaling the query-key products is appealing from a practical perspective since it allows the direct use of fast optimized kernels for $\alpha$-entmax, such as AdaSplash~\citep{gonccalves2025adasplash}, without any modifications.

\begin{figure}[t]
    \centering
    \includegraphics[width=1\linewidth]{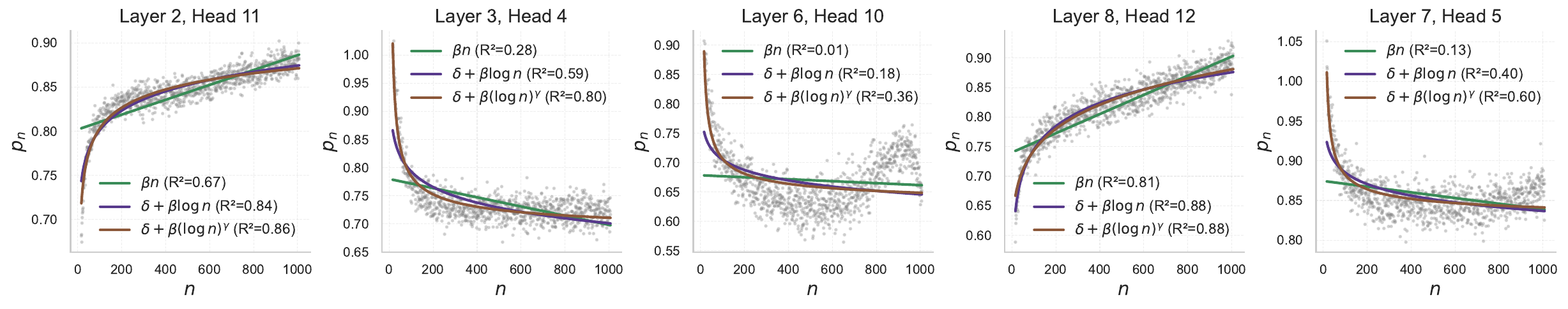}
    \vspace{-0.5cm}
    \caption{Learned positions per  head. Besides a simple linear fit baseline ($\beta n$), we also show the fit given by $\delta + \beta \log n$ and $\delta + \beta (\log n)^\gamma$, which are used by SSMax and ASEntmax, respectively.
    These plots further reinforce the idea that having a $\gamma$ parameter is beneficial for length extrapolation.
    }
    \label{fig:logit_scaling}
\end{figure}

\paragraph{Empirical Analysis.} 
To empirically validate the importance of the parameter $\gamma$ in our proposed scaling formulation, we conducted experiments on a language modeling task using a 120M-parameter transformer trained on 5B tokens from the FineWeb dataset~\citep{penedo2024the}. 
Following the methodology from SSMax~\citep{nakanishi2025scalable}, we implemented learnable scaling parameters for the attention logits, but with a key difference: while \citet{nakanishi2025scalable} uses a global scaling parameter, we learn separate scaling parameters for each attention head, motivated by \citet{correia-etal-2019-adaptively}'s finding that attention heads develop distinct sparsity patterns.
Figure~\ref{fig:logit_scaling} presents the learned scaling behaviors for representative attention heads, along with fitted curves from different scaling models. 
First, we note that different heads learn \emph{significantly distinct patterns}, highlighting the need of head-specific scaling.
Second, the results demonstrate that a simple log-scaling model $\delta + \beta \log n$ provides poor fits for many heads.
In contrast, the inclusion of a $\gamma$ power provides consistently better fits across different attention heads.
The complete distribution of fitted $\beta$ and $\gamma$ values across all heads is provided in \S\ref{sec:app_scalable_entmax}, and additional training details can be found in \S\ref{sec:learn-scaler-lm-details}.

\subsection{Interaction with Positional Encoding}
\label{sec:interaction_positional_encoding}

Sparse transformations do not merely reweight attention, they effectively \emph{change} which edges exist in the attention graph.
This makes the choice of positional encoding especially important when replacing softmax with $\alpha$-entmax.
Recall that a head computes logits of the form
$
z_{ij} = \bm{q}_i^\top \bm{k}_j + b_{ij},
$
where $b_{ij}$ is the relative positional contribution (possibly zero).
Under $\alpha$-entmax, token $j$ is attended by query $i$ iff it survives the query-dependent thresholding $(\alpha-1)\,z_{ij} > \tau(\bm{z}_i)$.
Thus, positional encodings acting on $z_{ij}$ through $b_{ij}$ directly affect the set of active tokens.

With \textbf{NoPE}~\citep{kazemnejad2023impact} we have $b_{ij}=0$, so (apart from causal masking) attention scores are purely content-based. This can be beneficial for semantic retrieval, but may be less robust under large length shifts without an explicit positional prior.
With \textbf{ALiBi}~\citep{press2022train} we add a head-specific linear distance bias $b_{ij}=m_h(j-i)$ with $m_h>0$, inducing a smooth recency preference under softmax. 
Under $\alpha$-entmax, thresholding yields exact zeros: any position whose biased score falls below the entmax threshold is pruned. 
Since the ALiBi penalty decreases with distance, for finite content logits there is an \emph{effective} head- and query-dependent cutoff  beyond which sufficiently distant tokens receive exactly zero mass, yielding a localized support.
With \textbf{RoPE}~\citep{su2024roformer}, relative position enters through rotations in query-key interactions, producing oscillatory distance-dependent score contributions; combined with $\alpha$-entmax, this can lead to \emph{content-dependent sparse supports} whose active distances vary with the representations and RoPE frequencies, and the resulting sparsity can be more fragmented.
We visualize this interaction in Fig.~\ref{fig:pos_encoding_comparison},
and provide additional theoretical statements and empirical evaluations in \S\ref{sec:app_interaction_pos_encoding}.

\paragraph{Robust locality + sparse long-range retrieval.}
With that in mind, we propose \textbf{NAPE}, where half of the heads induce recency bias via ALiBi, while the other half use NoPE and thus are more content-driven.
This division is particularly effective under length extrapolation:
ALiBi heads maintain consistent local receptive fields, while NoPE heads allows the model to retrieve distant key-value evidence when needed.
Finally, ASEntmax complements NAPE by adapting logit scaling as context grows, preventing the sparsity induced by the $\alpha$-entmax transformation from becoming overly aggressive or overly diffuse at extreme lengths.

\begin{figure}[t]
    \centering
    \includegraphics[width=\linewidth]{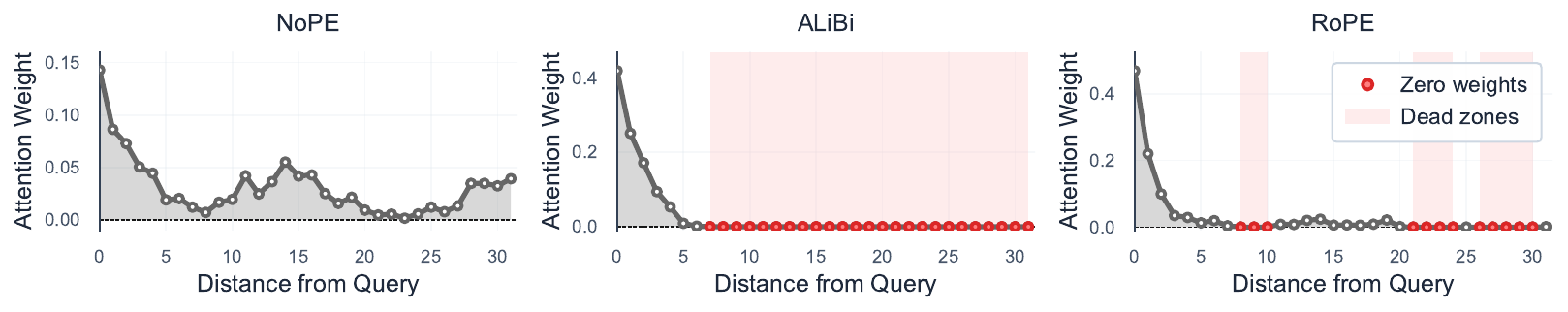}
    \caption{Example of attention weight profiles for different positional encodings with $\alpha$-entmax with $\alpha=1.5$. 
    NoPE induces content-driven sparsity. ALiBi induces attention windows with a clear cutoff. RoPE promotes frequency-dependent patterns with potential periodic dead zones. }
    \label{fig:pos_encoding_comparison}
\end{figure}

\section{Experiments}

\subsection{Synthetic Tasks}

A number of works have turned to synthetic tasks as a probing ground for transformers’ length-generalization capabilities~\citep{anil2022exploring,dziri2023faith,zhou2024what}. 
Such tasks, like copying a sequence and sorting numbers, allow precise control over training and test lengths, revealing whether a model has truly learned an algorithm that scales or merely memorized patterns within a limited length. Vanilla transformers struggle in this setting: they often achieve perfect accuracy on sequences up to the training length, yet fail catastrophically on even slightly longer sequences~\citep{press2022train}.
To quantitatively evaluate our proposed improvements, we embrace this paradigm of synthetic tasks for long-sequence testing.
Concretely, we evaluate our models on a diverse set of synthetic tasks designed to test different aspects of long-context modeling, covering both position-agnostic reasoning (where token positions are not critical) and position-sensitive operations (where relative or absolute positions matter):

\begin{itemize}[leftmargin=*]
    \item \textbf{Retrieval-focused tasks:} These include \emph{Max Retrieval} \citep{barbero2024transformers}, which requires identifying maximum values in sequences, and \emph{Multi-query Multi-token Associative Recall} (MQMTAR)---a variant of that proposed by \citet{arora2024zoology}, but with multi-token keys and values---which involves matching queries to their corresponding key-value pairs. Both tasks test the model's ability to maintain focus on relevant tokens regardless of their positions in long contexts.

    \item \textbf{Memory-dependent tasks:} We evaluate models on \emph{Copy} (reproducing input sequences). It assesses how well the model preserves token representations and accesses specific positional information throughout the network. On this line, we also evaluate on 2Back, described in \S\ref{sec:app_experimental_details}.

    \item \textbf{Ordering tasks:} This category contains tasks such as \emph{Sort} (arranging tokens in ascending order) and \emph{Reverse} (outputting tokens in reverse order). These evaluate compositional generalization and positional reasoning, becoming increasingly challenging as sequence length grows.

\end{itemize}

\begin{table}[t]
  \caption{Exact match accuracy (\%) on representative tasks. 
  For each task, we report in-distribution sequence length $n$ in the first column ($n=64$ for all tasks), followed by OOD results at increasing sequence lengths. $L$ indicates the number of layers. Best results are in \textbf{bold}. 
  }
  \label{tab:main_results_trimmed}
  \centering
  \small
  \setlength{\tabcolsep}{4.2pt}
  
  \begin{tabular}{l rrrrrrr rrrrr}
    \toprule
    & \multicolumn{7}{c}{MQMTAR $(L=4)$} & \multicolumn{5}{c}{Reverse $(L=6)$} \\
    \cmidrule(lr){2-8} \cmidrule(lr){9-13}
    \bf Method & 
    ID & $2\times$ & $4\times$ & $16\times$ & $64\times$ & $256\times$ & $1024\times$ &
    ID & $1.5\times$ & $2\times$ & $4\times$ & $8\times$ \\
    
    \midrule
    
    Softmax   &  100.0  &  \bf 100.0 &  \bf 100.0  &  99.5 &  97.8 &  80.2 &  3.0                   &    100.0 &  36.0 &  0.0 &  0.0 &  0.0 \\
    Top-K (K=32) & 100.0 & 99.9 & 6.2 & 0.0 & 0.0 & 0.0 & 0.0   & 100.0 & \bf 100.0 & 98.7 & 57.0 & 0.0 \\
    SSMax     &  99.9   &  \bf 100.0 &  99.9 &  99.6 &  98.3 &  90.6 &  26.7                         &    100.0 &  54.6 &  0.0 &  0.0 &  0.0 \\
    Entmax    &  100.0  &  \bf 100.0 &  \bf 100.0  &  99.2 &  92.7 &  66.8 &  9.3                   &    100.0 &  99.0 &  86.0 &  28.5 &  0.2 \\
    ASEntmax  &  100.0  &  \bf 100.0 &   100.0  &  \bf 99.7 &  \bf 99.6 &  \bf 99.0 &  \bf 95.3  &    100.0 &  \bf 100.0 &  \bf 99.8 &  \bf 96.4 &  \bf 56.7 \\

    \midrule
    & \multicolumn{7}{c}{Copy $(L=2)$} & \multicolumn{5}{c}{Sort $(L=2)$} \\
    \cmidrule(lr){2-8} \cmidrule(lr){9-13}
    & ID & $2\times$ & $4\times$ & $8\times$ & $16\times$ & $32\times$ & $64\times$ &
    ID & $2\times$ & $4\times$ & $8\times$ & - \\
    
    \midrule

    Softmax   &  100.0 &  \bf 100.0  &  99.9 &  99.9 &  99.4 &  96.1 &  85.5      &  100.0 &  0.0 &  0.0 &  0.0 & - \\
    Top-K (K=32) & 100.0 & 99.7 & 96.8 & 26.7 & 0.0 & 0.0 & 0.0 & 100.0 & 92.5 & 0.0 & 0.0 & - \\
    SSMax     &  100.0 &  \bf 100.0  &  \bf 100.0 &  \bf 99.9 &  \bf 99.6 &  \bf 99.3 &  \bf 95.8     &  100.0 &  0.0 &  0.0 &  0.0 & - \\
    Entmax    &  100.0 &  99.0 &  86.0 &  28.5 &  0.2 &  0.0 &  0.0           &  100.0 &  99.3 &  57.8 &  0.0 & - \\
    ASEntmax  &  100.0 &  \bf 100.0  &  99.9 &  99.7 &  99.4 &  96.3 &  86.6      &  100.0 &  \bf 100.0 &  \bf 79.7 &  0.0 & - \\

    \bottomrule
  \end{tabular}
\end{table}

\paragraph{Experimental Setup.}
We use small decoder-only transformers and keep the number of layers as low as possible so that the results reflect the attention method's capabilities rather than model scale. 
The only exception is \emph{Reverse}, where we raise $L$ until a plain softmax baseline reaches at least $1.5\times$ the in-distribution performance. 
For positional information we default to NAPE (NoPE+ALiBi), where half the heads have no positional encoding (NoPE) and the other half use ALiBi with linear slopes.\footnote{Hard-ALiBi~\citep{jelassi2024repeat} allows zero slopes on some heads, which is equivalent to NoPE; our NAPE default mirrors this practical configuration.
} 
We treat NAPE as a practical default, not a contribution, and report RoPE and standalone ALiBi in App.~\S\ref{sec:app_additional_results} for completeness.%
\footnote{Across all attention methods tested---including softmax---NAPE consistently performs best. We study its interaction with $\alpha$-entmax and positional encodings in \S\ref{sec:app_interaction_pos_encoding}, with further empirical insights in \S\ref{subsec:app_comparison_pos_encodings} and \S\ref{sec:extra_lm_results}.\looseness=-1}
We employ $\alpha=1.5$ for both Entmax and ASEntmax models, and $\delta = 1$ for SSMax and ASEntmax. Further hyperparameters and ablations appear in App.~\S\ref{sec:app_experimental_details}.

\paragraph{Discussion.} The results, shown in Table~\ref{tab:main_results_trimmed}, reveal a critical factor for length generalization: attention sparsity. 
Specifically, ASEntmax dramatically outperforms others at extreme lengths---maintaining 96.4\% accuracy at 256$\times$ test length on MQMTAR (vs. 80.2\% for softmax) and 96.4\% at 4$\times$ on Reverse (vs. 0\% for softmax). 
Moreover, the consistent superiority of ASEntmax over basic $\alpha$-entmax confirms the benefits of adaptive scaling, particularly at extreme lengths where fixed-$\alpha$ may become too sparse or too diffuse. 
SSMax performs well on the Copy task, even outperforming other methods, but struggles on more complex tasks like MQMTAR and Reverse at extreme lengths. This indicates that while scaling logits helps maintain peak attention magnitude, the explicit sparsity of $\alpha$-entmax provides additional benefits by completely removing irrelevant connections.
These findings are further supported by results on the Max Retrieval task (Figure~\ref{fig:fig1} right), where sparse attention mechanisms demonstrate superior length extrapolation compared to dense approaches, with ASEntmax maintaining over 60\% accuracy even at 4096-length sequences---a dramatic improvement over standard Softmax and Adaptive Temperature~\citep{velivckovic2024softmax}.
Finally, Copy and Reverse show moderate generalization (up to 64$\times$ and 8$\times$ respectively), while Sort fails beyond 4$\times$ length for all methods. This pattern suggests that tasks requiring precise global ordering (Sort, Reverse) are inherently more challenging for length generalization than tasks dependent on local or independent token properties.
We provide per-task results, including results for other positional encoding methods such as RoPE, ALiBi, and NoPE, in \S\ref{sec:app_additional_results}.

\subsection{Language Modeling}

To validate our approach on real-world tasks, we train $420$M-parameter decoder-only models following the LLaMA~3 architecture (details in App.~\ref{sec:lm_experiment_details}) on the high-quality DCLM-Edu dataset~\citep{allal2025smollm2smolgoesbig}, for $7$B tokens with a context length of $n=2048$. 
As in the synthetic experiments, we report results in the main paper using NAPE (NoPE+ALiBi). Results with RoPE are reported in App. \ref{sec:extra_lm_results}.
We evaluate short-context performance on Lambada, HellaSwag, PIQA, ARC-C, Winogrande, and OpenBookQA, and assess long-context generalization via perplexity on ArXiv/PubMed subsets of The Pile~\citep{gao2020pile} as well as RULER's Needle-in-a-Haystack tasks~\citep{hsieh2024ruler}.

\begin{table}[t]
  \caption{Downstream-task results on short-context datasets. 
  Best results are in \textbf{bold}.
  } 
  \label{tab:short_ctx_lm_results}
  \centering
  \small
  \setlength{\tabcolsep}{3.5pt}
  \begin{tabular}{l ccccccc}
    \toprule
 \bf Method & Lambada (PPL) & Lambada & Hellaswag & PIQA & Arc-C & WinoGrande & OpenbookQA \\   
    \midrule
    Softmax & 52.4 & 30.9 & 33.1 & \bf 65.1 & 25.6 & 49.5 & 28.2\\
    SSMax & 48.9 & 31.6 & 32.9 & \bf 65.1 & 25.0 & \bf 51.5 & \bf 30.4\\
    Entmax  & 47.9 & 32.1 & 32.8 & 63.6 & 24.6 & 50.9 & 29.0\\
    ASEntmax & \bf 41.6 & \bf 34.3 & \bf 33.4 & 63.8 & \bf 26.0 & 50.0 & 28.6 \\
    
    \bottomrule
  \end{tabular}
\end{table}

\begin{table}[t]
\caption{Long-context perplexity on ArXiv and PubMed from The Pile. 
Best results are in \textbf{bold}.}
\label{tab:lm_perplexity_res}
\centering
\small
\setlength{\tabcolsep}{4pt}
\begin{tabular}{lccccc ccccc}
\toprule
& \multicolumn{2}{c}{ArXiv (ID)} & \multicolumn{3}{c}{ArXiv (OOD)} & \multicolumn{2}{c}{PubMed (ID)} & \multicolumn{3}{c}{PubMed (OOD)} \\
\cmidrule(lr){2-3} \cmidrule(lr){4-6} \cmidrule(lr){7-8} \cmidrule(lr){9-11}
\bf Model & 1K & 2K & 4K & 8K & 16K & 1K & 2K & 4K & 8K & 16K \\
\midrule
Softmax  & 21.63 & 17.10 & 13.87 & 12.46 & 12.71 & 19.54 & 18.05 & 15.59 & 15.31 & 18.23 \\
SSMax  & 21.57 & 17.02 & 13.74 & 12.29 & 12.31 & 19.35 & 17.84 & 15.14 & 13.75 & 14.72 \\
Entmax  & 21.36 & \bf 16.85 & 13.36 & 11.04 & 10.07 & 19.10 & \bf 17.64 & 14.79 & 12.86 & 13.02 \\
ASEntmax  & \bf 21.30 & 16.86 & \bf 13.31 & \bf 10.89 & \bf 10.01 &	 \bf 19.04 & 17.69 & \bf 14.76 & \bf 12.61 & \bf 12.90 \\
\bottomrule
\end{tabular}
\end{table}

\begin{table}[t]
\caption{Retrieval performance on RULER benchmark. All models use NAPE positional encoding and were trained on 2048-token contexts. Best results are in \textbf{bold}.}
\label{tab:lm_ruler_res}
\centering
\small
\setlength{\tabcolsep}{4pt}
\begin{tabular}{lccccc cccc}
\toprule
& \multicolumn{2}{c}{S-NIAH-1 (ID)} & \multicolumn{3}{c}{S-NIAH-1 (OOD)} & \multicolumn{2}{c}{S-NIAH-2 (ID)} & \multicolumn{2}{c}{S-NIAH-2 (OOD)} \\
\cmidrule(lr){2-3} \cmidrule(lr){4-6} \cmidrule(lr){7-8} \cmidrule(lr){9-10}
\bf Model & 1K & 2K & 4K & 8K & 16K & 1K & 2K & 4K & 8K \\
\midrule
Softmax & \bf 100.0 & 99.4 & 94.2 & 11.4 & 0.8 & \bf 100.0 & \bf 100.0 & 4.8 & 0.0 \\
SSMax & \bf 100.0 & 99.8 & 99.2 & 92.0 & 75.2 & 99.4 & 99.2 & 64.4 & 14.8 \\
Entmax & 99.8 & 99.8 & 89.0 & 21.6 & 1.2 & 99.6 & 99.4 & 64.8 & 7.2 \\
ASEntmax & 99.6 & \bf 100.0 & \bf 100.0 & \bf 99.8 & \bf 97.4 & 99.4 & 99.4 & \bf 83.2 & \bf 25.4 \\
\bottomrule
\end{tabular}
\end{table}

\paragraph{Discussion.} 
In short-context evaluation (Table~\ref{tab:short_ctx_lm_results}), all models perform comparably.
We note that performance on ARC-C, Winogrande, and OpenBookQA is near-random.
Among the other benchmarks, ASEntmax achieves the highest scores on Lambada (both in terms of perplexity and accuracy) and HellaSwag, while Softmax and SSMax perform best on PIQA. 
For long-context modeling (Table~\ref{tab:lm_perplexity_res}), ASEntmax outperforms all other models. 
On ArXiv, it shows strong extrapolation, maintaining a decreasing perplexity trend even when extended to $8\times$ the pre-trained sequence length. On PubMed, although all models struggle with $8\times$ extrapolation, ASEntmax still leads by a margin of about 1 perplexity point vs SSMax.
In retrieval tasks from RULER (Table~\ref{tab:lm_ruler_res}), entmax-based models perform better overall, with both Entmax and ASEntmax surpassing their softmax counterparts at the extrapolation $4\times$ and $8\times$. ASEntmax in particular shows strong extrapolation in passkey retrieval on the simple haystack task (S-NIAH-1), maintaining near-perfect performance even at $8\times$ context length. On the harder S-NIAH-2 variant, despite performance dropping substantially across all models, ASEntmax remains the best performer at $2\times$ and $4\times$ length generalization.

\section{Related Works}

\paragraph{Attention Dispersion.} 
Recent work has identified attention dispersion as a fundamental limitation in softmax-based transformers~\citep{dong2021attention,zhai2023stabilizing,velivckovic2024softmax}. 
For example, 
\citet{velivckovic2024softmax} demonstrate that softmax attention inevitably disperses focus as sequence length increases, while \citet{nakanishi2025scalable} propose SSMax to scale attention logits based on sequence length. 
Our approach employs $\alpha$-entmax \citep{peters-etal-2019-sparse}, which naturally produces sparse distributions by assigning exactly zero probability to irrelevant tokens. 
We provide theoretical guarantees that $\alpha$-entmax maintains bounded normalized entropy as sequence length increases---a property softmax fundamentally lacks. 
Our ASEntmax further improves $\alpha$-entmax with learnable, context-dependent scaling, 
 leading to consistent gains over SSMax across diverse tasks.

\paragraph{Representational Collapse and Over-Squashing.}
Studies analyzing attention patterns in neural networks have noted that increasing depth and context length can induce representational degeneration~\citep{dong2021attention,noci2022signal,arroyo2025vanishing}.
In particular, \citet{barbero2024transformers} prove that with softmax attention, token representations become indistinguishable as sequence length increases and gradient paths grow as $\mathcal{O}(n^L)$, causing exponential signal dilution. 
Our analysis shows, theoretically and empirically, that $\alpha$-entmax can address both limitations
by maintaining distinct token representations and by reducing gradient paths to $\mathcal{O}(s^L)$ for increasing sequence lengths $n$.

\paragraph{Positional Encodings.}

 The design of positional encodings plays a central role in enabling transformers to generalize to long contexts. ALiBi \citep{press2022train} introduced linear attention biases with fixed slopes that encourage recency, and has since inspired several extensions that either mitigate softmax-related issues (e.g., Hard-ALiBi \citep{jelassi2024repeat}) or induce various learned decay patterns (KERPLE \citep{KERPLE_NEURIPS2022_37a41384}, FIRE \citep{FIRE_li2024functional}). More recently, Stick-Breaking Attention \citep{tan2025scaling} can be interpreted as a dynamic variant of ALiBi, in which the slopes are input-dependent and calculated adaptively from spans of tokens. Our work incorporates NAPE (NoPE + ALiBi). In this formulation, ALiBi heads combined with $\alpha$-entmax create hard attention windows similar to Hard-ALiBi \citep{jelassi2024repeat}. NoPE, in turn, can learn positional bias (see App. \ref{subsec:app_2back_results}), and when combined with ASEntmax, it induces a learnable, input-dependent recency bias.

\paragraph{Attention Scaling.} 
Recent work has shown that scaling attention logits is key for maintaining sharp attention in long contexts. 
Methods like YaRN~\citep{peng2024yarn} and the entropy-aware approach of~\citet{zhang2024extending} use dynamic logit scaling---often with modified RoPE---to stabilize attention during training on extended contexts. %
Scalable-Softmax~\citep{nakanishi2025scalable} and SWAN \citep{puvvada-etal-2025-swan} apply a $\log n$-based logit scaling to control dispersion without requiring post-training adaptation. 
InfoScale~\citep{li2025information} derive scaling rules from the principle of entropy invariance, while Scale-invariant Attention~\citep{anson2025scale} introduces position-dependent transformations to balance attention across both local and global contexts. 
Across these methods, adaptive scaling consistently improves extrapolation to longer sequences. 
Our ASEntmax builds on this line of work by introducing context-dependent, learnable scaling within the $\alpha$-entmax framework, enabling sparse, focused attention as context length increases.

\paragraph{Sparse Attention.}
Previous sparse attention approaches include structured patterns like Longformer \citep{beltagy2020longformer} and BigBird \citep{zaheer2020big},
as well as adaptive methods like $\alpha$-entmax \citep{peters-etal-2019-sparse}, top-$k$ related methods~\citep{gupta2021memory,zeng2025zeta}, and chunk-based approaches \citep{mohtashami2023randomaccess, hu2025efficient}. 
A complementary line of work targets efficient long-context \emph{inference} via structured or dynamic sparsification, e.g., attention sinks \citep{xiao2024efficient}, dynamic sparse prefill accelerators \citep{jiang2024minference,lai2025flexprefill,li2025mminference}, and block/permutation sparse schemes \citep{chen2025core,xu2025xattention,zhang2025curse}.
In contrast to many inference-time sparsification strategies, we study a more general modification---the choice of the \emph{attention transformation}---and analyze how it affects length generalization through dispersion, representational collapse, and over-squashing.
Finally, we note that ASEntmax employs AdaSplash~\citep{gonccalves2025adasplash} for computing $\alpha$-entmax attention efficiently and thus inherits its runtime properties. Hence, these inference-oriented sparsification methods are largely orthogonal to ASEntmax and can be combined for speeding up inference while retaining theoretical benefits.\looseness=-1

\section{Conclusions}\label{sec:conclusions}

In this paper, we present a principled approach to long-context modeling by replacing softmax with $\alpha$-entmax in transformer attention. 
Our theoretical analysis demonstrates how this simple change addresses three fundamental limitations: it avoids attention dispersion through naturally sparse distributions, prevents representational collapse by maintaining distinct token representations, and alleviates over-squashing by reducing gradient paths from $\mathcal{O}(n^L)$ to $\mathcal{O}(s^L)$, where $s \ll n$ is the number of tokens with nonzero probability. 
We further introduce Adaptive-Scalable $\alpha$-entmax (ASEntmax), which adaptively adjusts sparsity based on sequence length for each attention head and query input.
Our empirical results confirm these theoretical predictions across both synthetic and real-world tasks. On synthetic benchmarks, ASEntmax achieves 95.3\% accuracy on associative recall at 1000$\times$ the training length, substantially outperforming softmax and existing alternatives. On language modeling with 420M-parameter models, ASEntmax maintains decreasing perplexity at 8$\times$ the training context length and achieves 97.4\% retrieval accuracy at 16K tokens after training on only 2K tokens.
These findings suggest that addressing the fundamental mathematical limitations of transformer attention mechanisms provides a direct path to robust long-context generalization.

\section*{Reproducibility Statement}

To facilitate the reproducibility of our results, we make our code publicly available at \url{https://github.com/deep-spin/asentmax}. 
For efficient softmax attention, we have used FlashAttention-2~\citep{dao2024flashattention}, and for $\alpha$-entmax we relied on AdaSplash~\citep{gonccalves2025adasplash}.
All theoretical results presented in \S\ref{sec:theoretical_entmax_long} are accompanied by complete proofs in Appendix~\S\ref{subsec:app_nondispersion_entmax} and \S\ref{sec:app_proofs}. For our synthetic task experiments, we provide detailed task descriptions, data generation procedures, model architectures, and hyperparameters in Appendix~\S\ref{subsec:app_experimental_details_synthetic}-\ref{subsec:app_experimental_details_synthetic_models}. 
Our language modeling experiments use publicly available datasets: DCLM-Edu~\citep{allal2025smollm2smolgoesbig} for training and standard benchmarks (LAMBADA, HellaSwag, PIQA, ARC-C, Winogrande, OpenBookQA), The Pile~\citep{gao2020pile} (ArXiv, PubMed), and RULER~\citep{hsieh2024ruler} for evaluation, with complete model specifications and training details provided in Appendix~\S\ref{sec:lm_experiment_details}.

\section*{Acknowledgments}

We thank the SARDINE Lab members for reviewing this paper and providing helpful feedback.
This work was supported by the Portuguese Recovery and Resilience Plan through project C645008882-00000055 (Center for ResponsibleAI), by the project DECOLLAGE (ERC-2022-CoG 101088763), and by FCT/MECI through national funds and when applicable co-funded EU funds under UID/50008: Instituto de Telecomunicações.

\bibliography{iclr2026_conference}
\bibliographystyle{iclr2026_conference}

\newpage

\appendix
\begin{figure}[t]
    \centering
    \includegraphics[width=1\linewidth]{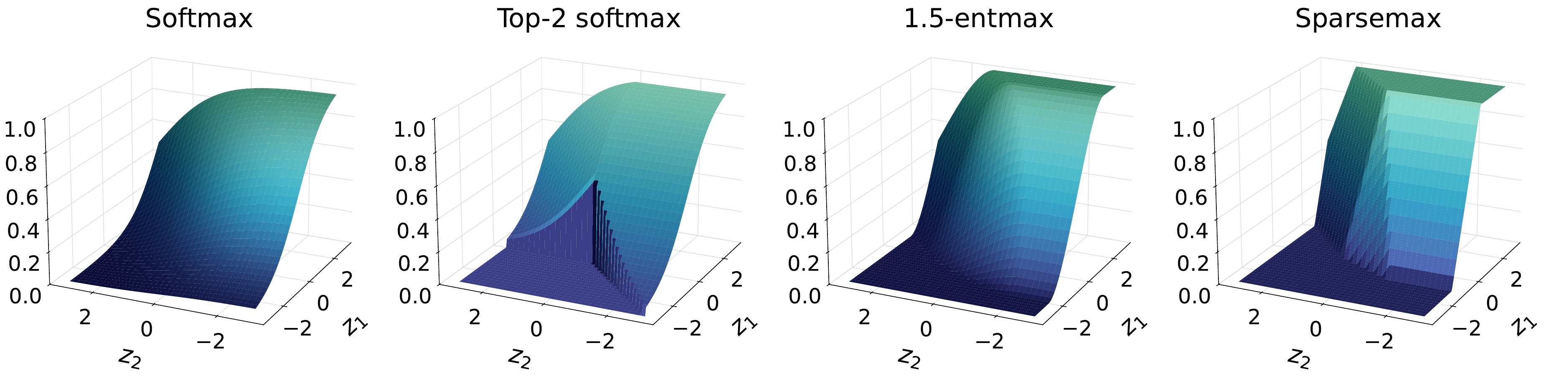}
    \caption{Visualization of $\alpha$-entmax for different values of $\alpha$. We also include top-$k$ softmax with $k=2$ for completeness.
    Each panel shows how $p_0$ varies for the input $\bm{z} = [0, z_1, z_2]$.
    }
    \label{fig:entmax_visualizations_app}
\end{figure}

\section{$\alpha$-entmax Transformation}
\label{section:app_alpha_entmax}

The $\alpha$-entmax transformation of a score vector
$\bm{z}\in\mathbb{R}^n$ is defined as follows~\citep{peters-etal-2019-sparse}:
\begin{equation}
\alpha\text{-entmax}(\bm{z})
:=
\arg\max_{\bm{p}\in\triangle_n}
\bm{p}^\top \bm{z} + H_\alpha(\bm{p}),
\quad
\triangle_n := \{\bm{p} \in \mathbb{R}^n \,:\, \bm{p} \ge \mathbf{0}, \mathbf{1}^\top \bm{p} = 1\},
\end{equation}
where $H_\alpha(\bm{p})$ is the Tsallis($\alpha$) entropy~\citep{Tsallis1988}, defined as
\begin{equation}\label{eq:tsallis}
H_\alpha (\bm{p}) := \begin{cases}
    \frac{1}{\alpha(\alpha-1)}\sum_j(p_j-p_j^\alpha), &  \alpha \neq 1\\
    -\sum_j p_j \log p_j, & \alpha=1.
\end{cases}
\end{equation}
Solving the optimization problem above corresponds to finding a threshold $\tau$ so that $\bm{p}^\star$ sums to $1$. 
Given $\tau$, we can easily evaluate $\alpha$-entmax as follows:
\begin{equation}
    \alpha\text{-entmax}(\bm{z}) = [(\alpha - 1)\bm{z} - \tau\bm{1}]_{+}^{\frac{1}{\alpha-1}},
\end{equation}
where $[\cdot]_{+}$ is the ReLU function.
Figure \ref{fig:entmax_visualizations_app} illustrates how $\alpha$-entmax behaves for different choices of $\alpha$.
From the equation above, it is clear that coordinates with $(\alpha-1)z_i\le \tau$ become exactly zero. 
In other words, $\alpha$-entmax yields dynamic sparsity, where the pattern of zeros depends on the input, with $\alpha$ controlling the propensity of sparsity: larger values leads to increasingly sparser outputs.

\paragraph{Relation to softmax and sparsemax.}
The $\alpha$-entmax family continuously interpolates between softmax and sparsemax.
When $\alpha\to 1^+$, the Tsallis entropy $H_\alpha$ converges to Shannon entropy and $\alpha$-entmax converges to the usual softmax mapping.
At $\alpha=2$, we have $H_2(\bm{p}) \propto \|\bm{p}\|_2^2$ (Gini entropy), and the resulting optimizer coincides with \emph{sparsemax} \citep{martins2016softmax}.

\paragraph{$q$-logarithm and $q$-exponential view.}
A useful way to interpret $\alpha$-entmax is through $q$-deformed exponentials \citep{Tsallis1988}.
For $q\neq 1$, define the $q$-exponential
\begin{equation}
\exp_q(x) := \bigl[1 + (1-q)x\bigr]_+^{\frac{1}{1-q}},
\end{equation}
with $\exp_1(x)=\exp(x)$ obtained by continuity.
Let $q:=2-\alpha$ so that $1-q=\alpha-1$ and $\frac{1}{1-q}=\frac{1}{\alpha-1}$.
Then the closed form of $\alpha$-entmax can be written as a normalized $q$-exponential of a shifted score:
\begin{equation}
\aentmax(\bm{z})_i
\ \propto\
\exp_{2-\alpha}\!\left(z_i - c\right)
\quad\text{for a scalar }c\text{ chosen so that }\sum_i \aentmax(\bm{z})_i = 1.
\end{equation}
This makes the connection with softmax explicit as softmax corresponds to the $q\to 1$ limit (no truncation), while for $\alpha>1$ (i.e., $q<1$) the ReLU operation yields exact zeros (sparsity).

\paragraph{$\alpha$-entmax computation.}
Order-based algorithms have only been proposed
for $\alpha=2$~\citep{martins2016softmax} and $\alpha=1.5$ \citep{peters-etal-2019-sparse}. 
In contrast, root-finding algorithms apply for all values of $\alpha$.
Letting $\max(\bm{z})=1$ and following \citet{peters-etal-2019-sparse}, we have
\begin{equation}\label{eq:bounds_entmax}
0 \leq \tau^\star \leq 1-n^{1-\alpha}.
\end{equation}
\citet{blondel2020learning} propose a bisection or binary
search approach to finding $\tau^\star$. 
Similar in spirit,
\citet{gonccalves2025adasplash} introduces a GPU-oriented solver of $\alpha$-entmax that uses a hybrid \emph{Halley-bisection} method, which combines the fast local convergence of higher-order root-finding with the convergence guarantees of bisection.

\section{Model Definition and Notation}
\label{subsec:model_definition}

In this work, we study (causal) transformers with sparse attention distributions created by replacing softmax with $\alpha$-entmax. We present the precise mathematical formulation of our model below, following closely the notation from \citep{barbero2024transformers}.

Let $\bm{Q} = \bm{X}\bm{W}_Q, \bm{K} = \bm{X}\bm{W}_K, \bm{V} = \bm{X}\bm{W}_V \in \mathbb{R}^{n \times d}$ be the query, key, and value projections of the input embeddings respectively, where $n$ is sequence length and $d$ the hidden size.
We denote with $\bm{q}_i, \bm{k}_i, \bm{v}_i \in \mathbb{R}^d$ the $d$-dimensional query, key, and value vectors of the $i$-th token. 
For a single attention head, transformers compute the representation of the $i$-th token through the following layer-wise transformations:\footnote{Following \citet{barbero2024transformers}, we omit the linear projections used to compute the vectors from the output of previous layers for clarity; however, this does not impact our derivations and conclusions.}
\begin{align}
\bm{u}^{(\ell)}_i &= \sum_{j \leq i} p_{ij}^{(\ell)} \text{norm}_1^{(\ell)}\left(\bm{v}^{(\ell-1)}_j\right) + \bm{v}^{(\ell-1)}_i, \\
\bm{v}_i^{(\ell)} &= \text{FFN}^{(\ell)}\left(\text{norm}_2^{(\ell)}\left(\bm{u}_i^{(\ell)}\right)\right) + \bm{u}_i^{(\ell)}, \\
\bm{y}_i &= \text{norm}_3\left(\bm{v}^{(L)}_i\right),
\end{align}
where $\ell$ is the later index, $p_{ij}^{(\ell)}$ 
represents the attention weights, $\text{FFN}^{(\ell)}: \mathbb{R}^d \to \mathbb{R}^d$ represents the feed-forward network, and $\text{norm}_1^{(\ell)}, \text{norm}_2^{(\ell)},$ and $\text{norm}_3$ are normalization functions. 
The final representation $\bm{y}_i$ is computed after applying $L$ transformer layers. For next-token prediction tasks, the model output typically depends solely on $\bm{y}_n$, the final representation of the last token.
The attention weights $p_{ij}^{(\ell)}$ are computed by applying a transformation $\pi: \mathbb{R}^n \to \triangle_n$ 
as follows:
\begin{equation}
p_{ij}^{(\ell)} = \pi\left(\bm{z}_{i}^{(\ell)}\right)_j,
\end{equation}
where $\bm{z}_{i}^{(\ell)} \in \mathbb{R}^n$ is the vector of logits for token $i$ at layer $\ell$, with elements $z_{ij}^{(\ell)} = \langle \bm{q}_i^{(\ell)}, \bm{k}_j^{(\ell)} \rangle / \sqrt{d}$. The function $\pi$ maps these logits to a probability distribution over the $n$ tokens, with $\triangle_n$ denoting the probability simplex.
In standard transformers, $\pi$ is the softmax function:
\begin{equation}
\text{softmax}(\bm{z})_j = \frac{\exp(z_j)}{\sum_{k \leq i}\exp(z_k)}.
\end{equation}
In our approach, we replace softmax with $\alpha$-entmax (\S\ref{section:app_alpha_entmax}).
We group the attention weights into an attention matrix 
at the $\ell$-th layer, defined element-wise as $[\bm{P}^{(\ell)}]_{ij} := p_{ij}^{(\ell)}$. This is a row-stochastic lower triangular matrix that can also be interpreted as a probabilistic directed graph. 
Finally, when incorporating positional information, we modify the attention logits computation according to the chosen positional encoding strategy:
\begin{itemize}[leftmargin=*]
\item \textbf{NoPE}: $z_{ij}^{(\ell)} = \langle \bm{q}_i^{(\ell)}, \bm{k}_j^{(\ell)} \rangle / \sqrt{d}$.

\item \textbf{ALiBi}: $z_{ij}^{(\ell)} = \langle \bm{q}_i^{(\ell)}, \bm{k}_j^{(\ell)} \rangle / \sqrt{d} + m \cdot (j-i)$, where $m \in \mathbb{R}$ is a slope hyperparameter.

\item \textbf{RoPE}: $z_{ij}^{(\ell)} = (\bm{q}_i^{(\ell)})^\top \bm{R}^{j-i}\bm{k}_j^{(\ell)}$, where $\bm{R} \in \mathbb{R}^{d \times d}$ is a block-diagonal rotation matrix.

\end{itemize}

\section{Non-Dispersion of $\alpha$-entmax}
\label{subsec:app_nondispersion_entmax}

A critical problem in long-context modeling is the dispersion of attention, where relevant signals get diluted across increasingly long sequences. 
For clarity, we begin by examining how $\alpha$-entmax behaves with two-level logits, and then proceed to define dispersion more rigorously and how $\alpha$-entmax naturally counteracts this issue.

\begin{lemma}[Threshold Behavior for Two-level Logits]
\label{lemma:threshold-two-level}
Consider logits $\bm{z} \in \mathbb{R}^n$ where $k$ tokens have value $M$ and $(n-k)$ tokens have value $m$ with $M > m$.

\begin{enumerate}[leftmargin=*]
    \item For $\alpha > 1$, when $\Delta := M - m \ge \frac{k^{-(\alpha-1)}}{\alpha-1}$, only the $k$ tokens with value $M$ receive non-zero attention. The threshold converges to $\tau(\bm{z}) = (\alpha-1)M - k^{-(\alpha-1)}$, and each high-value token receives attention $\frac{1}{k}$ while others receive zero attention.
    As a consequence, $\alpha$-entmax maintains a constant attention weight of $\Theta(\frac{1}{k})$ on high-value tokens regardless of the total sequence length $n$.

    \item In contrast, softmax (with fixed temperature $\theta > 0$) necessarily disperses with attention weights of $\Theta(\frac{1}{n})$ as $n$ increases. For softmax to maintain concentration of at least $c \in (0, 1)$ on the $k$ high-value tokens, the required logit difference must grow logarithmically with $n$:
    \begin{equation}
    \frac{\Delta}{\theta} \geq \ln\left(\frac{n-k}{k} \cdot \frac{c}{1-c}\right)
    \end{equation}
\end{enumerate}

\end{lemma}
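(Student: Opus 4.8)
The plan is to treat the two parts separately, since they concern different transformations and different asymptotic regimes. For part (1), I would start from the closed form \eqref{eq:entmax}, $\aentmax(\bm{z})_i = [(\alpha-1)z_i - \tau(\bm{z})]_+^{1/(\alpha-1)}$, and look for a threshold $\tau$ that is consistent with exactly the $k$ high-value tokens being in the support. If only the $M$-tokens survive, the normalization constraint $\sum_i \aentmax(\bm{z})_i = 1$ forces $k\bigl[(\alpha-1)M - \tau\bigr]^{1/(\alpha-1)} = 1$, i.e. $(\alpha-1)M - \tau = k^{-(\alpha-1)}$, giving $\tau = (\alpha-1)M - k^{-(\alpha-1)}$ and each high-value token attention $1/k$. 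This candidate solution is valid (and hence, by uniqueness of the $\aentmax$ solution, \emph{the} solution) precisely when the $m$-tokens are genuinely below threshold: $(\alpha-1)m \le \tau = (\alpha-1)M - k^{-(\alpha-1)}$, which rearranges to $\Delta = M - m \ge k^{-(\alpha-1)}/(\alpha-1)$. The key observation is that $n$ does not appear anywhere in this computation: the threshold and the per-token weight $1/k$ are exact (not merely asymptotic) for every $n$ once the $\Delta$ condition holds, which immediately yields the claimed $\Theta(1/k)$ behavior independent of $n$.

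For part (2), I would compute the softmax distribution on the same two-level logit vector directly. With $k$ logits equal to $M$ and $n-k$ equal to $m$, the total mass on the high-value tokens is $c_n := \dfrac{k e^{M/\theta}}{k e^{M/\theta} + (n-k) e^{m/\theta}} = \dfrac{1}{1 + \frac{n-k}{k} e^{-\Delta/\theta}}$. For fixed $\theta$ and fixed $\Delta$, as $n\to\infty$ the term $\frac{n-k}{k} e^{-\Delta/\theta}\to\infty$, so $c_n\to 0$ and each individual high-value weight is $c_n/k = \Theta(1/n)$, establishing the dispersion claim. To obtain the stated lower bound on $\Delta$, I would impose $c_n \ge c$, i.e. $1 + \frac{n-k}{k} e^{-\Delta/\theta} \le 1/c$, which is equivalent to $e^{-\Delta/\theta} \le \frac{k}{n-k}\cdot\frac{1-c}{c}$, and taking logarithms gives $\frac{\Delta}{\theta} \ge \ln\!\Bigl(\frac{n-k}{k}\cdot\frac{c}{1-c}\Bigr)$, exactly as claimed; this grows like $\ln n$.

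Neither part involves a genuine obstacle — both are essentially exact algebraic manipulations of the respective closed forms. The only point requiring a little care is the logical structure of part (1): I should argue that the constructed candidate is the unique $\aentmax$ output rather than merely \emph{a} fixed point. This follows because $\aentmax$ is the argmax of a strictly concave objective over the simplex (as recalled in \S\ref{section:app_alpha_entmax}), so its solution is unique, and one need only verify that the proposed $\bm{p}$ satisfies the KKT/support conditions: nonnegativity and the sum-to-one constraint are immediate, and the complementary-slackness/threshold condition $(\alpha-1)z_i \le \tau$ for the zero coordinates is exactly the $\Delta$ hypothesis. A secondary minor point is noting that when $\Delta$ exactly equals $k^{-(\alpha-1)}/(\alpha-1)$ the $m$-tokens sit at the threshold with weight zero, so the boundary case is consistent with the stated ``$\ge$''. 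Everything else is bookkeeping.
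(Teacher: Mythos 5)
Your proposal is correct and follows essentially the same route as the paper's proof: for part (1) it solves the normalization constraint under the assumption that only the $M$-tokens are in the support to get $\tau = (\alpha-1)M - k^{-(\alpha-1)}$ and attention $1/k$, then verifies the support condition is equivalent to $\Delta \ge \frac{k^{-(\alpha-1)}}{\alpha-1}$; for part (2) it performs the same algebraic manipulation of the softmax concentration inequality to reach $\frac{\Delta}{\theta} \ge \ln\bigl(\frac{n-k}{k}\cdot\frac{c}{1-c}\bigr)$. Your added remark on uniqueness via the strictly concave objective and KKT conditions is a slightly more careful justification of a step the paper leaves implicit, but it is not a different argument.
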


\begin{proof}
We prove the two parts below.

\textbf{Part (i):} Let $\mathcal{S}$ be the support set. A token $i$ is in $\mathcal{S}$ if and only if $z_i > \frac{\tau(\bm{z})}{\alpha-1}$ where $\tau(\bm{z})$ satisfies:
\begin{equation}
\sum_{i=1}^n \left[\,(\alpha-1)z_i - \tau(\bm{z})\,\right]_+^{\frac{1}{\alpha-1}} = 1.
\end{equation}

For our two-level distribution, this becomes:
\begin{equation}
\sum_{i: z_i=M} [\,(\alpha-1)M - \tau(\bm{z})\,]_+^{\frac{1}{\alpha-1}} + \sum_{i: z_i=m} [\,(\alpha-1)m - \tau(\bm{z})\,]_+^{\frac{1}{\alpha-1}} = 1.
\end{equation}

For only tokens with value $M$ to receive non-zero attention, we need:
\begin{equation}
(\alpha-1)M - \tau(\bm{z}) > 0 \quad \text{and} \quad (\alpha-1)m - \tau(\bm{z}) \leq 0.
\end{equation}

Rearranging: $(\alpha-1)m \le \tau(\bm{z}) < (\alpha-1)M$. In this regime:
\begin{equation}
k \cdot [\,(\alpha-1)M - \tau(\bm{z})\,]^{\frac{1}{\alpha-1}} = 1.
\end{equation}

Solving for $\tau(\bm{z})$:
\begin{align}
[\,(\alpha-1)M - \tau(\bm{z})\,]^{\frac{1}{\alpha-1}} &= \frac{1}{k} \\
(\alpha-1)M - \tau(\bm{z}) &= k^{-(\alpha-1)} \\
\tau(\bm{z}) &= (\alpha-1)M - k^{-(\alpha-1)}.
\end{align}

For this threshold to satisfy $\tau(\bm{z}) \ge (\alpha-1)m$, we need:
\begin{align}
(\alpha-1)M - k^{-(\alpha-1)} &\ge (\alpha-1)m \\
M - m &\ge \frac{k^{-(\alpha-1)}}{\alpha-1} \\
\Delta &\ge \frac{k^{-(\alpha-1)}}{\alpha-1}.
\end{align}

Thus, when $\Delta \ge \frac{k^{-(\alpha-1)}}{\alpha-1}$, only the $k$ tokens with value $M$ receive non-zero attention, each receiving an attention of $\frac{1}{k}$.
Therefore,  when $\Delta \ge \frac{k^{-(\alpha-1)}}{\alpha-1}$, the attention weights for $\alpha\text{-entmax}$ are:
\begin{equation}
\alpha\text{-entmax}(\bm{z})_i = 
\begin{cases}
\frac{1}{k} & \text{if } z_i = M \\
0 & \text{if } z_i = m.
\end{cases}
\end{equation}

These weights remain $\Theta(\frac{1}{k})$ for high-value tokens regardless of $n$, demonstrating that $\alpha$-entmax can maintain constant attention on important tokens even as sequence length grows, as long as $k$ is fixed as $n \rightarrow \infty$.

\textbf{Part (ii):} For softmax with temperature $\theta > 0$, the attention weight for tokens with logit $M$ is:
\begin{equation}
\softmax(\bm{z} / \theta)_i = \frac{\exp(M/\theta)}{k\exp(M/\theta) + (n-k)\exp(m/\theta)}.
\end{equation}

For softmax to maintain concentration of at least $c$ on the $k$ high-value tokens combined:
\begin{equation}
\frac{k\exp(M/\theta)}{k\exp(M/\theta) + (n-k)\exp(m/\theta)} \geq c.
\end{equation}

Through algebraic manipulation:
\begin{align}
k\exp(M/\theta) &\geq c\left[k\exp(M/\theta) + (n-k)\exp(m/\theta)\right] \\
(1-c)k\exp(M/\theta) &\geq c(n-k)\exp(m/\theta) \\
\frac{k\exp(M/\theta)}{(n-k)\exp(m/\theta)} &\geq \frac{c}{1-c} \\
\frac{k}{n-k}\exp(\Delta/\theta) &\geq \frac{c}{1-c} \\
\exp(\Delta/\theta) &\geq \frac{n-k}{k} \cdot \frac{c}{1-c}.
\end{align}

Taking the natural logarithm:
\begin{equation}
\frac{\Delta}{\theta} \geq \ln\left(\frac{n-k}{k} \cdot \frac{c}{1-c}\right).
\end{equation}

This shows that as $n$ grows, the required $\Delta$ for maintaining concentration with softmax grows logarithmically with $n$. 
In contrast, for $\alpha$-entmax, assuming we have a $k$ that is fixed as $n$ grows, the condition $\Delta \geq \frac{k^{-(\alpha-1)}}{\alpha-1}$ is independent of $n$, enabling constant focus regardless of sequence length. 
\end{proof}

We now prove Proposition~\ref{thm:dispersion-properties-entmax}, which concerns the concept of dispersion presented in Definition~\ref{def:dispersion}.

\begin{proof}
We address each claim in turn. 
For bounded sequences $(z_n)_{n \in \mathbb{N}}$, we assume $m, M \in \mathbb{R}$ with $m \le M$ such that $m \le z_i \le M$ for every $i \in \mathbb{N}$. 

\textbf{Part (i) - $\alpha$-entmax can retain probability, while softmax always leaks:} 
For $\alpha > 1$, consider logits $\bm{z} \in \mathbb{R}^n$ and an extended sequence $\bm{z}^* \in \mathbb{R}^N$ with $N > n$, where all additional elements have values below the threshold,  $z_i^* \le \tau(\bm{z})/(\alpha-1)$ for $i>n$. By the non-vanishing attention property of $\alpha$-entmax (Lemma~\ref{lemma:entmax-3cases}), these additional elements receive exactly zero probability, resulting in:
\begin{equation}
\alpha\text{-entmax}(\bm{z})_i = \alpha\text{-entmax}(\bm{z}^*)_i \quad \forall i \leq n.
\end{equation}

This demonstrates that $\alpha$-entmax can produce identical distributions despite arbitrarily different sequence lengths, maintaining the same concentration regardless of whether we have a distinct number of tokens.
In contrast, for softmax ($\alpha = 1$), \citet{barbero2024transformers} proved that adding any element to the sequence strictly decreases the probability assigned to existing elements, making such invariance impossible.

\textbf{Part (ii) - Complete dispersion of softmax:} For softmax with constant temperature $\theta > 0$, the attention weights for bounded logits can be bounded as:
\begin{equation}
\frac{\exp(m/\theta)}{\sum_{j=1}^n \exp(z_j/\theta)} \leq \text{softmax}(\bm{z}_{1:n}/\theta)_i \leq \frac{\exp(M/\theta)}{\sum_{j=1}^n \exp(z_j/\theta)}.
\end{equation}

Since $\sum_{j=1}^n \exp(z_j/\theta) \geq n \cdot \exp(m/\theta)$ and $\sum_{j=1}^n \exp(z_j/\theta) \leq n \cdot \exp(M/\theta)$, we have:
\begin{equation}
\frac{\exp(m/\theta)}{n \cdot \exp(M/\theta)} \leq \text{softmax}(\bm{z}_{1:n}/\theta)_i \leq \frac{\exp(M/\theta)}{n \cdot \exp(m/\theta)}.
\end{equation}

This simplifies to:
\begin{equation}
\frac{1}{n}\exp\left(-\frac{\Delta}{\theta}\right) \leq \text{softmax}(\bm{z}_{1:n}/\theta)_i \leq \frac{1}{n}\exp\left(\frac{\Delta}{\theta}\right),
\end{equation}
where $\Delta = M - m$ is bounded.

These bounds show that as $n \to \infty$, all softmax weights are $\Theta(1/n)$. For the entropy:
\begin{equation}
H(\text{softmax}(\bm{z}_{1:n}/\theta)) = -\sum_{i=1}^n \text{softmax}(\bm{z}_{1:n}/\theta)_i \log \text{softmax}(\bm{z}_{1:n}/\theta)_i \,\,\rightarrow\,\, \log n. 
\end{equation}

Thus, $\lim_{n \to \infty} \frac{H(\text{softmax}(\bm{z}/\theta))}{\log n} = 1$, showing complete dispersion.

\textbf{Part (iii) - Strong concentration resilience of $\alpha$-entmax:} 
First, we focus on the two-level case from Lemma~\ref{lemma:threshold-two-level}, where $k$ tokens have logit value $M$ and $(n-k)$ tokens have value $m$. When $\Delta = M - m \geq \frac{k^{-(\alpha-1)}}{\alpha-1}$, only the $k$ tokens with value $M$ receive non-zero attention:
\begin{equation}
\alpha\text{-entmax}(\bm{z}_{1:n})_i = 
\begin{cases}
\frac{1}{k} & \text{if } z_i = M \\
0 & \text{if } z_i = m.
\end{cases}
\end{equation}

The Shannon entropy of this distribution is:
\begin{equation}
H(\alpha\text{-entmax}(\bm{z}_{1:n})) = -\sum_{i=1}^k \frac{1}{k} \log \frac{1}{k} = \log k.
\end{equation}

The normalized entropy is:
\begin{equation}
\frac{H(\alpha\text{-entmax}(\bm{z}_{1:n}))}{\log n} = \frac{\log k}{\log n}.
\end{equation}

For fixed $k$ as $n \to \infty$, this ratio approaches 0, confirming concentration resilience.

For cases where the support grows sublinearly as $k := |\mathcal{S}| = \mathcal{O}(n^\beta)$ for some $\beta < 1$, the Shannon entropy is bounded by:
\begin{equation}
H(\alpha\text{-entmax}(\bm{z}_{1:n})) \leq \log k = \mathcal{O}(\log n^\beta) = \mathcal{O}(\beta \log n).
\end{equation}

The normalized entropy is therefore:
\begin{equation}
\lim_{n \to \infty} \frac{H(\alpha\text{-entmax}(\bm{z}_{1:n}))}{\log n} \leq \beta < 1.
\end{equation}

This confirms that the normalized entropy remains strictly bounded away from 1, even with growing support, as long as the growth is sublinear.

\end{proof}

This proposition shows that the entropy of attention distributions reveals how concentrated or dispersed they are across tokens. While softmax distributions with bounded logits must approach maximum entropy $\mathcal{O}(\log n)$ as sequence length increases (indicating complete dispersion), $\alpha$-entmax distributions can maintain bounded entropy $\mathcal{O}(\log k)$ where $k$ is the support size. This allows models with $\alpha$-entmax to maintain focused, low-entropy attention patterns even when processing extremely long sequences.

Moreover, this proposition demonstrates that $\alpha$-entmax attention distributions have a remarkable property: they do not necessarily disperse as sequence length increases. They can maintain identical attention patterns regardless of context length. 
This non-dispersion property means transformers with $\alpha$-entmax can scale to very long contexts without the attention becoming diluted, maintaining their ability to focus on relevant information regardless of how much additional context is present.

\section{Representational Collapse and Over-squashing}
\label{sec:app_proofs}

\subsection{Proof of Lemma~\ref{lemma:entmax-3cases}}
\label{subsec:app_non_vanishing_proba}

Adding a new element to the sequence of logits can only redistribute probability mass, so $\aentmax(\bm{x})_n \geq \aentmax(\bm{x}^*)_{n+1}$ must always hold, with equality iff $\aentmax(\bm{x}^*)_{n} = 0$. Since softmax ($\alpha=1$) cannot return zeros, we must have a strict inequality for $\alpha=1$. 

For $\alpha > 1$, we need to find the value $b_{\max}$ such that, for any $b \leq b_{\max}$, $\aentmax(\bm{x}^*)_{n} = 0$ holds.
From the definition of $\alpha$-entmax \eqref{eq:entmax}, a token $i$ receives non-zero probability iff $(\alpha-1)z_i > \tau(\bm{z})$, where $\tau(\bm{z})$ is the threshold ensuring the sum of probabilities equals 1.
Therefore, for the token $b$ in the extended sequence $\bm{x}^*$ to receive zero probability (thus not affecting other probabilities), we need:
\begin{equation}
(\alpha-1)b \leq \tau(\bm{x}^*).
\end{equation}

We know that $\tau(\bm{x}^*) \geq \tau(\bm{x})$ in general for $\alpha$-entmax, as shown by \citet[Lemma 3; Proposition 4]{peters-etal-2019-sparse,martins2022sparse}. Therefore, a sufficient condition is:
\begin{equation}
(\alpha-1)b \leq \tau(\bm{x}).
\end{equation}

Solving for $b$, we get:
\begin{equation}\label{eq:b_condition}
b \leq \frac{\tau(\bm{x})}{\alpha-1}.
\end{equation}
Thus, we can define $b_{\max} = \frac{\tau(\bm{x})}{\alpha-1}$.
For any $b \leq b_{\max}$, the token at position $n$ in $\bm{x}^*$ receives zero attention, meaning it doesn't affect the normalization. Therefore, $\tau(\bm{x}^*) = \tau(\bm{x})$, which means that the condition \eqref{eq:b_condition} is both necessary and sufficient, and:
\begin{equation}
\aentmax(\bm{x})_n = \bigl[(\alpha-1)c - \tau(\bm{x})\bigr]_+^{\frac{1}{\alpha-1}} = \bigl[(\alpha-1)c - \tau(\bm{x}^*)\bigr]_+^{\frac{1}{\alpha-1}} = \aentmax(\bm{x}^*)_{n+1}.
\end{equation}

By choosing different values of $b$ such that $b \leq b_{\max}$, we can control the change in threshold $\tau(\bm{x}^*)$ and consequently the difference $\aentmax(\bm{x})_n - \aentmax(\bm{x}^*)_{n+1}$ can be as large as $\aentmax(\bm{x})_n$.

\subsection{Proof of Proposition~\ref{prop:entmax_repcollapse_and_oversquashing} for Representational Preservation}
\label{subsec:app_representational_preservation}

We prove the first part of Proposition~\ref{prop:entmax_repcollapse_and_oversquashing} by exhibiting the counterexample below, following the synthetic construction introduced by \citet{barbero2024transformers}.

\begin{proposition}[Counterexample to Representational Collapse with $\alpha$-entmax]
\label{prop:entmax_no_collapse}
Let $\bm{v} \in \mathbb{R}^{(n-1) \times d}$ be a sequence of embedding vectors, and define:
\begin{equation}
\bm{v}^{(0)} = [\bm{v}, \bm{v}_a]^\top \in \mathbb{R}^{n \times d}, \quad
\bm{v}^{*(0)} = [\bm{v}, \bm{v}_a, \bm{v}_a]^\top \in \mathbb{R}^{(n+1) \times d},
\end{equation}
where the final token $\bm{v}_a \in \mathbb{R}^d$ is repeated.

For appropriate choice of embeddings and $\alpha > 1$, there exists a constant $c > 0$ independent of $n$ such that:
\begin{equation}
\|\bm{v}_n^{(L)} - \bm{v}_{n+1}^{*(L)}\|_1 \geq c > 0
\end{equation}
after $L$ transformer layers with $\alpha$-entmax attention, demonstrating that representational collapse is not inevitable.

In contrast, \citet{barbero2024transformers} proved that for softmax attention, $\|\bm{v}_n^{(L)} - \bm{v}_{n+1}^{*(L)}\|_1 \to 0$ as $n \to \infty$ for any such construction.
\end{proposition}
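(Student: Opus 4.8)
The plan is to realize the prescribed repeated-token construction $\bm{v}^{*(0)}=[\bm{v},\bm{v}_a,\bm{v}_a]^\top$ with $\alpha$-entmax attention in the NoPE setting, and to show that the $\alpha$-entmax threshold keeps the growing prefix $\bm{v}_1,\dots,\bm{v}_{n-1}$ entirely out of the attention support, so that duplicating the last token merely reshuffles a constant amount of probability mass between two \emph{distinct} value directions. First I would record the consequence of causal masking: positions $1,\dots,n$ of $\bm{v}^{*(0)}$ can only attend to positions $1,\dots,n$, which are exactly the whole of $\bm{v}^{(0)}$; hence at every layer the representations at positions $\le n$ coincide across the two sequences, and the comparison $\|\bm{v}_n^{(L)}-\bm{v}_{n+1}^{*(L)}\|_1$ reduces to contrasting the \emph{new} token at position $n+1$ (in $\bm{v}^{*(0)}$) against position $n$ (in $\bm{v}^{(0)}$), both sourced from $\bm{v}_a$ and hence carrying the same query and the same residual term $\bm{v}_a$.

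The heart of the argument is a single-layer computation. I would pick embeddings and projections (NoPE) so that the layer-$1$ logits of the query from $\bm{v}_a$ satisfy $z_1=z_a$, with $\bm{v}_1$ and $\bm{v}_a$ tied as the two largest logits and every remaining prefix logit pinned to a common value strictly below them; taking $\alpha=2$ (sparsemax) yields a transparent closed form. In sequence $1$ the support is $\{1,n\}$ with weights $\tfrac12,\tfrac12$, whereas in sequence $2$ the duplicated top logit makes the support $\{1,n,n+1\}$ with weights $\tfrac13,\tfrac13,\tfrac13$. Writing $\hat{\bm{v}}_j:=\text{norm}_1^{(1)}(\bm{v}_j^{(0)})$ and using that the values at positions $n$ and $n+1$ are both $\hat{\bm{v}}_a$, the two attention outputs differ by exactly
\begin{equation}
\bm{u}_{n+1}^{*(1)}-\bm{u}_{n}^{(1)}=\tfrac16\bigl(\hat{\bm{v}}_a-\hat{\bm{v}}_1\bigr),
\end{equation}
a nonzero constant provided $\hat{\bm{v}}_a\neq\hat{\bm{v}}_1$. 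The crucial point, in contrast to softmax, is that the sparsemax threshold $\tau$ is determined \emph{only} by the fixed in-support logits and is therefore independent of $n$; this makes the choice ``prefix below threshold'' self-consistent (an $n$-free $\tau$ keeps the prefix excluded for all $n$), so the gap never dilutes as the context grows. I would then note the same conclusion holds for every $\alpha>1$, since the threshold argument only uses that entmax assigns exact zeros below $\tau$, exactly as exploited in Lemma~\ref{lemma:entmax-3cases}.

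Finally I would propagate this layer-$1$ gap to the output of layer $L$. Because causality freezes positions $\le n$ to be identical across the two sequences, the only quantity left to track is the representation of position $n+1$, which inherits the constant layer-$1$ separation from position $n$. Since the proposition only requires the \emph{existence} of a construction, I would use the remaining design freedom to make layers $2,\dots,L$ contribute negligibly to the residual stream (e.g.\ near-zero value and feed-forward outputs), so that the residual connections carry the $n$-independent gap essentially unchanged to $\bm{v}_{n+1}^{*(L)}$, giving $\|\bm{v}_n^{(L)}-\bm{v}_{n+1}^{*(L)}\|_1\ge c$ for a constant $c>0$ independent of $n$. The main obstacle is precisely this multi-layer bookkeeping: one must ensure the deeper layers neither let the growing prefix re-enter the support nor contract the established separation. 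The sparsity of $\alpha$-entmax (keeping the prefix permanently out of support) together with the residual connections is exactly what makes a uniform-in-$n$ lower bound attainable, whereas under softmax the prefix always receives positive mass and drives the analogous difference to $0$.
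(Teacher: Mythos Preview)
Your construction is correct but takes a genuinely different route from the paper. The paper engineers the logits so that the length-$n$ sequence is \emph{dense} (all $n-1$ prefix tokens receive positive weight, with the last token getting mass $p_n\to p^*<1$) while the length-$(n+1)$ sequence is \emph{sparse} (only the two copies of $\bm{v}_a$ survive, each with weight $\tfrac12$); the $n$-independent gap then emerges as the limit $(1-p^*)\,\|\bar{\bm{v}}-\bm{v}_a\|_1$. You instead keep \emph{both} sequences sparse by planting a distinguished first token with $z_1=z_a$ and pushing the rest of the prefix below threshold, so the only change when duplicating $\bm{v}_a$ is a support shift from $\{1,n\}$ to $\{1,n,n+1\}$ and a clean $\tfrac16(\hat{\bm{v}}_a-\hat{\bm{v}}_1)$ gap. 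Your version is arguably more transparent: the layer-$1$ difference is \emph{exactly} $n$-free (no limit needed), and the ``threshold depends only on in-support logits'' observation makes the self-consistency check immediate. The paper's version, on the other hand, ties more directly to the intuition that sparsity lets the duplicated-token sequence \emph{ignore} the whole prefix while softmax cannot. On the multi-layer extension, you are slightly more explicit than the paper (which works only at layer~$1$): your appeal to design freedom to make layers $2,\dots,L$ near-identity is the right kind of argument for an existence claim, and it is worth noting that the paper's proof leaves this step implicit as well.
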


\begin{proof}
We prove this through explicit construction.

Since $\bm{v}^{(0)}_{1:n-1} = \bm{v}^{*(0)}_{1:n-1}$ and both sequences end with $\bm{v}_a$, the attention logits computed by the final tokens are:
\begin{align}
\bm{z}_n^{(1)} &= [\bm{v}_a^\top \bm{v}_1, \ldots, \bm{v}_a^\top \bm{v}_{n-1}, \bm{v}_a^\top \bm{v}_a], \\
\bm{z}_{n+1}^{*(1)} &= [\bm{v}_a^\top \bm{v}_1, \ldots, \bm{v}_a^\top \bm{v}_{n-1}, \bm{v}_a^\top \bm{v}_a, \bm{v}_a^\top \bm{v}_a].
\end{align}

Consider the specific embedding choice where:
\begin{itemize}[leftmargin=*]
    \item $\bm{v}_a$ is chosen such that $\bm{v}_a^\top \bm{v}_a = \phi$ for some $\phi \in \mathbb{R}$.
    \item $\bm{v}_i$ for $i = 1, \ldots, n-1$ are chosen such that $\bm{v}_a^\top \bm{v}_i = b$ for some value $b < \phi$.
\end{itemize}

This construction yields the following attention logits:
\begin{align}
\bm{z}_n^{(1)} &= [b, b, \ldots, b, \phi], \\
\bm{z}_{n+1}^{*(1)} &= [b, b, \ldots, b, \phi, \phi].
\end{align}

\paragraph{Specific counterexample.} Consider $d=1$, $\alpha=2.0$, $\phi=0.5$, and $b=0$. We can construct inputs as $\bm{v}_a = \sqrt{0.5}$ and $\bm{v}_i = 0$ for $i = 1, \ldots, n-1$.
The attention logits are:
\begin{align}
\bm{z}_n^{(1)} &= [0, \ldots, 0, 0.5], \\
\bm{z}_{n+1}^{*(1)} &= [0, 0, \ldots, 0, 0.5, 0.5].
\end{align}

For $\alpha = 2.0$ (sparsemax), the attention distributions are:
\begin{align}
\text{sparsemax}(\bm{z}_n^{(1)}) &= [p_b, p_b, \ldots, p_b, p_n] \quad \text{(dense)} \\
\text{sparsemax}(\bm{z}_{n+1}^{*(1)}) &= [0, 0, \ldots, 0, \frac{1}{2}, \frac{1}{2}] \quad \text{(sparse)}
\end{align}

where $0 < p_b < p_n < 1$. As $n \to \infty$, we have $p_n \to p^*$ where:
\begin{equation}
p^* = [(\alpha-1)(\phi - b)]^{\frac{1}{\alpha-1}} = [(2-1)(0.5-0)]^{\frac{1}{2-1}} = (0.5)^1 = 0.5.
\end{equation}

The average representation is $\bar{\bm{v}} = \frac{1}{n-1}\sum_{i=1}^{n-1} 0 = 0$, and therefore:
\begin{equation}
\lim_{n \to \infty} \|\bm{v}_n^{(1)} - \bm{v}_{n+1}^{*(1)}\|_1 = (1 - p^*)|\bar{\bm{v}} - \bm{v}_a| = (1 - 0.5)|0 - \sqrt{0.5}| = 0.5 \times \sqrt{0.5} \approx 0.354.
\end{equation}

This demonstrates that the $L_1$ difference remains bounded at approximately $c = 0.354$, independent of sequence length $n$. \textbf{This specific example already establishes the existence of a counterexample to representational collapse with $\alpha$-entmax}. We now extend this result to prove Proposition~\ref{prop:entmax_no_collapse} for general constructions of $\bm{v}$ and $\bm{v}_a$.

\paragraph{General Construction.} 
With values of $\phi$ such that $b + 2^{-(\alpha-1)}/(\alpha-1) \leq \phi < b + 1/(\alpha-1)$, $\alpha$-entmax produces the following distributions:%
\footnote{
The upper bound ensures a dense output for $\bm{z}_{n}^{(1)}$, following Lemma~\ref{lemma:entmax-3cases} with $\tau = (\alpha-1)\phi - 1$.
The lower bounds ensure a sparse output for $\bm{z}_{n+1}^{*(1)}$, following Lemma~\ref{lemma:threshold-two-level} with $k=2$.
}
\begin{align}
\alpha\text{-entmax}(\bm{z}_n^{(1)}) &= [p^{(1)}_1, p^{(1)}_2, \ldots, p^{(1)}_{n-1}, p^{(1)}_n], \\
\alpha\text{-entmax}(\bm{z}_{n+1}^{*(1)}) &= [0, 0, \ldots, 0, \frac{1}{2}, \frac{1}{2}].
\end{align}

In particular, since the first $n-1$ positions share the same representation, we have $p^{(1)}_1 = p^{(1)}_2 = ... = p^{(1)}_{n-1} = (1 - p^{(1)}_n)/(n - 1) = p^{(1)}_b > 0$, with $0 < p^{(1)}_n < 1$.
This leads to representations:
\begin{align}
\bm{v}_n^{(1)} &= p^{(1)}_b \bm{v}_1 + \cdots + p^{(1)}_b \bm{v}_{n-1} + p^{(1)}_{n} \bm{v}_a, \\
\bm{v}_{n+1}^{*(1)} &= \frac{1}{2}\bm{v}_a + \frac{1}{2}\bm{v}_a = \bm{v}_a.
\end{align}

Let, $\bar{\bm{v}}:=\frac1{n-1}\sum_{i=1}^{n-1}\bm{v}_i$ denote the average of the first block of vectors. Taking the $L_1$-norm of the representations difference:
\begin{align}
    \|\bm{v}_n^{(1)} - \bm{v}_{n+1}^{*(1)}\|_1 &= \left\| (1 - p_n^{(1)})\bar{\bm{v}} +  p_n^{(1)}\bm{v}_a - \bm{v}_a  \right\|_1 \\
    &= (1 - p_n^{(1)}) \left\|\bar{\bm{v}} - \bm{v}_a \right\|_1.
\end{align}

We need to show that the above expression does not tend to 0 as $n \to \infty$. To that end, we need (i) $\lim_{n \to \infty} p_n^{(1)} = p^* < 1$, and (ii) $\lim_{n\to\infty} \|\bar{\bm{v}} - \bm{v}_a \|_1 = c > 0$. 

\paragraph{First condition.} We need to choose parameters so that as $n \to \infty$, the original sequences remains \textbf{dense} and the extended sequence is in the \textbf{sparse} regime. From our analysis with Lemma~\ref{lemma:entmax-3cases} and \ref{lemma:threshold-two-level}, this requires:
\begin{align}
    \frac{2^{-(\alpha-1)}}{\alpha-1} \leq \phi - b < \frac{1}{\alpha-1}.
\end{align}

From the $\alpha$-entmax definition, we have:
\begin{align}
(p_b^{(1)})^{\alpha-1} &= (\alpha-1)b - \tau, \\
(p_n^{(1)})^{\alpha-1} &= (\alpha-1)\phi - \tau,
\end{align}
where $p_b^{(1)}$ is the probability for each $b$ token and $p_n^{(1)}$ is the probability for the $\phi$ token. Subtracting the first equation from the second:
\begin{equation}
    (p_n^{(1)})^{\alpha-1} - (p_b^{(1)})^{\alpha-1} = (\alpha-1)(\phi - b).
\end{equation}
From the normalization constraint $(n-1)p_b^{(1)} + p_n^{(1)} = 1$:
\begin{equation}
    p_b^{(1)} = \frac{1 - p_n^{(1)}}{n-1}.
\end{equation}
Substituting:
\begin{align}
    (p_n^{(1)})^{\alpha-1} - \left(\frac{1 - p_n^{(1)}}{n-1}\right)^{\alpha-1} = (\alpha-1)(\phi - b).
\end{align}
We know from the dense regime that $p_n^{(1)} \to p^*$, with $0 < p^* < 1$. Thus, as $n \to \infty$, the probability $p_n^{(1)}$ satisfies:
\begin{equation}
    p_n^{(1)} \to p^* \text{ where } p^* \text{ solves } (p^*)^{\alpha-1} - \lim_{n \to \infty}\left(\frac{1-p^*}{n-1}\right)^{\alpha-1} = (\alpha-1)(\phi - b).
\end{equation}
Since $\left(\frac{1-p^*}{n-1}\right)^{\alpha-1} \to 0$ as $n \to \infty$, we get:
\begin{equation}
    (p^*)^{\alpha-1} = (\alpha-1)(\phi - b)
\end{equation}
Therefore:
\begin{equation}
p^* = [(\alpha-1)(\phi - b)]^{\frac{1}{\alpha-1}} < 1.
\end{equation}
The inequality holds because $\phi - b < \frac{1}{\alpha-1}$, so $(\alpha-1)(\phi - b) < 1$.
The key insight is that as $n \to \infty$, the small probabilities on the $b$ tokens become negligible, but their sum $(n-1)p_b^{(1)} = 1 - p^*$ remains finite, so each individual $p_b^{(1)} \to 0$.

\paragraph{Second condition.} Choose the input sequence such that $\bar{\bm{v}} = \frac{1}{n-1}\sum_{i=1}^{n-1} \bm{v}_i \neq \bm{v}_a$, and use the construction:
\begin{itemize}[leftmargin=*]
    \item $\bm{v}_a = \sqrt{\phi} \bm{e}_1$ where $\bm{e}_1$ is the first standard basis vector.
    \item $\bm{v}_i = \frac{b}{\sqrt{\phi}} \bm{e}_1 + \bm{e}_2$ for $i = 1, \ldots, n-1$.
\end{itemize}

Note that this construction satisfies the logit constraints:
\begin{align}
\bm{v}_a^T \bm{v}_i &= \sqrt{\phi} \cdot \frac{b}{\sqrt{\phi}} + 0 \cdot 1 = b, \\
\bm{v}_a^T \bm{v}_a &= (\sqrt{\phi})^2 = \phi.
\end{align}

The average representation is:
\begin{equation}
\bar{\bm{v}} = \frac{1}{n-1}\sum_{i=1}^{n-1} \bm{v}_i = \frac{b}{\sqrt{\phi}} \bm{e}_1 + \bm{e}_2.
\end{equation}

Since $\bm{v}_a = \sqrt{\phi} \bm{e}_1$, we have:
\begin{equation}
\|\bar{\bm{v}} - \bm{v}_a\|_1 = \left\|\frac{b}{\sqrt{\phi}} \bm{e}_1 + \bm{e}_2 - \sqrt{\phi} \bm{e}_1\right\|_1 = \left|\frac{b - \phi}{\sqrt{\phi}}\right| + 1 > 0
\end{equation}

The bound is strictly positive because $b \neq \phi$ by the logit difference requirement, and because of the constant $+1$ term from the $\bm{e}_2$ component. Therefore, $\lim_{n\to\infty} \|\bar{\bm{v}} - \bm{v}_a \|_1 \not\to 0$.
For the case $d = 1$, we can use the simpler construction $v_i = \frac{b}{\sqrt{\phi}}$ and $v_a = \sqrt{\phi}$, where the non-collapse condition becomes verifying that $\frac{b}{\sqrt{\phi}} \neq \sqrt{\phi}$, which follows from $b \neq \phi$.
In contrast, as shown by \citet{barbero2024transformers}, the resulting representations become increasingly similar as $n \to \infty$ with softmax ($\alpha=1.0$), regardless of the input content, leading to representational collapse.
\end{proof}

\paragraph{Empirical Verification of Representational Preservation.}
To empirically validate our theoretical analysis, we conducted the following experiment:
we implemented the counterexample construction using identity projection matrices for queries/keys/values and tested two scenarios with $d=1$:
\begin{enumerate}[leftmargin=*]
    \item \textbf{Constant prefixes:} $b = 1$ and $\phi = 1.2$.
    \item \textbf{Random prefixes:} $b \sim \mathcal{U}(0, 1)$ and $\phi = 1.2$.
    
\end{enumerate}
Using a 6-layered transformer with residual connections, we experiment with increasing sequence lengths $n \in \{128, 256, \ldots, 16384\}$ and $\alpha \in \{1.0, 1.5, 1.75, 2.0\}$, and compute $\|\bm{v}_n^{(L)} - \bm{v}_{n+1}^{*(L)}\|_1$.
Figure~\ref{fig:rep_preservation_empirical} shows how representational differences evolve across increasing sequence lengths.
As established in the previous counterexample, while softmax attention inevitably leads to representational collapse in long contexts, $\alpha$-entmax can maintain distinct representations even as sequence length grows.\looseness=-1

\begin{figure}[t]
    \centering
    \includegraphics[width=0.6\textwidth]{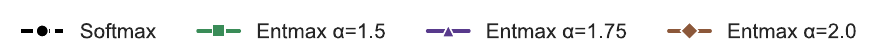}
    \\
    \includegraphics[width=0.49\textwidth]{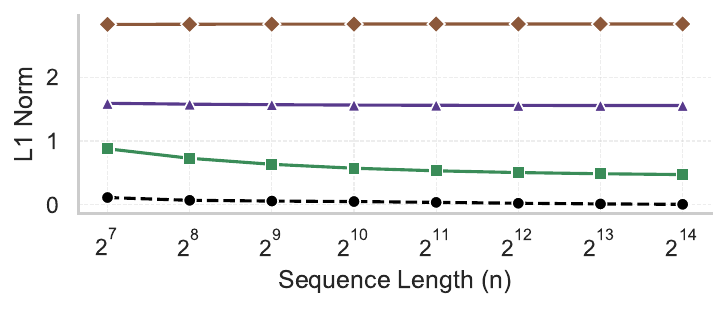}
    \hfill
    \includegraphics[width=0.49\textwidth]{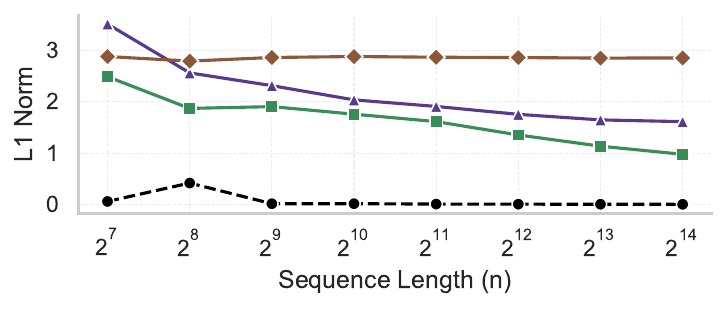}
    
    \caption{
    $L_1$ norm of representation difference between original sequence and extended sequence after 6 transformer layers, with constant prefix (left) and random prefix (right).
    With softmax ($\alpha = 1.0$), representation difference rapidly approaches zero, demonstrating inevitable collapse. $\alpha$-entmax ($\alpha > 1.0$) maintains bounded differences even at extreme sequence lengths. 
    }
    \label{fig:rep_preservation_empirical}
\end{figure}

\subsection{Proof of Proposition~\ref{prop:entmax_repcollapse_and_oversquashing} for Over-squashing Alleviation}
\label{subsec:app_oversquashing_alleviation}

The following proposition demonstrates how $\alpha$-entmax helps alleviate the problem of over-squashing, the exponential dilution of gradient signals through deep networks. 
For clarity, we follow the same set of assumptions as \citet{barbero2024transformers}---independence of attention coefficients from the values and approximation of layer normalization with a constant factor. 

\begin{proposition}[Over-squashing Alleviation with $\alpha$-entmax]
\label{thm:entmax-oversquash}
Consider an $L$-layer transformer-like model where the attention distribution for each head is computed by $\alpha$-entmax with $\alpha>1$. For a token $n$ in the final layer, let $\bm{v}_n^{(L)} \in \mathbb{R}^d$ be its hidden representation and 
\begin{equation}
\bm{y}_n = \mathrm{norm}_3 \bigl(\bm{v}_n^{(L)}\bigr)
\end{equation}
be its final normalized output. The sensitivity of $\bm{y}_n$ to the initial embedding $\bm{v}_i^{(0)}$ of token $i$ experiences less over-squashing with $\alpha$-entmax than with softmax attention. Specifically, if the support size of the $\alpha$-entmax attention distributions is $|\mathcal{S}_j^{(\ell)}| = s \ll n$ for tokens $j$ across all layers $\ell$, then the number of gradient paths from token $i$ to token $n$ is reduced from $\mathcal{O}(n^L)$ to $\mathcal{O}(s^L)$, and consequently helping to alleviate over-squashing by providing stronger gradient signals.
\end{proposition}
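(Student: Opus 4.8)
The plan is to expand the sensitivity $\partial \bm{y}_n / \partial \bm{v}_i^{(0)}$ via the multivariate chain rule through the $L$ layers and count the number of non-zero terms. First I would write, for each layer $\ell$, the update $\bm{v}_i^{(\ell)}$ as a function of the previous-layer representations $\{\bm{v}_j^{(\ell-1)}\}_{j \le i}$, noting that (under the stated assumptions that attention coefficients are treated as independent of the values and $\mathrm{norm}$ acts as a constant rescaling) the Jacobian $\partial \bm{v}_i^{(\ell)} / \partial \bm{v}_j^{(\ell-1)}$ is non-zero only when $j$ lies in the attention support $\mathcal{S}_i^{(\ell)}$ or $j = i$ (the residual term). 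Composing across layers, $\partial \bm{y}_n / \partial \bm{v}_i^{(0)}$ decomposes as a sum over directed paths $n = j_L \to j_{L-1} \to \cdots \to j_0 = i$ in the layered graph whose edges at level $\ell$ are exactly the non-zero entries of $\bm{P}^{(\ell)}$ (plus self-loops).

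The second step is the combinatorial count. For softmax, every $\mathcal{S}_j^{(\ell)}$ has size $n$ (all entries strictly positive), so the number of length-$L$ paths ending at $n$ is $\Theta(n^L)$, recovering the bound of \citet{barbero2024transformers}. For $\alpha$-entmax with $\alpha>1$, by the closed form \eqref{eq:entmax} only tokens with $(\alpha-1)z_j > \tau(\bm z)$ receive non-zero mass, so $|\mathcal{S}_j^{(\ell)}| = s$ by hypothesis; hence at each layer the out-degree (toward earlier tokens) is at most $s+1$ (support plus residual self-loop), and the path count is bounded by $(s+1)^L = \mathcal{O}(s^L)$. I would then argue that fewer paths, combined with the fact that each surviving path carries coefficients that are \emph{not} forced to vanish as $n \to \infty$ (this is exactly the non-vanishing property of Lemma~\ref{lemma:entmax-3cases} and the non-dispersion bound in Proposition~\ref{thm:dispersion-properties-entmax}, which keeps individual $p_{ij}$ bounded below on the support rather than decaying like $1/n$), means the sum defining the gradient is not diluted across exponentially-many tiny contributions; this is the sense in which over-squashing is alleviated.

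The main obstacle is making the comparison between $\mathcal{O}(s^L)$ and $\mathcal{O}(n^L)$ translate into an honest statement about \emph{gradient magnitude} rather than just path count: a priori, having fewer paths does not by itself imply a larger gradient norm, since cancellation or small per-path weights could intervene. I would handle this by invoking the same simplifying assumptions as \citet{barbero2024transformers} (so that per-edge Jacobians are, up to the constant normalization factor, just the attention weights times a fixed value-projection factor), under which the gradient norm is controlled by a weighted sum of path-products of attention weights; on the $\alpha$-entmax support these weights sum to $1$ over only $s$ terms (so a typical weight is $\Theta(1/s)$ rather than $\Theta(1/n)$), giving path-products of order $s^{-L}$ over $\mathcal{O}(s^L)$ paths — a bounded quantity — versus $n^{-L}$ over $\mathcal{O}(n^L)$ paths for softmax, where the mismatch between the number of paths and the decay rate is what produces dilution once the logits fail to separate enough (which, by Remark~\ref{remark:natural_concentration}, is the generic situation since the logit spread grows only like $\sqrt{\log n}$). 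A secondary, more bookkeeping-level obstacle is correctly accounting for the residual (self-loop) edges so that the recursion $N_\ell \le (s+1) N_{\ell-1}$ is justified and the base case $N_0 = 1$ is clean; this is routine once the layered-graph picture is set up.
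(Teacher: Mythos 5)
Your proposal follows essentially the same route as the paper's proof: chain-rule expansion with per-layer Jacobians equal to the attention weight plus a Kronecker-delta residual term, a path decomposition over the layered attention graph restricted to the $\alpha$-entmax support plus self-loops, the resulting count of $\mathcal{O}(s^L)$ versus $\mathcal{O}(n^L)$ paths, and a closing appeal to the non-vanishing attention property (Lemma~\ref{lemma:entmax-3cases}) to argue that surviving paths carry stronger weights. Your extra remarks quantifying typical per-path weights as $\Theta(1/s)$ versus $\Theta(1/n)$ simply make explicit the informal "stronger gradient signals" argument the paper also gives, so the two proofs are in substance the same.
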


\begin{proof}
We begin by expanding $\frac{\partial \bm{y}_n}{\partial \bm{v}_i^{(0)}}$ through the chain rule. Since $\bm{y}_n = \mathrm{norm}(\bm{v}_n^{(L)})$, and $\bm{v}_n^{(L)}$ is the output of $L$ transformer layers, we have:
\begin{equation}
\frac{\partial \bm{y}_n}{\partial \bm{v}_i^{(0)}} = \frac{1}{\beta_3} \frac{\partial \bm{v}_n^{(L)}}{\partial \bm{v}_i^{(0)}},
\end{equation}
where $\frac{1}{\beta_3}$ accounts for the normalization assumption. Expanding the gradient through all $L$ layers:
\begin{equation}
\frac{\partial \bm{v}_n^{(L)}}{\partial \bm{v}_i^{(0)}} = \sum_{k_1, k_2, \ldots, k_{L-1}} \frac{\partial \bm{v}_n^{(L)}}{\partial \bm{v}_{k_{L-1}}^{(L-1)}} \frac{\partial \bm{v}_{k_{L-1}}^{(L-1)}}{\partial \bm{v}_{k_{L-2}}^{(L-2)}} \cdots \frac{\partial \bm{v}_{k_1}^{(1)}}{\partial \bm{v}_i^{(0)}}.
\end{equation}

Due to causal masking, the only non-zero terms occur when $i \leq k_1 \leq k_2 \leq \cdots \leq k_{L-1} \leq n$. For each pair of adjacent layers, we have:
\begin{equation}
\frac{\partial \bm{v}_j^{(\ell+1)}}{\partial \bm{v}_k^{(\ell)}} = \left(\frac{\sigma_{\psi}}{\beta_2^{(\ell)}} + 1\right) \frac{\partial \bm{u}_j^{(\ell)}}{\partial \bm{v}_k^{(\ell)}},
\end{equation}
where $\bm{u}_j^{(\ell)} = \sum_{k \leq j} p_{j,k}^{(\ell)} \frac{\bm{v}_k^{(\ell)}}{\beta_1^{(\ell)}} + \bm{v}_j^{(\ell)}$, and $p_{j,k}^{(\ell)}$ are the attention probabilities computed using $\alpha$-entmax.
Thus, for $k \leq j$, we have:
\begin{equation}
\frac{\partial \bm{u}_j^{(\ell)}}{\partial \bm{v}_k^{(\ell)}} = \frac{p_{j,k}^{(\ell)}}{\beta_1^{(\ell)}} + \delta_{j,k}\bm{I},
\end{equation}
where $\delta_{j,k}$ is the Kronecker delta and reflects the contribution from the residual connection, which happens when $k = j$. 
For simplicity, let $\bar{p}_{j,k}^{(\ell)} = \frac{p_{j,k}^{(\ell)}}{\beta_1^{(\ell)}} + \delta_{j,k}$.
Taking the norm and combining all layers, we obtain:
\begin{equation}
\left\|\frac{\partial \bm{y}_n}{\partial \bm{v}_i^{(0)}}\right\| \leq C \sum_{k_1 \geq i} \sum_{k_2 \geq k_1} \cdots \sum_{k_{L-1} \geq k_{L-2}} \bar{p}_{n,k_{L-1}}^{(L-1)} \bar{p}_{k_{L-1},k_{L-2}}^{(L-2)} \cdots \bar{p}_{k_1,i}^{(0)},
\end{equation}
where $C = \frac{1}{\beta_3} \prod_{\ell=1}^L \left(\frac{\sigma_{\psi}}{\beta_2^{(\ell)}} + 1\right)$ is a constant independent of the sequence length.

The crucial distinction between softmax and $\alpha$-entmax lies in the attention probabilities $p_{j,k}^{(\ell)}$. For $\alpha$-entmax with $\alpha > 1$, many tokens receive exactly zero attention. 
Specifically, if we define the support set for the $j$-th token as
$\mathcal{S}_j^{(\ell)} = \{k \,|\, p_{j,k}^{(\ell)} > 0 \text{ and } k \leq j \}$, then $p_{j,k}^{(\ell)} = 0$ for all $k \notin \mathcal{S}_j^{(\ell)}$. 
Consequently, $\bar{p}_{j,k}^{(\ell)} = 0$ when $k \notin \mathcal{S}_j^{(\ell)}$ and $j \neq k$ (i.e., when there is no contribution from either attention or the residual connection).
This means we can rewrite our bound as:
\begin{equation}
\left\|\frac{\partial \bm{y}_n}{\partial \bm{v}_i^{(0)}}\right\| \leq C \sum_{k_1 \in \mathcal{T}_1} \sum_{k_2 \in \mathcal{T}_2(k_1)} \cdots \sum_{k_{L-1} \in \mathcal{T}_{L-1}(k_{L-2})} \bar{p}_{n,k_{L-1}}^{(L-1)} \bar{p}_{k_{L-1},k_{L-2}}^{(L-2)} \cdots \bar{p}_{k_1,i}^{(0)}.
\end{equation}
where we precisely characterize the gradient flow paths via the sets $\mathcal{T}_1$ and $\mathcal{T}_\ell(k_{\ell-1})$, which identify tokens that receive non-zero gradient contributions:
\begin{align}
\mathcal{T}_1 &= \{k \in \{i, i+1, \ldots, n\} : k \in \mathcal{S}_i^{(0)} \text{ or } k = i\}, \\
\mathcal{T}_\ell(k_{\ell-1}) &= \{k \in \{k_{\ell-1}, k_{\ell-1}+1, \ldots, n\} : k \in \mathcal{S}_{k_{\ell-1}}^{(\ell-1)} \text{ or } k = k_{\ell-1}\} \quad \text{for } \ell > 1.
\end{align}

These sets have the following meaning:
\begin{itemize}[leftmargin=*]
\item $\mathcal{T}_1$ represents the tokens in layer $\ell=1$ that can receive non-zero gradients from token $i$ in layer $\ell=0$, either through attention ($k \in \mathcal{S}_i^{(0)}$) or via the residual connection ($k = i$).

\item $\mathcal{T}_\ell(k_{\ell-1})$ represents the tokens in layer $\ell$ that can receive non-zero gradients from token $k_{\ell-1}$ in layer $\ell-1$, either through attention ($k \in \mathcal{S}_{k_{\ell-1}}^{(\ell-1)}$) or via the residual connection ($k = k_{\ell-1}$).
\end{itemize}

The causal constraint ($k \geq i$ for $\mathcal{T}_1$ and $k \geq k_{\ell-1}$ for $\mathcal{T}_\ell$) is explicitly incorporated in these definitions to account for the causal attention mask. Importantly, the cardinality of these sets directly corresponds to the potential gradient paths through the network. 
While softmax attention would yield $|\mathcal{T}_\ell(k_{\ell-1})| = n - k_{\ell-1} + 1$ paths from each token, $\alpha$-entmax's sparsity ensures $|\mathcal{T}_\ell(k_{\ell-1})| = |\mathcal{S}_{k_{\ell-1}}^{(\ell-1)}| + 1$.

Hence, if the support size $|\mathcal{S}_j^{(\ell)}| = s \ll n$ and assuming that $i=j$ tokens are always in the support due to the residual connections, this reduces the number of terms in the sum from $\mathcal{O}(n^L)$ to $\mathcal{O}(s^L)$, drastically reducing the total number of gradient paths.
Furthermore, as a direct consequence of Lemma~\ref{lemma:entmax-3cases}, since $\alpha$-entmax may concentrate probability mass on fewer tokens, the non-zero $p_{j,k}^{(\ell)}$ values can be larger than with softmax. 
In such cases, the gradients along the remaining paths will be stronger, helping to further alleviate over-squashing by concentrating gradient flow on important tokens.
\end{proof}

\paragraph{Empirical Verification of Over-squashing Alleviation.}
To empirically validate our theoretical prediction that $\alpha$-entmax reduces gradient paths from $\mathcal{O}(n^L)$ to $\mathcal{O}(s^L)$, we conducted a controlled experiment using a delayed copying task. In this task, the model is presented with a sequence consisting of a prefix of random tokens, followed by a separator token, after which it must reproduce the prefix tokens—creating a natural long-range dependency. We trained 8-layer transformers with different attention mechanisms on sequences of length 256, where the model must copy information from the beginning of the sequence to predict tokens after the separator.

Figure~\ref{fig:oversquashing_empirical} shows the average gradient norm per layer when backpropagating from the loss to each layer input. 
Softmax exhibits consistently low gradient norms across all layers, indicating severe gradient dilution. 
In contrast, $\alpha$-entmax variants maintain substantially stronger gradient signals, particularly in the earlier layers of the network. 
This confirms that gradient information propagates more effectively through the network with $\alpha$-entmax, preserving signal strength even when flowing through multiple layers.

The right part of Figure~\ref{fig:oversquashing_empirical} quantifies the average number of non-zero gradient paths per layer. With softmax, nearly all possible connections remain active within numerical precision, creating $\mathcal{O}(n^2)$ paths per layer. 
This compounds across layers, resulting in $\mathcal{O}(n^L)$ total paths. 
$\alpha$-entmax dramatically reduces the number of active paths, with stronger sparsity (higher $\alpha$ values) creating even sharper reductions. 
Notably, in the first layer less than 5 tokens are kept active on average.
This empirically confirms our theoretical claim that $\alpha$-entmax prunes the computational graph to $\mathcal{O}(s^L)$ paths.

\begin{figure}[t]
    \centering
    \hfill\includegraphics[width=0.49\textwidth]{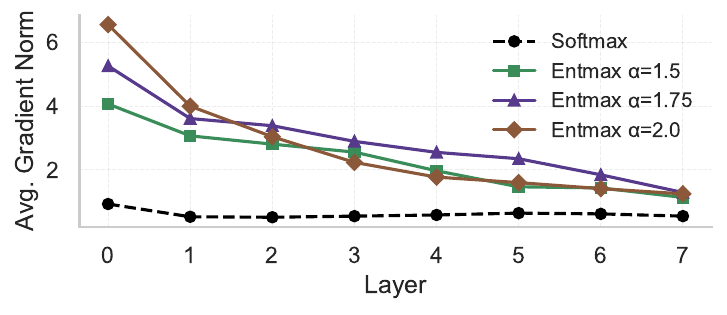}
    \hfill
    \includegraphics[width=0.49\textwidth]{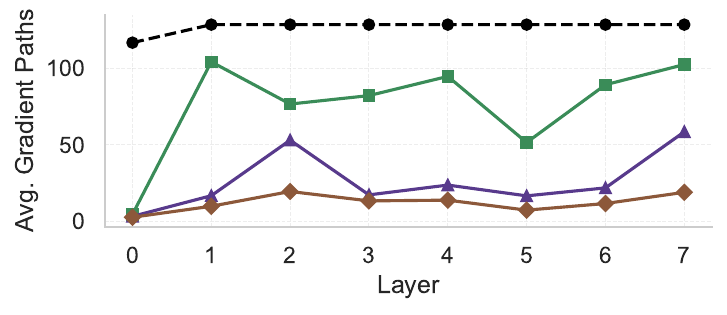}
    \caption{
    \textbf{Left:} Layer-wise gradient norms in an 8-layer transformer with sequence length $n=256$. $\alpha$-entmax with $\alpha>1.0$ maintain substantially stronger gradient signals, especially in earlier layers, compared to softmax. 
    This demonstrates how $\alpha$-entmax alleviates over-squashing by enabling more effective gradient flow through the network, with gradient norms up to 6x higher than softmax.
    \textbf{Right:} Visualization of the average number of non-zero gradient paths in a 8-layer transformer per layer, showing how $\alpha$-entmax creates fewer paths, which helps to alleviate over-squashing compared to softmax, which always selects all possible paths (within machine precision).
    }
    \label{fig:oversquashing_empirical}
\end{figure}

\section{Interaction with Positional Encoding}
\label{sec:app_interaction_pos_encoding}

Here, we study---theoretically and empirically---how $\alpha$-entmax with $\alpha > 1$ (sparse) interacts with different positional encoding methods. We follow the same model definition and notation as in \S\ref{subsec:model_definition}.

\subsection{No Positional Encoding (NoPE)}

We adopt the same set of assumptions as \citet{wu2025emergence}. Namely,
\begin{enumerate}
    \item [\textbf{A1}] There exists  $C\in\mathbb{R}$ such that  
    $\underset{t\in\mathbb{N}}{\max}\big\{\|\bm{W}_Q^{(t)}\|_2, \|\bm{W}_K^{(t)}\|_2\big\} \leq C$.
    \item [\textbf{A2}] The sequence $\big\{\|\prod_{t=0}^k \bm{W}_V^{(t)}\|_2\big\}_{k=0}^\infty$ is bounded.
\end{enumerate}

The first assumption tells that key and query weight matrices are bounded. 
The second assumption ensures that the node representations' trajectories across $t$ layers stays within a fixed interval $[-C^2, C^2]$.

\begin{proposition}[No Positional Encoding with $\alpha$-entmax]
\label{thm:position_bias_modification}
Let $\mathcal{G}$ be the causal mask graph and $\bm{P}^{(\ell)} \in \mathbb{R}^{n \times n}$ represent the causal attention matrix at layer $\ell$ computed row-wise using $\alpha$-entmax  ($\alpha$ subscript) with $\alpha \in (1,2]$ or softmax ($\text{soft}$ subscript).
In particular, let $p^{(\ell)}_{ij}$ denote the $(i, j)$ probability entry in $\bm{P}^{(\ell)}$.
Further, let $\tilde{\bm{P}}^{(\ell)} = \bm{P}^{(\ell)}\cdots\bm{P}^{(0)}$ represent the product of attention matrices through layer $\ell$, which we call the cumulative attention matrix, which captures how information from tokens in the input layer flows to tokens in layer $\ell$ through the composition of attention operations.

For softmax, \citet{wu2025emergence} have shown that 
$$
\lim_{\ell \to\infty} \tilde{p}^{(\ell)}_{\text{soft},i1} = 1
$$
for all $1 < i \leq n$.

Under assumptions \textbf{A1}-\textbf{A2}, for any indices $1 < j \leq i \leq n$, with $\alpha$-entmax we have:
\begin{enumerate}[leftmargin=*]
    \item \textbf{Edge deletion:} For every layer $\ell$, an edge $(j,i) \in \mathcal{G}$ is present in the \emph{dynamic} graph $\mathcal{G}^{(\ell)}_{\alpha}$ if and only if $(\alpha-1)z^{(\ell)}_{ij} > \tau(\bm{z}^{(\ell)}_i)$, where $z^{(\ell)}_{ij} = \langle\bm{q}^{(\ell)}_i, \bm{k}^{(\ell)}_j\rangle$ and $\tau(\bm{z}^{(\ell)}_i)$ is the entmax threshold. Otherwise, $p^{(\ell)}_{\alpha,ij} = 0$ and the edge is removed for that layer.

    \item \textbf{Modified attention patterns:} Unlike softmax, $\alpha$-entmax with $\alpha > 1$ creates sparse attention patterns by completely removing some connections. For tokens that survive the threshold, the behavior of their attention weights depends on:
    \begin{itemize}
        \item How far the token's logit sits above the threshold
        \item The relative differences between logits
    \end{itemize}
    
    For tokens that remain connected through the dynamic attention graph $\mathcal{G}^{(\ell)}_{\alpha}$ at all layers, the cumulative attention still exhibits a decay pattern:
    \begin{equation}
    \tilde{p}^{(\ell)}_{\alpha,ij} \leq C(1-\delta_{ij})^\ell
    \end{equation}
    where $\delta_{ij}$ depends on the connectivity pattern of the dynamic attention graph. This decay rate differs from softmax due to edge pruning and the redistribution of probability mass.
    
    \item \textbf{Disrupted limit behavior:} Unlike softmax, $\alpha$-entmax does not necessarily converge to the first token:
    \begin{enumerate}
        \item If for every layer $\ell$, there exists at least one directed path from token 1 to token $i$ in the dynamic graph $\mathcal{G}^{(\ell)}_{\alpha}$, then:
        \begin{equation}
        \lim_{\ell \to\infty} \tilde{p}^{(\ell)}_{\alpha,i1} = 1.
        \end{equation}
        
        \item If at some layer $\ell_0$, directed paths from token 1 to token $i$ are deleted in $\mathcal{G}^{(\ell_0)}_{\alpha}$, then:
        \begin{equation}
        0 \leq \lim_{\ell \to\infty} \tilde{p}^{(\ell)}_{\alpha,i1} < 1,
        \end{equation}
        with the exact limit determined by the structure of the strongly connected components formed in the dynamic graph.
    \end{enumerate}
\end{enumerate}
\end{proposition}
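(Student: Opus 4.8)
\textbf{Setup and overall strategy.} The plan is to prove the three claims in sequence, treating each layer's $\alpha$-entmax attention matrix as acting on a dynamic graph $\mathcal{G}^{(\ell)}_\alpha$ whose edge set is determined by the thresholding rule, and then analyzing the infinite product $\tilde{\bm{P}}^{(\ell)} = \bm{P}^{(\ell)}\cdots\bm{P}^{(0)}$ as a (time-inhomogeneous) Markov chain. The key structural observation is that assumptions \textbf{A1}--\textbf{A2} confine all logits to a bounded interval, so the threshold $\tau(\bm{z}^{(\ell)}_i)$ is finite and the ``gap above threshold'' $(\alpha-1)z^{(\ell)}_{ij} - \tau(\bm{z}^{(\ell)}_i)$ is controlled; this lets me reuse the machinery from Lemma~\ref{lemma:entmax-3cases} and Lemma~\ref{lemma:threshold-two-level}, and lean on the softmax analysis of \citet{wu2025emergence} for the contrasting statements.

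\textbf{Part 1 (Edge deletion).} This is essentially a restatement of the support characterization of $\alpha$-entmax from Eq.~\eqref{eq:entmax}. I would simply note that $p^{(\ell)}_{\alpha,ij} = [(\alpha-1)z^{(\ell)}_{ij} - \tau(\bm{z}^{(\ell)}_i)]_+^{1/(\alpha-1)}$, so $p^{(\ell)}_{\alpha,ij} > 0$ iff $(\alpha-1)z^{(\ell)}_{ij} > \tau(\bm{z}^{(\ell)}_i)$, and combine with the causal constraint $j \le i$. The only thing worth spelling out is that $\tau$ is well-defined and finite because the logits are bounded by \textbf{A1}--\textbf{A2} (the $\alpha$-entmax threshold always exists for finite inputs).

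\textbf{Part 2 (Decay bound).} Here I would argue that for a token pair $(i,j)$ that survives thresholding at \emph{every} layer, each attention row is a sub-stochastic-to-stochastic vector whose mass on any fixed predecessor is bounded away from $1$: boundedness of logits gives a uniform upper bound $z^{(\ell)}_{ij} \le C^2$ and a uniform lower bound on competing logits, so by the closed form there is a $\delta_{ij} > 0$ with $p^{(\ell)}_{\alpha, ii} \ge \delta_{ij}$ (residual-type self-mass) or more generally the probability is spread so that $\tilde p^{(\ell)}_{\alpha,ij}$ loses a $(1-\delta_{ij})$ factor per layer along the unique chain forcing $j \to i$. Iterating the matrix product and using that the connectivity pattern of $\mathcal{G}^{(\ell)}_\alpha$ is fixed on the surviving edges yields $\tilde p^{(\ell)}_{\alpha,ij} \le C(1-\delta_{ij})^\ell$. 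I would state $\delta_{ij}$ implicitly in terms of the minimal per-layer contraction and not try to optimize it.

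\textbf{Part 3 (Limit behavior) --- the main obstacle.} This is where the real work lies. For case (a), if token $1$ remains reachable from token $i$ in every $\mathcal{G}^{(\ell)}_\alpha$, then token $1$ is the unique sink of the causal DAG structure (no token can attend to the right, and token $1$ attends only to itself), so the same absorbing-chain argument as \citet{wu2025emergence} applies: the cumulative chain is ``weakly ergodic toward the absorbing state $1$,'' giving $\tilde p^{(\ell)}_{\alpha,i1} \to 1$. The delicate part is making the weak-ergodicity/Dobrushin-coefficient argument go through for the \emph{time-inhomogeneous} product when the per-layer mass on the path to $1$ is only bounded below by a layer-dependent quantity that need not be uniformly bounded away from $0$ --- I would need either a Borel--Cantelli-style summability condition or, more cleanly, invoke \textbf{A1}--\textbf{A2} to get a genuinely uniform lower bound on the relevant transition probabilities along the surviving path (since bounded logits $\Rightarrow$ bounded threshold gaps $\Rightarrow$ uniformly positive $\alpha$-entmax weights on any edge that is present at all). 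For case (b), once the paths from $1$ to $i$ are severed at some layer $\ell_0$, token $i$'s information flow is confined to the strongly connected components (in the reachability sense) that remain, which exclude token $1$; the limit is then the stationary absorption probability of that reduced chain, which lies strictly below $1$ and at least $0$, with the exact value read off from the SCC decomposition. I would present case (b) by a block-triangular decomposition of the limiting product and cite the SCC/Perron--Frobenius structure rather than computing the limit explicitly. The principal difficulty throughout is handling the \emph{inhomogeneity} of the dynamic graph --- edges can appear and disappear across layers --- so I would most likely add a mild standing assumption (e.g., the dynamic edge set stabilizes after finitely many layers, or the surviving-path mass is uniformly positive) to make the ergodic arguments rigorous, and flag this explicitly.
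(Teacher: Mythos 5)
Your outline follows the paper's proof quite closely for parts 1 and 2: part 1 is, as you say, just the support characterization of $\alpha$-entmax plus the causal mask, and part 2 in the paper is likewise an assertion that pruning and mass redistribution yield some per-pair contraction $0<\delta_{ij}<1$ giving $\tilde{p}^{(\ell)}_{\alpha,ij}\le C(1-\delta_{ij})^{\ell}$ (the paper also analyzes the probability ratios of surviving tokens relative to softmax, but does not derive $\delta_{ij}$ more explicitly than you propose). The divergence is in part 3(a): the paper does not set up an absorbing-state / weak-ergodicity (Dobrushin) argument at all. It simply applies the part-(ii) geometric decay to every entry $\tilde{p}^{(\ell)}_{\alpha,ij}$ with $j>1$ and uses that $\tilde{\bm{P}}^{(\ell)}$, being a product of row-stochastic matrices, has unit row sums, so the mass on column $1$ must tend to $1$. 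This is much more elementary than your route and, crucially, it means that once part 2 is granted, no uniform minorization of the transition probabilities, no Borel--Cantelli condition, and no extra standing assumption (edge-set stabilization, uniformly positive surviving-path mass) is needed; the paper's statement carries no such hypotheses. For part 3(b) you and the paper argue the same way: after layer $\ell_0$ the probability mass of row $i$ is confined to the strongly connected component excluding token $1$, giving $0\le\lim_{\ell\to\infty}\tilde{p}^{(\ell)}_{\alpha,i1}<1$ with the limit determined by the SCC structure.

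One concrete step in your plan would fail as written: the claim that \textbf{A1}--\textbf{A2} give ``uniformly positive $\alpha$-entmax weights on any edge that is present at all.'' Boundedness of the logits bounds the threshold, but an edge is present as soon as its logit exceeds the threshold by an arbitrarily small margin, so a surviving edge's weight $\bigl[(\alpha-1)z^{(\ell)}_{ij}-\tau(\bm{z}^{(\ell)}_i)\bigr]_+^{1/(\alpha-1)}$ can be arbitrarily close to zero; there is no uniform lower bound across layers. Since your ergodicity argument for 3(a) leans on exactly this minorization, you would be forced into your fallback of adding extra assumptions, which changes the proposition. The fix is simply to adopt the paper's route: prove (or, as the paper does, posit via the connectivity argument) the per-pair decay $\tilde{p}^{(\ell)}_{\alpha,ij}\le C(1-\delta_{ij})^{\ell}$ for all $j>1$ and conclude by row-stochasticity, which sidesteps the need for any uniform edge-weight bound.
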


\begin{proof}
\textbf{(i) Edge deletion:} For $\alpha$-entmax, a coefficient is non-zero if and only if its pre-activation exceeds the layer-specific threshold. The stated condition follows directly from the definition of $\alpha$-entmax:
\begin{equation}
p^{(\ell)}_{\alpha,ij} = [(\alpha-1)z^{(\ell)}_{ij} - \tau(\bm{z}^{(\ell)}_i)]_+^{\frac{1}{\alpha-1}},
\end{equation}
where $[x]_+ = \max(0,x)$.

\textbf{(ii) Modified attention patterns:}
For $\alpha$-entmax, the attention weights are determined by thresholding:
\begin{equation}
p_i^{\alpha} = [(\alpha-1)z_i - \tau]_+^{\frac{1}{\alpha-1}}.
\end{equation}

Let $\tau' = \frac{\tau}{\alpha-1}$ for simplicity. Consider two tokens with logits $z_i \geq z_j$ both in the support. The ratio of their probabilities is:
\begin{equation}
\frac{p^{\alpha}_j}{p^{\alpha}_i} = \left(\frac{z_j - \tau'}{z_i - \tau'}\right)^{\frac{1}{\alpha-1}}.
\end{equation}

For softmax, the ratio is:
\begin{equation}
\frac{p^{\text{soft}}_j}{p^{\text{soft}}_i} = e^{-(z_i-z_j)}.
\end{equation}

The comparison between these ratios depends on how far $z_j$ sits above $\tau'$. Let $\Delta = z_i - z_j$ and $b = z_j - \tau'$. Through algebraic manipulation, we can show:
\begin{equation}
\frac{p^{\alpha}_j}{p^{\alpha}_i} \gtrless \frac{p^{\text{soft}}_j}{p^{\text{soft}}_i} \Leftrightarrow b \gtrless \frac{\Delta}{e^{(\alpha-1)\Delta}-1}.
\end{equation}

This means the relative behavior of attention weights in $\alpha$-entmax compared to softmax depends on the specific configuration of logits and thresholds, rather than following a simple universal relationship.
Since tokens that remain connected must distribute probability mass among fewer options (due to pruning), there exists some $0 < \delta_{ij} < 1$ such that:
\begin{equation}
\tilde{p}^{(\ell)}_{\alpha,ij} \leq C(1-\delta_{ij})^\ell.
\end{equation}

The specific value of $\delta_{ij}$ depends on the connectivity pattern of the dynamic attention graph and may differ significantly from the softmax case due to edge pruning and probability redistribution.

\textbf{(iii) Disrupted limit behavior.}

\textbf{(a) Case where paths to token the first token persist:}

Suppose that for every layer $\ell$, there exists at least one directed path from token 1 to token $i$ in the dynamic graph $\mathcal{G}^{(\ell)}_{\alpha}$---
the unique ``center node'' as defined by \citet{wu2025emergence}.
For any token $j > 1$, the geometric decay established in part (ii) applies:
\begin{equation}
\tilde{p}^{(\ell)}_{\alpha,ij} \leq C(1-\delta_{ij})^\ell \to 0 \text{ as } \ell \to \infty.
\end{equation}

Since the row sums of $\tilde{\bm{P}}^{(\ell)}$ must equal 1 (as it is a product of row-stochastic matrices), and all entries $\tilde{p}^{(\ell)}_{\alpha,ij}$ with $j > 1$ approach 0, we have:
\begin{equation}
\lim_{\ell \to \infty} \tilde{p}^{(\ell)}_{\alpha,i1} = 1 - \lim_{\ell \to \infty} \sum_{j=2}^{i} \tilde{p}^{(\ell)}_{\alpha,ij} = 1.
\end{equation}

\textbf{(b) Case where paths to token 1 are cut:}
The key difference from softmax arises when edge deletion creates a configuration where token 1 cannot reach token $i$. Let $\ell_0$ be the first layer where  paths from token 1 to token $i$ are removed in $\mathcal{G}^{(\ell_0)}_{\alpha}$.
Let $\mathcal{C}_i^{(\ell_0)} \subset \{1,2,...,n\}$ be the set of tokens in the same strongly connected component as token $i$ in $\mathcal{G}^{(\ell_0)}_{\alpha}$. By our assumption, $1 \notin \mathcal{C}_i^{(\ell_0)}$.
At layer $\ell_0$, the attention probability is distributed only among tokens in $\mathcal{C}_i^{(\ell_0)}$:
\begin{equation}
\sum_{j \in \mathcal{C}_i^{(\ell_0)}} p^{(\ell_0)}_{\alpha,ij} = 1 \text{ and } p^{(\ell_0)}_{\alpha,i1} = 0.
\end{equation}

For all layers $\ell > \ell_0$, the multiplication by zero ensures $\tilde{p}^{(\ell)}_{\alpha,i1} = 0$, and therefore:
\begin{equation}    
0 \leq \lim_{\ell \to\infty} \tilde{p}^{(\ell)}_{\alpha,i1} < 1.
\end{equation}

The exact limit depends on the structure of the strongly connected components formed in the dynamic graph through subsequent layers. In fact, the limit can be exactly zero whenever all paths from token 1 to token $i$ are removed from $\mathcal{G}_\alpha$ since layer $\ell_0$.
\end{proof}

This proposition demonstrates a fundamental difference between softmax and $\alpha$-entmax transformers: while softmax inevitably leads to concentration of attention on the first token, $\alpha$-entmax can potentially disrupt this position bias through its ability to dynamically prune edges in the attention graph. This provides a theoretical foundation for using sparse attention mechanisms to mitigate position bias in transformer architectures.

\subsection{ALiBi}

We start by recalling the definition of ALiBi from~\citep{press2022train}.

\begin{definition}[ALiBi Positional Encoding]
\label{def:alibi_slopes}
Let $H$ be the number of attention heads. A general form of ALiBi bias for head $h \in \{1, 2, \ldots, H\}$ can be defined as:
\begin{equation}
b_{ij}^{(h)} = 
\begin{cases}
m_h(j - i) & \text{if } j \leq i \\
0 & \text{otherwise},
\end{cases}
\end{equation}
where $m_h \in \mathbb{R}_+$ is the slope parameter for head $h$, defined as $m_h = 2^{-\frac{h}{H}}$.
\end{definition}

Now, we consider how it interacts with $\alpha$-entmax.

\begin{proposition}[ALiBi with $\alpha$-entmax]
\label{theorem:alibi_sparsity}
Consider $\alpha$-entmax attention with the ALiBi bias from the above definition. Assume the raw attention logits $z^{(h)}_{ij} \in [z^{(h)}_{\min}, z^{(h)}_{\max}]$ for all $i,j$. Let
\begin{equation}
d_{\max}^{(h)} = \left\lfloor\frac{z^{(h)}_{\max} - z^{(h)}_{\min} + \frac{1}{\alpha-1}}{m_h} + 1\right\rfloor.
\end{equation}
Then, any token $j$ with $(i-j) > d^{(h)}_{\max}$ receives zero attention from token $i$ at head $h$.

\end{proposition}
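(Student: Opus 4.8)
## Proof Plan

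The plan is to exploit the sparsity criterion for $\alpha$-entmax: a token $j$ receives nonzero attention from token $i$ at head $h$ if and only if its biased logit $z^{(h)}_{ij} + b^{(h)}_{ij}$ lies strictly above the threshold, equivalently $(\alpha-1)(z^{(h)}_{ij} + b^{(h)}_{ij}) > \tau(\bm{z}^{(h)}_i)$, where the logit vector here is the ALiBi-shifted one. Since ALiBi simply adds $m_h(j-i) = -m_h(i-j)$ to the raw logit (for $j \le i$), a distant token's biased logit is pushed arbitrarily far down as $i-j$ grows, while the diagonal term $j=i$ keeps its raw logit (bias $0$). So the strategy is: (1) lower-bound the threshold $\tau(\bm{z}^{(h)}_i)$ by comparing against what the self-token alone contributes; (2) upper-bound the biased logit of a distant token $j$; (3) show the second falls below the first once $i-j$ exceeds $d^{(h)}_{\max}$.

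First I would pin down the threshold bound. The self-token $i$ has biased logit $z^{(h)}_{ii} \ge z^{(h)}_{\min}$. Since $\alpha$-entmax produces a probability distribution summing to one, and the support always includes the maximum-logit token, the probability mass $\left[(\alpha-1)(z^{(h)}_{ii}) - \tau\right]_+^{1/(\alpha-1)}$ assigned to the largest entry is at most $1$; more simply, since all entmax probabilities are at most $1$, for the top token we get $(\alpha-1) z^{(h)}_{\max,\text{biased}} - \tau \le 1$, hence $\tau \ge (\alpha-1) z^{(h)}_{\max,\text{biased}} - 1 \ge (\alpha-1) z^{(h)}_{\min} - 1$, using that the maximum biased logit is at least $z^{(h)}_{ii} \ge z^{(h)}_{\min}$ (the self-token always survives by $j=i$). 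Thus $\tau(\bm{z}^{(h)}_i) \ge (\alpha-1) z^{(h)}_{\min} - 1$.

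Next I would upper-bound a distant token's contribution. For $j$ with $i-j = d$, its biased logit is $z^{(h)}_{ij} - m_h d \le z^{(h)}_{\max} - m_h d$. Token $j$ survives only if $(\alpha-1)(z^{(h)}_{\max} - m_h d) > \tau \ge (\alpha-1) z^{(h)}_{\min} - 1$, i.e. $(\alpha-1)(z^{(h)}_{\max} - z^{(h)}_{\min}) + 1 > (\alpha-1) m_h d$, i.e. $d < \frac{z^{(h)}_{\max} - z^{(h)}_{\min} + \frac{1}{\alpha-1}}{m_h}$. Contrapositively, if $d = i-j$ exceeds this quantity — in particular if $d > d^{(h)}_{\max} = \left\lfloor \frac{z^{(h)}_{\max} - z^{(h)}_{\min} + \frac{1}{\alpha-1}}{m_h} + 1 \right\rfloor$ — then token $j$ fails the threshold and receives exactly zero attention. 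The floor-plus-one leaves a comfortable integer margin, so the strict inequalities cause no boundary trouble.

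The main obstacle is getting a clean, rigorous lower bound on $\tau(\bm{z}^{(h)}_i)$ without invoking the exact normalization equation, since $\tau$ depends on the full (shifted) logit vector whose entries we only bound coarsely. The bound $\tau \ge (\alpha-1)\,(\max\text{-biased-logit}) - 1$ follows from the fact that every $\alpha$-entmax probability is $\le 1$ (indeed the support has size $\ge 1$, so the top probability is $\le 1$), which is the cleanest route; one should double-check this holds with equality only in the degenerate single-support case and is otherwise strict, which is all that is needed. A secondary point to state carefully is that the self-token $i$ always has bias $0$ and therefore the maximum biased logit is genuinely $\ge z^{(h)}_{\min}$, so the chain of inequalities never degenerates. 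Everything else is a direct substitution.
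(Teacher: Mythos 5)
Your proof is correct and follows essentially the same route as the paper's: lower-bound the entmax threshold $\tau$ (the paper does this by solving the single-support extreme case, you by noting the self-token, whose ALiBi bias is zero, must receive probability at most $1$), upper-bound a distant token's shifted logit by $z^{(h)}_{\max} - m_h(i-j)$, and solve for the critical distance. Your threshold bound $\tau \ge (\alpha-1)z^{(h)}_{\min} - 1$ is in fact a slightly cleaner and more rigorous version of the paper's intermediate step (which invokes $z^{(h)}_{\max} \ge z^{(h)}_{\min} + m_h$), and it yields a marginally tighter cutoff; the floor-plus-one slack in $d^{(h)}_{\max}$ absorbs the difference in both arguments.
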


\begin{proof}
For a token $j$ to receive non-zero attention from token $i$ with $\alpha$-entmax, we require:
\begin{equation}
(\alpha-1)(z^{(h)}_{ij} + b_{ij}^{(i)}) > \tau,
\end{equation}
where $\tau$ is the threshold ensuring normalization.
Since slopes are positive ($m_i > 0$), we have:
\begin{equation}
(\alpha-1)(z^{(h)}_{ij} - (i-j)m_h) > \tau.
\end{equation}

In the extreme case where only the closest token receives attention (single-support case), we can solve for $\tau$ exactly:
\begin{equation}
1 = \left[(\alpha-1)(z^{(h)}_{\max}-\tau)\right]^{\frac{1}{\alpha-1}} \Rightarrow (\alpha-1)(z^{(h)}_{\max}-\tau) = 1 \Rightarrow \tau = z^{(h)}_{\max}-\frac{1}{\alpha-1}.
\end{equation}

Since the logits drop by at least $m_h$ per position due to the ALiBi bias, we have $z^{(h)}_{\max} \geq z^{(h)}_{\min} + m_h$, which gives us:
\begin{equation}
\tau \geq z^{(h)}_{\min} + m_h - \frac{1}{\alpha-1}.
\end{equation}

For a token $j$ with distance $(i-j)$, even with the maximum logit $z^{(h)}_{\max}$:
\begin{equation}
(\alpha-1)(z^{(h)}_{\max} - (i-j)m_h) \leq (\alpha-1)(z^{(h)}_{\min} - m_h) - 1.
\end{equation}

Solving for $(i-j)$:
\begin{equation}
    (i-j) \geq 
     \frac{z^{(h)}_{\max}-z^{(h)}_{\min}+m_h+\tfrac1{\alpha-1}}{m_h}.
\end{equation}
Hence,
\begin{equation}
d^{(h)}_{\max} = \left\lfloor\frac{z^{(h)}_{\max} - z^{(h)}_{\min} + \frac{1}{\alpha-1}}{m_h} + 1\right\rfloor.
\end{equation}
\end{proof}

This proposition establishes that with $\alpha$-entmax and ALiBi positional bias, there exists a head-dependent hard cutoff distance $d^{(h)}_{\max}$ beyond which tokens receive exactly zero attention. 
This creates an adaptive but bounded attention window that depends on both content relevance ($z^{(h)}_{\max}-z^{(h)}_{\min}$) and the sparsity parameter $\alpha$, naturally limiting the effective context without requiring explicit truncation. 
This property allows the model to focus computational resources on a relevant window of tokens, which can be particularly valuable for efficiently processing long documents.

\subsection{RoPE}
\label{subsec:app_rope}

To analyze the interaction between RoPE~\citep{su2024roformer} and $\alpha$-entmax, we first establish our notation. Following \citet{barbero2025round}, we consider queries $\bm{q}_i \in \mathbb{R}^d$ and keys $\bm{k}_j \in \mathbb{R}^d$, where $i$ and $j$ are token positions in the sequence. RoPE decomposes these vectors into $d/2$ two-dimensional chunks, denoted as $\bm{q}_i^{(k)} \in \mathbb{R}^2$ and $\bm{k}_j^{(k)} \in \mathbb{R}^2$ for $k \in \{1, \ldots, d/2\}$. Each chunk rotates at a different frequency $g_k = \theta^{-2(k-1)/d}$, where $\theta$ (typically 10,000) is the base wavelength parameter. 

RoPE applies position-dependent rotations through matrices $\rho(g_k)^i$ to transform the original queries and keys. The resultant raw attention logit between query position $i$ and key position $j$ is:
In a standard transformer with RoPE, 
\begin{equation}
\label{eq:rope_scores}
z_{ij} = \sum_{k=1}^{d/2} \langle \bm{q}_i^{(k)}, \rho(g_k)^{j-i} \bm{k}_j^{(k)} \rangle,
\end{equation}
where $\rho(g_k)$ is the 2D rotation matrix with frequency $g_k$.

\paragraph{Query-Key Interaction with RoPE.}

Let $\phi_{ijk}$ be the angle between the original (unrotated) vectors $\bm{q}_i^{(k)}$ and $\bm{k}_j^{(k)}$. That is:
\begin{align}
\cos(\phi_{ijk}) = \frac{\langle \bm{q}_i^{(k)}, \bm{k}_j^{(k)} \rangle}{\|\bm{q}_i^{(k)}\|_2 \|\bm{k}_j^{(k)}\|_2}.
\end{align}

For a single 2D chunk, since rotation preserves vector magnitudes, the contribution to the raw score is:
\begin{align}
\label{eq:chunk_contribution}
\langle \bm{q}_i^{(k)}, \rho(g_k)^{j-i} \bm{k}_j^{(k)} \rangle &= \|\bm{q}_i^{(k)}\|_2 \|\rho(g_k)^{j-i}\bm{k}_j^{(k)}\|_2 \cos(\phi_{ijk} + g_k(j-i))\\
&= \|\bm{q}_i^{(k)}\|_2 \|\bm{k}_j^{(k)}\|_2 \cos(\phi_{ijk} + g_k(j-i)),
\end{align}
where $\phi_{ijk}$ is the original angle between $\bm{q}_i^{(k)}$ and $\bm{k}_j^{(k)}$.
As shown in Proposition 3.1 of \citet{barbero2025round}, RoPE allows for maximal attention at any arbitrary distance. However, RoPE combined with $\alpha$-entmax creates a hard boundary on attention distance due to the thresholding effect, which we analyze next.

\paragraph{Approximation for Small Angles.}

First, note that we can use the angle-sum expansion for cosine as follows:
\begin{equation}
    \cos(\phi_{ijk} + g_k(j-i)) = \cos(\phi_{ijk})\cos(g_k(j-i)) - \sin(\phi_{ijk})\sin(g_k(j-i)).
\end{equation}

Further, note that for small angles $\phi_{ijk}$ we can use a second-order Taylor expansion for $g_k(j-i)$:\footnote{$\cos(x) \approx 1 - x^2/2 + \text{higher order terms}.$}
\begin{equation}
    \cos(g_k(j-i)) \approx 1 - \frac{g_k^2(j-i)^2}{2}.
\end{equation}

Finally, applying the dot-product:
\begin{align}
    \label{eq:rope_decay}
    z_{ij} &\approx \sum_{k=1}^{d/2} \|\bm{q}_i^{(k)}\|_2 \|\bm{k}_j^{(k)}\|_2 \cos(\phi_{ijk}) - \sum_{k=1}^{d/2} \|\bm{q}_i^{(k)}\|_2 \|\bm{k}_j^{(k)}\|_2 \cos(\phi_{ijk})\frac{g_k^2(j-i)^2}{2} + \sin \text{terms} \\
    &\approx z_{\max} - \sum_{k=1}^{d/2} c_k g_k^2 (i-j)^2.
\end{align}
Here, we simplified the last step by focusing on the quadratic decay from the cosine term while omitting the sine terms $-\sin(\phi_{ijk})g_k(j-i)$. 
For semantically aligned tokens where $\phi_{ijk} \approx 0$, the sine term's contribution is minimal since $\lim_{x\to 0} \sin(x) = 0$.

\begin{proposition}[Maximum Attention Distance for RoPE (Small-Angle Regime)]
\label{proposition:max_distance}
Let $g_{\max} = \max_{k} g_k$ be the maximum frequency in RoPE. Within the \emph{small-angle domain} where
\[
  |i-j| \leq \frac{\pi}{2g_{\max}},
\]
so that all rotational angles $\theta = g_k(i-j)$ satisfy $|\theta| \leq \frac{\pi}{2}$, and assuming 
$z_{\max} > \frac{\tau(\bm{z}_i)}{\alpha-1}$, there exists a \emph{critical distance}
$d_{\max}$ beyond which tokens receive exactly zero attention:
\begin{equation}
\label{eq:d_max}
d_{\max} = \left\lfloor\sqrt{\frac{z_{\max} - \frac{\tau(\bm{z}_i)}{\alpha-1}}{\sum_{k=1}^{d/2} c_k g_k^2}}\right\rfloor.
\end{equation}
\end{proposition}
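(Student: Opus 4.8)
The plan is to combine the small-angle quadratic approximation of the RoPE logits, already derived in Eq.~\eqref{eq:rope_decay}, with the thresholding characterization of $\alpha$-entmax support from the definition in Eq.~\eqref{eq:entmax}. The starting point is the observation that a token $j$ receives non-zero attention from token $i$ if and only if $(\alpha-1)z_{ij} > \tau(\bm{z}_i)$, i.e.\ $z_{ij} > \tau(\bm{z}_i)/(\alpha-1)$. So the entire argument reduces to: find the largest $|i-j|$ for which the approximation $z_{ij} \approx z_{\max} - \sum_{k=1}^{d/2} c_k g_k^2 (i-j)^2$ can still exceed $\tau(\bm{z}_i)/(\alpha-1)$.

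First I would set $r = i-j$ and write the support condition, using the approximation, as
\begin{equation}
z_{\max} - \sum_{k=1}^{d/2} c_k g_k^2 r^2 > \frac{\tau(\bm{z}_i)}{\alpha-1}.
\end{equation}
Solving this quadratic inequality for $r$ gives $r^2 < \bigl(z_{\max} - \tfrac{\tau(\bm{z}_i)}{\alpha-1}\bigr) / \bigl(\sum_k c_k g_k^2\bigr)$, and the hypothesis $z_{\max} > \tau(\bm{z}_i)/(\alpha-1)$ guarantees the right-hand side is strictly positive so the square root is well-defined. Taking the positive root and then the floor (since $r$ is a non-negative integer distance) yields exactly the claimed $d_{\max}$ in Eq.~\eqref{eq:d_max}. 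Consequently, any $j$ with $i-j > d_{\max}$ fails the support condition and receives exactly zero attention.

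Second, I would make explicit that this derivation is valid only inside the small-angle domain $|i-j| \le \pi/(2g_{\max})$ stated in the proposition: outside this range the second-order Taylor expansion $\cos(g_k r) \approx 1 - g_k^2 r^2/2$ is no longer a reliable lower/upper bound on the cosine, so the monotone quadratic decay picture breaks down. I would note that $d_{\max}$ should be interpreted as $\min\{d_{\max}, \lfloor \pi/(2g_{\max})\rfloor\}$ if one wants a rigorous cutoff; within the regime where the approximation holds, the thresholding argument is exact up to the approximation error, and that is the content being claimed. I would also briefly remark why the omitted sine terms $-\sin(\phi_{ijk}) g_k(j-i)$ are negligible here — for semantically aligned chunks $\phi_{ijk}\approx 0$, so $\sin(\phi_{ijk})\approx 0$ and these terms do not overturn the quadratic decay.

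The main obstacle is not the algebra (which is a one-line quadratic solve) but the rigor of the approximation step: strictly speaking $z_{ij}$ is only \emph{approximately} the quadratic expression, so the cleanest honest statement is that $d_{\max}$ is the cutoff for the small-angle model of the logits, and one should either (i) bound the Taylor remainder to turn "$\approx$" into a genuine two-sided inequality on the restricted domain, or (ii) state the proposition as holding for the approximated logits and flag that the true cutoff differs by a controlled error term. I would take route (ii) for brevity, matching the paper's style, since the qualitative point — that RoPE combined with $\alpha$-entmax produces a \emph{finite} hard attention radius, in sharp contrast to plain RoPE which (per Proposition~3.1 of \citet{barbero2025round}) permits maximal attention at arbitrary distance — is already fully captured by the quadratic-decay analysis.
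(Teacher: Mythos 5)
Your proposal is correct and follows essentially the same route as the paper's proof: substitute the small-angle quadratic approximation of the RoPE logits into the $\alpha$-entmax support condition $z_{ij} > \tau(\bm{z}_i)/(\alpha-1)$, solve the resulting quadratic inequality in $|i-j|$, and take the floor of the positive root. Your additional remarks on the Taylor-remainder caveat and the negligibility of the sine terms are sensible clarifications of the same argument rather than a different approach.
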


\begin{proof}
For a token to receive non-zero attention under $\alpha$-entmax, we must have $z_{ij} > \frac{\tau(\bm{z}_i)}{\alpha-1}$. Substituting the decay pattern from Equation \ref{eq:rope_decay}, which is valid within the small-angle domain where cosine can be approximated using a Taylor expansion $\cos\theta \approx 1-\frac{\theta^2}{2}$:
\begin{equation}
z_{\max} - \sum_{k=1}^{d/2} c_k g_k^2 (i-j)^2 > \frac{\tau(\bm{z}_i)}{\alpha-1}.
\end{equation}

Rearranging for $(i-j)^2$:
\begin{equation}
(i-j)^2 < \frac{z_{\max} - \frac{\tau(\bm{z}_i)}{\alpha-1}}{\sum_{k=1}^{d/2} c_k g_k^2}.
\end{equation}

Taking the floor of the square root gives us $d_{\max}$. 
\end{proof}

Note that this analysis applies to the first attention window. Due to the periodicity of rotation operations, at distances beyond $\frac{\pi}{g_k}$ for any frequency component $k$, the attention pattern may exhibit additional windows of non-zero attention, which we address next.

\begin{proposition}[Frequency-Specific Cutoff for RoPE]
\label{proposition:freq_cutoff}
For each frequency component $k$, let $\beta_k = \frac{\tau(\bm{z}_i)}{(\alpha-1) \|\bm{q}_i^{(k)}\|_2 \|\bm{k}_j^{(k)}\|_2}$. Since at least one token must receive non-zero attention for $\alpha$-entmax to yield a valid probability distribution, $\beta_k \leq 1$ must hold for at least one component. Assuming $\beta_k \in [-1,1]$ (covering all possible cosine values), there exists a sequence of distances $\{d_{k,n}\}_{n=0}^{\infty}$ at which its contribution to attention crosses the threshold. 

The first such distance is:
\begin{equation}
\label{eq:d_k}
d_{k,0} = \left\lfloor\frac{1}{g_k}\arccos\left(\frac{\tau(\bm{z}_i)}{(\alpha - 1)\|\bm{q}_i^{(k)}\|_2 \|\bm{k}_j^{(k)}\|_2}\right)\right\rfloor.
\end{equation}

Due to the periodicity of cosine, subsequent threshold crossings occur at approximately:
\begin{equation}
d_{k,n} \approx \frac{2\pi n \pm d_{k,0}}{g_k}, \quad n \in \mathbb{N}.
\end{equation}

Furthermore, $d_{k,0}$ is non-increasing in $\alpha$ and inversely proportional to $g_k$.
\end{proposition}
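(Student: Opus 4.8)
The plan is to isolate a single RoPE frequency component and ask, as the relative distance $d := i-j \ge 0$ varies, when its contribution to the raw logit can still clear the $\alpha$-entmax support threshold. Starting from Equation~\eqref{eq:chunk_contribution}, chunk $k$ contributes $\|\bm{q}_i^{(k)}\|_2\|\bm{k}_j^{(k)}\|_2\cos(\phi_{ijk}+g_k(j-i))$ to $z_{ij}$; specializing to the semantically aligned regime $\phi_{ijk}\approx 0$ used throughout this subsection and using that cosine is even, this is $\|\bm{q}_i^{(k)}\|_2\|\bm{k}_j^{(k)}\|_2\cos(g_k d)$. From the $\alpha$-entmax definition a coefficient is nonzero iff its logit exceeds $\tau(\bm{z}_i)/(\alpha-1)$, so the component sits above threshold precisely when $\cos(g_k d) > \tau(\bm{z}_i)/\big((\alpha-1)\|\bm{q}_i^{(k)}\|_2\|\bm{k}_j^{(k)}\|_2\big) = \beta_k$. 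Since $\alpha$-entmax always keeps the top-logit token in its support, the normalization is consistent only if $\beta_k\le 1$ for at least one $k$; we take $\beta_k\in[-1,1]$ as assumed, so $\arccos(\beta_k)$ is well defined.

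Next I would solve $\cos(g_k d) > \beta_k$ over $d\ge 0$. Because $\cos$ is $2\pi$-periodic and strictly decreasing on $[0,\pi]$, we have $\cos\theta>\beta_k$ iff $\theta \bmod 2\pi\in[0,\arccos\beta_k)\cup(2\pi-\arccos\beta_k,2\pi)$, so the admissible distances form the union of windows $\bigcup_{n\ge 0}\big(\tfrac{2\pi n-\arccos\beta_k}{g_k},\tfrac{2\pi n+\arccos\beta_k}{g_k}\big)\cap[0,\infty)$. The window containing $d=0$ (nonempty since $\cos 0=1>\beta_k$ whenever $\beta_k<1$) terminates at $d=\arccos(\beta_k)/g_k$; restricting to integer token distances by taking the floor and substituting $\beta_k$ recovers $d_{k,0}=\big\lfloor \tfrac1{g_k}\arccos\big(\tfrac{\tau(\bm{z}_i)}{(\alpha-1)\|\bm{q}_i^{(k)}\|_2\|\bm{k}_j^{(k)}\|_2}\big)\big\rfloor$, i.e.\ Equation~\eqref{eq:d_k}. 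The boundaries of the later windows, $g_k d = 2\pi n\pm\arccos(\beta_k)$, are the subsequent threshold crossings; identifying the angular half-width $\arccos(\beta_k)$ with $g_k d_{k,0}$ up to the floor rounding gives the stated approximate form $d_{k,n}\approx (2\pi n\pm d_{k,0})/g_k$.

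The monotonicity claims then follow by inspecting the closed form. Inverse proportionality in $g_k$ is immediate from the $1/g_k$ prefactor once the vector norms and the threshold are held fixed. For non-increasingness in $\alpha$, I would invoke the standard property that $\alpha$-entmax becomes sparser as $\alpha$ grows, so the effective raw-logit threshold $\tau(\bm{z}_i)/(\alpha-1)$ is non-decreasing in $\alpha$ (equivalently, the support shrinks; cf.\ \citet{peters-etal-2019-sparse}); hence $\beta_k$ is non-decreasing in $\alpha$, $\arccos(\beta_k)$ is non-increasing since $\arccos$ is decreasing, and therefore $d_{k,0}$ is non-increasing in $\alpha$.

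The main obstacle is not a calculation but a modeling point: a token's membership in the $\alpha$-entmax support is governed by the \emph{sum} of all $d/2$ chunk contributions, not by any single chunk, so strictly speaking $d_{k,0}$ characterizes only the distance beyond which frequency $k$ \emph{on its own} can no longer push a token over its share of the threshold. The $\phi_{ijk}\approx 0$ assumption is what makes the per-component geometry transparent, and the honest reading of the proposition is that it describes the periodic window structure that frequency $g_k$ imposes on attention, rather than an exact support boundary. A secondary subtlety to state plainly is the meaning of ``crosses'': at $d_{k,0}$ the component is still (weakly) above $\beta_k$ while at $d_{k,0}+1$ it has dropped below, so $d_{k,0}$ is the last integer distance inside the first admissible window.
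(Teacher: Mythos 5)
Your proposal is correct and follows essentially the same route as the paper's proof: isolate the single chunk's contribution $\|\bm{q}_i^{(k)}\|_2\|\bm{k}_j^{(k)}\|_2\cos(g_k(j-i))$ under the aligned regime $\phi_{ijk}\approx 0$, compare it to $\tau(\bm{z}_i)/(\alpha-1)$, invert the cosine to get $d_{k,0}$, invoke $2\pi$-periodicity for the later crossings, and derive the monotonicity claims from the fact that $\tau(\bm{z}_i)/(\alpha-1)$ is non-decreasing in $\alpha$ and from the explicit $1/g_k$ factor. Your added caveats (per-component versus summed-logit threshold, and the floor/rounding reading of ``crosses'') only make explicit assumptions the paper's own proof also relies on, so no gap.
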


\begin{proof}
For a single frequency component $k$, the contribution to the raw score from Equation \ref{eq:chunk_contribution} is:
\begin{equation}
\langle \bm{q}_i^{(k)}, \rho(g_k)^{j-i} \bm{k}_j^{(k)} \rangle = \|\bm{q}_i^{(k)}\|_2 \|\bm{k}_j^{(k)}\|_2 \cos(g_k(j-i) + \phi_{ijk}).
\end{equation}

For this to exceed the attention threshold under optimal alignment ($\phi_{ijk} = 0$, which maximizes the contribution):
\begin{align}
(\alpha-1)\|\bm{q}_i^{(k)}\|_2 \|\bm{k}_j^{(k)}\|_2 \cos(g_k(j-i)) &> \tau(\bm{z}_i) \\
\cos(g_k(j-i)) &> \frac{\tau(\bm{z}_i)}{(\alpha - 1)\|\bm{q}_i^{(k)}\|_2 \|\bm{k}_j^{(k)}\|_2}.
\end{align}
Taking the arccos of both sides and dividing by $g_k$ gives the first threshold crossing distance $d_{k,0}$:
\begin{equation}
    |i-j| > \frac{1}{g_k}\arccos\left(\frac{\tau(\bm{z}_i)}{(\alpha - 1) \|\bm{q}_i^{(k)}\|_2 \|\bm{k}_j^{(k)}\|_2}\right).
\end{equation}

Due to the $2\pi$-periodicity of cosine, subsequent threshold crossings occur at distances $\frac{2\pi n \pm d_{k,0}}{g_k}$ for integers $n > 0$.
Moreover, $\tau(\bm{z}_i)/(\alpha-1)$ grows as $\alpha$ increases, making the arccos term smaller and consequently decreasing $d_{k,0}$. The inverse proportionality to $g_k$ is evident directly from the formula.
\end{proof}

These theoretical analyses of RoPE with $\alpha$-entmax reveal two interesting takeaways.
First, different frequency components in RoPE naturally create attention windows of different widths. High-frequency components (large $g_k$) produce very narrow windows focused on local context, while low-frequency components (small $g_k$) enable attention over longer distances.
Second, the sparsity pattern induced by the combination of RoPE and $\alpha$-entmax is not uniform but varies across frequency components, creating a more complex attention structure than simple distance-based decay methods like ALiBi.

\subsection{Comparison between Positional Encoding Methods with $\alpha$-entmax}
\label{subsec:app_comparison_pos_encodings}

The positional encoding scheme used in a transformer has significant implications for how attention behaves over long contexts. Our theoretical approach from previous subections reveals that $\alpha$-entmax interacts with positional encodings in ways that fundamentally alter attention behavior compared to softmax. We summarize our theoretical findings next, along with an empirical analysis within a controlled experimental setting.

\paragraph{Theoretical Analysis.} With NoPE~\citep{kazemnejad2023impact}, softmax transformers (without MLP layers) develop an implicit bias towards the first tokens as depth increases, as shown by \citet{wu2025emergence}. 
$\alpha$-entmax disrupts this behavior through its ability to create disconnected attention graphs.
By assigning exactly zero attention to some connections, it may remove the implicit bias encouraging attention to concentrate on early tokens. 

When combined with ALiBi~\citep{press2022train}, $\alpha$-entmax transforms the smooth linear decay into a hard attention window. For tokens separated by distance $d > d_{\max}^{(h)}$, where $d_{\max}^{(h)} = \left\lfloor\frac{1}{m_h}(z_{\max} - z_{\min} + \frac{1}{\alpha-1}) + 1\right\rfloor$, attention weights become exactly zero. This creates an adaptive but bounded attention window that depends on the input ($z_{\max}-z_{\min}$) and the sparsity parameter $\alpha$.

With RoPE~\citep{su2024roformer}, $\alpha$-entmax induces frequency-dependent sparsity. Each frequency component $k$ has a critical distance $d_k$ beyond which its contribution falls below the attention threshold. This creates a multi-scale attention pattern where nearby tokens interact through all frequency components, while distant tokens interact only through low-frequency components.

These interactions between positional encodings and $\alpha$-entmax have important practical implications, and further motivate the introduction of our hybrid approach, NAPE (NoPE + ALiBi).
By creating natural, content-adaptive attention windows, NAPE combine the benefits of sparse, focused attention with awareness of token positions, allowing models to effectively balance local and global information processing. 
This provides a principled alternative to manually designed sparse attention patterns like sliding windows or dilated attention, with the advantage of adapting to content relevance rather than using fixed patterns.

\begin{figure}[t]
    \centering
    \includegraphics[width=\linewidth]{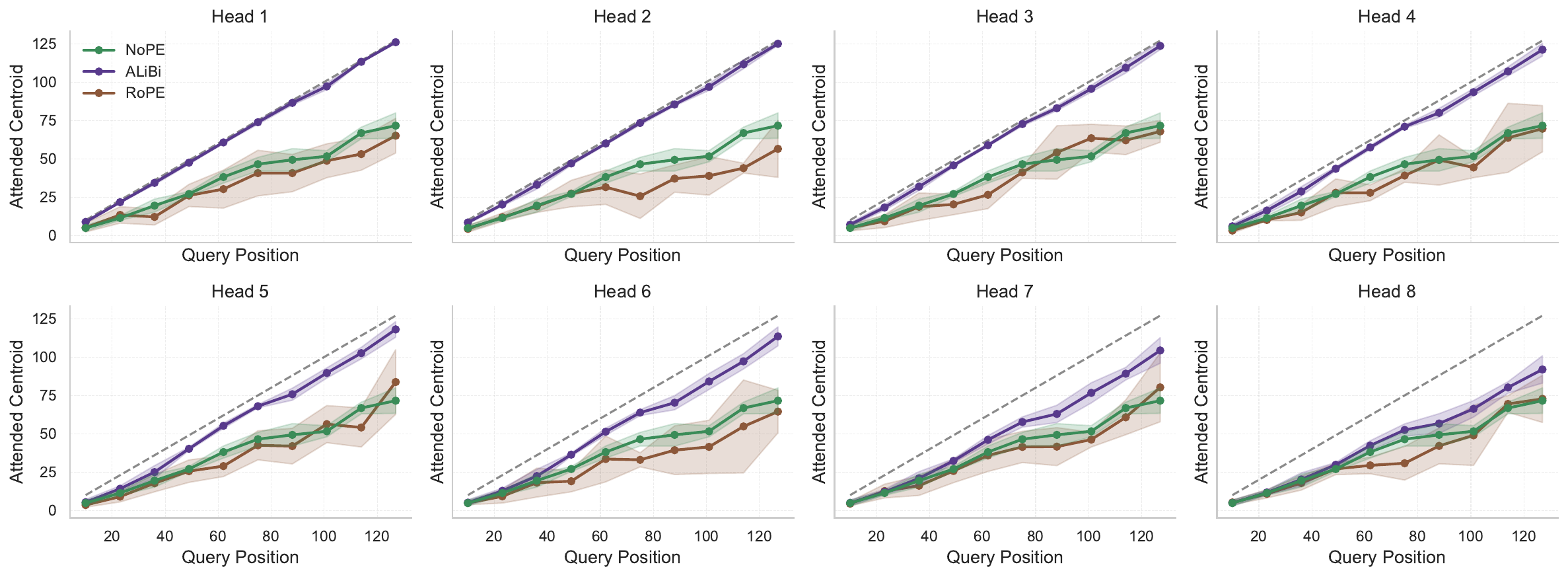}
    \caption{Per-head attention centroids across different positional encoding methods. Each panel represents one attention head's behavior. The dashed line shows the identity function (attending to self). NoPE heads consistently exhibit an early-token bias, ALiBi heads maintain proximity-based attention, and RoPE heads display more varied patterns with irregular fluctuations.}
    \label{fig:per_head_centroids}
\end{figure}

\paragraph{Empirical Analysis Setup.}
We simulated a sequence of length $n=128$ with attention heads using $\alpha$-entmax ($\alpha=1.5$). For each query position $i$, we generated a vector of raw attention scores (logits) for all positions $j \leq i$ according to:
\begin{equation}
z_{ij} = z_{ij}^{\text{base}} + z_{ij}^{\text{prox}} + z_{ij}^{\text{noise}},
\end{equation}
where $z_{ij}^{\text{base}} \sim \mathcal{U}(Z_{\min}, Z_{\max})$ represents the base content-based affinity, $z_{ij}^{\text{prox}} = 0.2(Z_{\max} - Z_{\min})(1 - 0.5\frac{|j-i|}{i})$ introduces a proximity bias, and $z_{ij}^{\text{noise}} \sim \mathcal{N}(0, \sigma^2)$ adds Gaussian noise.
The parameters were set to $Z_{\min}=-5.0$, $Z_{\max}=5.0$, and $\sigma=0.5$. This formulation models a realistic mixture of content-based attention (random component), a mild inherent bias toward nearby tokens (proximity component), and natural variation (noise component).
We then applied different positional encoding methods to modify these base logits.
Finally, we calculated the attention distribution using $\alpha$-entmax and determined the attention centroid for each query position $i$ as
$\text{centroid}_i = \sum_{j=1}^{i} j \cdot p_{ij}$,
where $p_{ij} = \alpha\text{-entmax}(\bm{z}_{i})_j$.

\paragraph{Empirical Results.} The head-specific analysis in Figure~\ref{fig:per_head_centroids} reveals distinct behaviors across positional encoding methods when combined with $\alpha$-entmax.
While seems to NoPE exhibit a \emph{weak} bias towards earlier positions, it also shows a modest variability, indicating more disperse attention. 
ALiBi clearly creates a consistent recency bias, with centroids following slightly below the identity line, maintaining low variability that indicates focused attention. 
RoPE demonstrates centroid patterns similar to NoPE but with lower entropy (higher variability), suggesting a focused attention in more distant positions.
These observations may explain why NAPE---the hybrid NoPE+ALiBi---works well in practice, since ALiBi heads provide consistent positional structure focused on recent context, while NoPE heads can contribute complementary via early-token and semantic focus, creating a more balanced attention mechanism than either approach alone.
In fact, as we show in \S\ref{subsec:app_2back_results},  models equipped with NoPE are flexible enough and can acquire relative positional encoding, thus also supporting the original hypothesis of \citet{kazemnejad2023impact}. 
Therefore, in NAPE has the ability to encourage short-span focus with ALiBi alongside learning more longer-span focus that are guided via semantic information with NoPE.

\section{Adaptive-Scalable $\alpha$-entmax (ASEntmax)}
\label{sec:app_scalable_entmax}

\subsection{Learning Scalers for Language Modeling}\label{sec:learn-scaler-lm-details}
To verify our scaling approach,  we follow the setup from \citep{nakanishi2025scalable} and trained a language model with learnable scales $p_i$ for each $i$-th position. However, we do so independently for each head in the model. 
Specifically, we train a 12-layer transformer with approximately 120 million parameters. We set hidden size to 768, attention heads to 12, MLP intermediate size to 2048, learning rate to $6 \times 10^{-4}$, weight decay to $0.001$, batch size to 1M tokens, and sequence length to 1024. Each head thus contains 1024 learnable parameters (1 per position). Finally, we train on the FineWeb dataset for a total of 5 billion tokens.

\begin{figure}[t]
    \centering
    \includegraphics[width=1\linewidth]{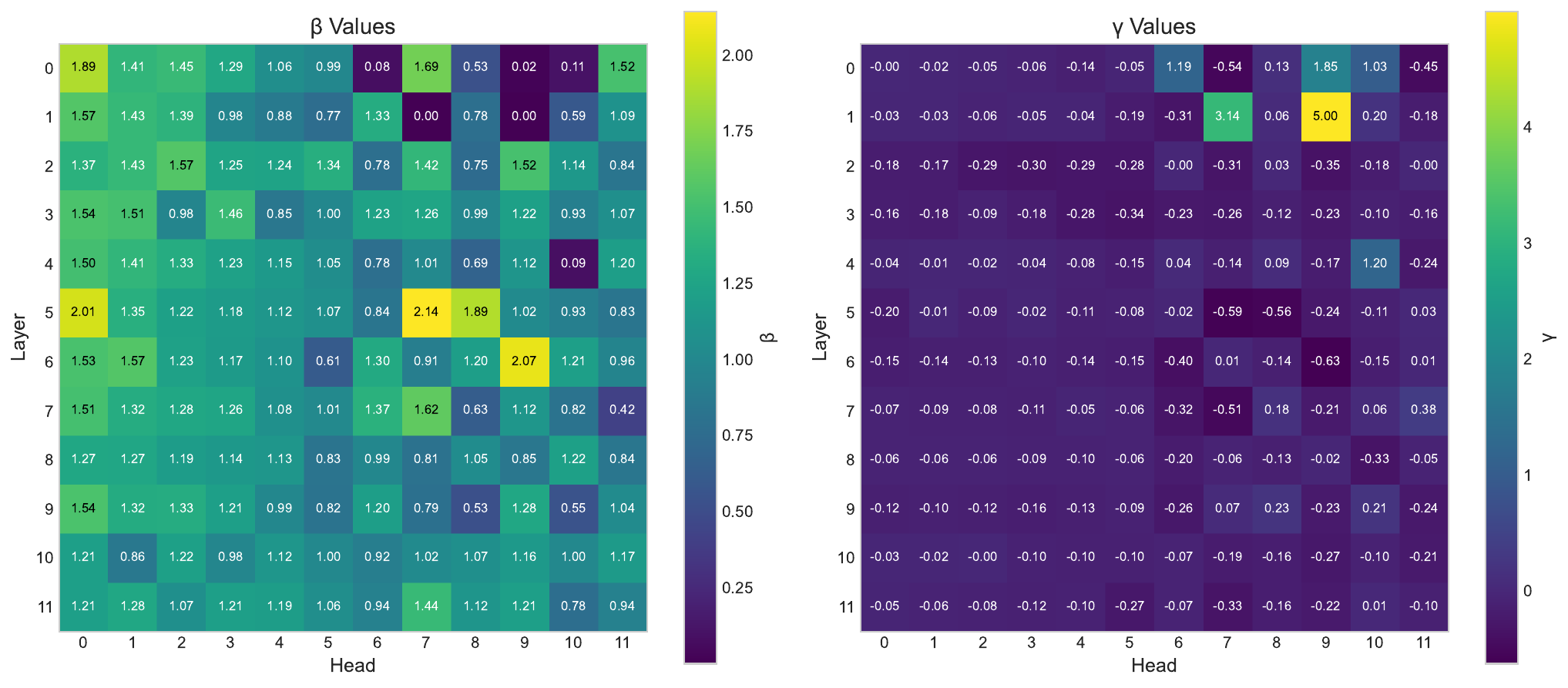}
    \caption{Heatmaps for $\beta$ and $\gamma$ per head for Scalable $\alpha$-entmax.}
    \label{fig:betas_and_gammas_heads}
\end{figure}

In Figure~\ref{fig:logit_scaling} we show how the $\delta_h + \beta_h(\log n)^{\gamma_h}$ scaling performs well for different attention heads. 
In contrast, removing $\gamma_h$---as done by \citet{nakanishi2025scalable}---leads to a severe degradation fit for several heads. 
Specifically, the linear fit has an overall $R^2 = 0.12$, the log fit has $R^2 = -14$ (severe underfitting), and our log with a $\gamma$ exponent has $R^2 = 0.17$.
The full set of $\beta_h$ and $\gamma_h$ learned by our approach are shown in Figure~\ref{fig:betas_and_gammas_heads}.

\paragraph{Effect of Negative $\gamma_h$.} 
Experiments on the Copy task, shown in Table~\ref{tab:res-copy}, suggest that, without scaling, $\alpha$-entmax can hurt performance, leading to a noticeable drop in accuracy in the OOD scenario. 
Introducing an adaptive temperature, however, substantially mitigates this effect. 
We hypothesize that Copy requires less sparse attention patterns, which can be accomplished by applying a negative power to the logarithm function. 
We confirm this hypothesis in Figure~\ref{fig:copy-gamma-dist}, which shows that ASEntmax learns negative values of $\gamma_h$ in all heads, resulting in more spread-out attention distributions.

\section{Experimental Details}
\label{sec:app_experimental_details}

\subsection{Synthetic Data}
\label{subsec:app_experimental_details_synthetic}

Following the data diversity assumptions of \citet{zhou2024what}, we generate a large number of samples---between 10 million and 50 million, depending on task complexity. See Table \ref{tab:tasks-details} for training/test details on for each task along with model hyperparameters. 
The 2Back and Local Count tasks are token‐classification tasks, while the remaining tasks are generative. In Figure \ref{fig:our-tasks}, we show examples of the tasks we introduce in this work.

\paragraph{2Back.} In this classification task, the model must predict the class of the token that appeared two positions earlier. Via this task, we examine the ability of models equipped with NoPE to learn relative positional bias and assess their behaviour in out-of-distribution scenarios (see \ref{subsec:app_2back_results}).

\paragraph{Local Count.} The Local Count task is a classification task in which the model must predict the number of times a word has occurred so far. We restrict the vocabulary size to 16, allowing multiple clusters of the same word to appear within a sequence multiple times. This increases the task’s difficulty, as the model must distinguish between different clusters of identical words. We sample the number of repetitions for each cluster uniformly from \(\mathcal{U}(1,48)\) to test whether models equipped with NoPE can learn a longer focus span than observed in the 2Back task.

\paragraph{Flip-Flop~\citep{liu2023exposing}.}
Flip-Flop uses sequences made up of alternating instructions "write", "ignore", and "read" (\textit{w, i ,r}). Each instruction is paired with a bit (0 or 1). The model must memorize the bit if it follows a \textit{w}, and recall the last stored bit when it encounters an \textit{r}. For example,  in the string $\texttt{"i 0 w 0 i 1 i 1 w 1 i 0 i 1 r"}$, the correct output is 1.
We conducted experiments on two datasets with different “write” instruction probabilities: $10\%$ - sparse, $80\%$ - dense.

\paragraph{Max Retrieval.}
We follow \citet{velivckovic2024softmax} in constructing the Max Retrieval dataset and the model architecture.

\paragraph{Multi-Query Multi-Token Associative Recall (MQMTAR).} MQMTAR is a retrieval task in which models must produce a sequence of multi‐token values corresponding to the queries provided at the end of the input sequence (see Figure \ref{fig:our-tasks}). MQMTAR employs three special tokens: (1) \texttt{0} for empty space; (2) \texttt{1} as the key–value delimiter; and (3) \texttt{3} as the query delimiter, which is also used to separate values in the target sequence. We set the lengths of both keys and values to be 2 tokens, resulting in 5 tokens per key-value pair in the input. The number of queries is 4, and the density of key-value pairs is $80\%$ of the total number of tokens. Finally, the size of the alphabet is 256, but we limited the size of the key/value vocabulary to 10K. This results in a total of 100K key–value pairs.

\paragraph{Sort, Copy and Reverse.}
These are well-known tasks for testing models’ length generalization \citep{kazemnejad2023impact}. We use a small vocabulary size of 32 to generate more sequences with repeated tokens, since models must handle such repetitions increasingly as sequence length grows.

\begin{figure}
    \centering
    \includegraphics[width=0.8\linewidth]{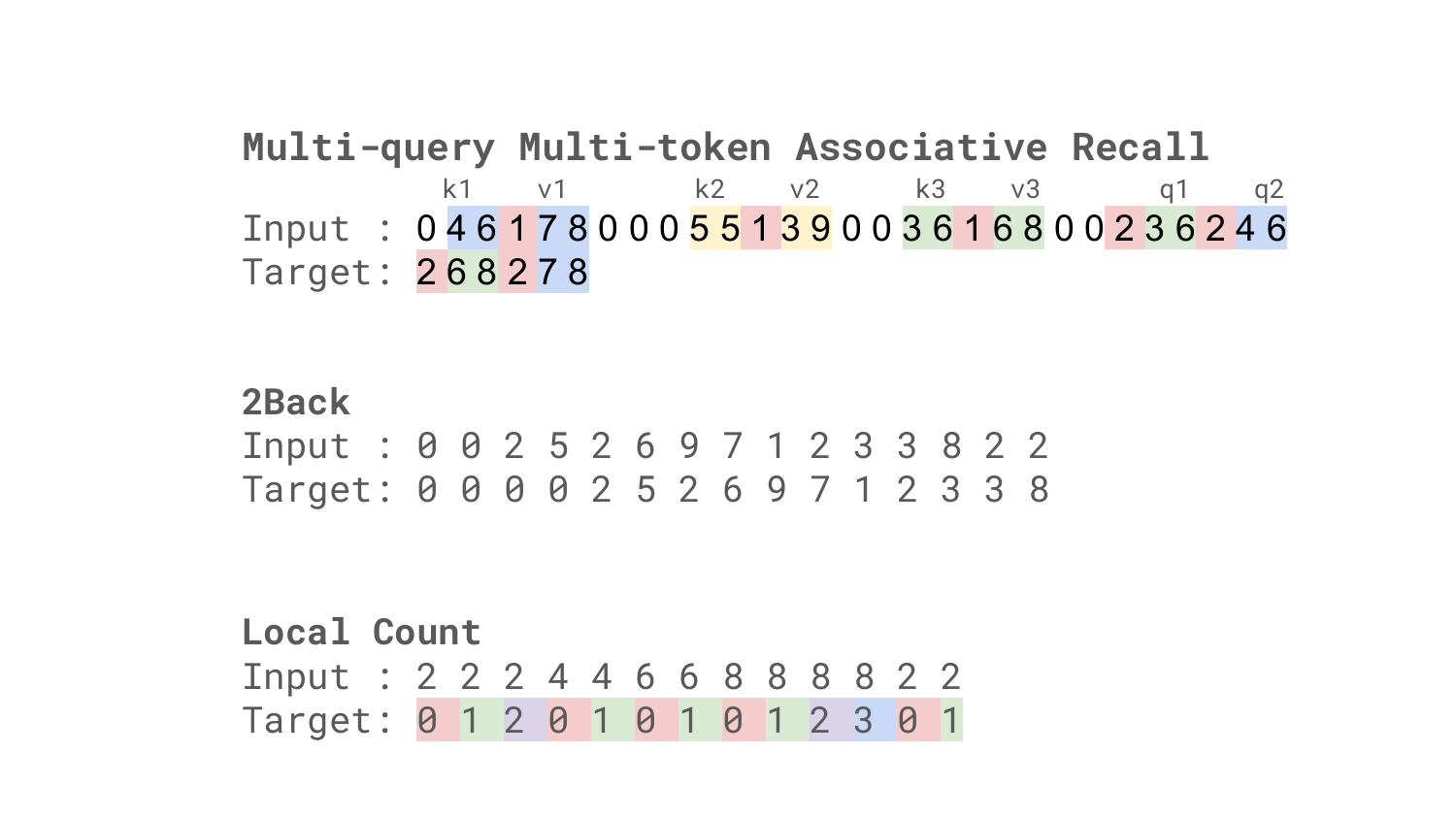}
    \caption{Examples of the introduced tasks. \textbf{MQMTAR}: Each digit is a token; the alphabet size is 10, and the number of queries is 2. \textbf{2Back}: A special token 0 is added at the beginning of the sequence to ensure the model has something to predict at the first two positions; the vocabulary size is 10. \textbf{Local Count}: The maximum number of repetitions is 4, and the vocabulary size is 8.}
    \label{fig:our-tasks}
\end{figure}

\subsection{Models for Synthetic Tasks} 
\label{subsec:app_experimental_details_synthetic_models}

All synthetic tasks are trained with a decoder-like transformer. We evaluate models in extreme settings by using as few layers as possible as our aim is to test the attention mechanism coupled with our positional‐encoding strategy, rather than the scaling capabilities of transformer. However, for the Reverse task---which proved particularly challenging for softmax‐based models---we increment the layer count until the softmax baseline generalizes to at least 1.5x the in-distribution length.

For experiments with RoPE, we use the Hugging Face implementation from LLaMA 3 \citep{grattafiori2024llama3herdmodels}, which includes RoPE scaling. Since our sequences are relatively short, the base frequency is set to its default value of $10{,}000$. To improve length extrapolation in RoPE‐based models, we apply a scaling factor of $16$, which we found to be optimal for Flip‐Flop under $4\times$ extrapolation (see Table \ref{tab:res-flip-flop-rope}); factors of $8$ or $32$ degrade performance. For NAPE, each ALiBi head uses a slope of $m = \frac{1}{h}$, where $h$ is the head index. We employ 8 attention heads for all tasks except Copy and MQMTAR, where 16 heads yields a performance boost across all models.

In case of ASEntmax, $\beta^i_h$ and $\gamma^i_h$ are computed per token $i$ via linear projections followed by activations softplus and tanh respectively, allowing to scale attention adaptively based on content.
For our experiments with $\alpha$-entmax, we use $\alpha = 1.5$ as the default value for the $\alpha$-entmax models, unless mentioned otherwise. 
Furthermore, we use the Gemma2 implementation from Hugging Face, but  disabling sliding‐window attention in all layers. 
For experiments with $\alpha$-entmax, we replaced FlashAttention with AdaSplash \citep{gonccalves2025adasplash}.

For optimization, we use the AdamW with default betas and a cosine learning-rate scheduler with warm-up, setting 10K warm-up steps. Given the large training corpus, we do not employ dropout or weight decay. 
In addition, we use bfloat16 in all experiments. All models are relatively small (2–10M parameters) and fit on a single GPU. 

We observe that even when models are $100\%$ accurate in-distribution, they still require significantly more training; in some cases, the training loss reached as low as $10^{-8}$.
Therefore, the best checkpoint is selected based on performance at $8\times$ the in-distribution sequence length. In some cases such as the Sort task and models with RoPE, where generalization up to $8\times$ was not possible, we use BLEU as an intermediate metric and $2\times$ or $4\times$ the in-distribution sequence length. We perform evaluation with $1K$ samples per sequence length. 2Back and Local Count are evaluated using accuracy, as they are classification tasks, whereas for the remaining tasks, we use exact match accuracy---assigning 1 only if the entire predicted sequence matches the reference and 0 otherwise. 
We report results for the single best-performing model selected from experiments conducted with multiple random seeds (3) and various learning rates.

For some tasks, we also report results for SEntmax---which learns $\beta$ and $\gamma$ directly, without linear projections or nonlinearities---and also for ASSMax---which applies our adaptive-scaling strategy to Softmax. Finally, for some tasks we also experiment with Stick-Breaking Attention (SB, \citealt{tan2025scaling}), which corresponds to a different positional encoding strategy.

\begin{table}[t]
\centering
\caption{Task details and hyperparameters.}\label{tab:tasks-details}
\setlength{\tabcolsep}{5pt}
\begin{tabular}{lcccccccc}
\toprule
\textbf{Task} & \textbf{Samples} & \textbf{Length} & \textbf{Batch}  & \textbf{Vocab.} & \textbf{Heads} & \textbf{Layers} & \textbf{Hid. dim.} & \textbf{Int. dim.} \\
\midrule
2Back & 10M & 32-64 & 128  & 16 & 8 & 2 & 256 & 512 \\
Local Count & 10M & 64-128 & 128 & 16 & 8 & 3 & 128 &  512 \\
Flip-Flop & 10M & 32-64 & 128 & 4 & 8 & 4 & 256 & 512 \\
Copy & 20M & 32-64 & 128  & 32 & 16 & 2 & 256 & 1024 \\
Reverse & 30M & 32-64 & 128  & 32 & 8 & 6 & 256 & 512 \\
MQMTAR & 50M & 32-64 & 128  & 256 & 16 & 4 & 512 & 1024 \\
Sort & 40M & 32-64 & 128 & 32 & 8 & 2 & 256 & 1024 \\
\bottomrule
\end{tabular}
\end{table}

\subsection{Language Modeling}\label{sec:lm_experiment_details}

We train 420M-parameter decoder-only models following the LLaMA-3 architecture, using the following hyperparameters: (1) model dimension: 768; (2) number of layers: 24; (3) number of heads: 12. For RoPE and FFN layers, we use the default LLaMA-3 settings. The training is conducted using the \textit{torchtitan} library \citep{liang2025torchtitan}. We train for a total of 13.5K steps, with 1.35K steps allocated for warm-up. The learning rate is set to $3 \cdot 10^{-4}$, and the cooldown phase corresponds to $10\%$ of the total steps with a minimum learning rate of $3 \cdot 10^{-5}$, where we apply cosine decay scheduling. We use the AdamW optimizer and train with \texttt{bfloat16} precision. The global batch size is set to 524K tokens. For tokenization, we employ the LLaMA-3.2-1B tokenizer. For NAPE experiments, we choose original exponential ALiBi slopes.

We use the following benchmarks for short context evaluation: Lambada \citep{paperno-etal-2016-lambada}, HellaSwag \citep{zellers-etal-2019-hellaswag}, PIQA \citep{Bisk_Zellers_Le_bras_Gao_Choi_2020}, ARC-C \citep{clark2018thinksolvedquestionanswering}, Winogrande \citep{Sakaguchi_Le_Bras_Bhagavatula_Choi_2020}, and
OpenBookQA \citep{mihaylov-etal-2018-suit}. For length generalization, we use 500 documents from each dataset (ArXiv and PubMed), with all documents at least 16K tokens long, and 500 samples (default number of samples in RULER) for each length for S-NIAH-1 and S-NIAH-2. We evaluate models with RoPE in two settings: vanilla RoPE and RoPE with Adjusted Base Frequency (ABF) \citep{xiong-etal-2024-effective}, where the base frequency is set to $500{,}000$, i.e., $50\times$ used in pre-training.
Results for language modeling are in Appendix \ref{sec:extra_lm_results}.

\section{Detailed Results}
\label{sec:app_additional_results}
In this section, we provide a more detailed analysis of each task.

\subsection{2Back} \label{subsec:app_2back_results}

\begin{table}[!ht]
  \caption{Accuracy (\%) on 2Back.}
  \label{tab:res-2back-nope-layers}
  \centering
  \begin{tabular}{lrrrrrrr}
    \toprule
    & ID & \multicolumn{6}{c}{Out-of-Distribution}                   \\
    \cmidrule(lr){2-2} \cmidrule(lr){3-8} 
   \bf Model   & \bf 64    & \bf 128   & \bf 256   & \bf 512   & \bf 1024  & \bf 2048  & \bf 4096  \\
    \midrule
    \textit{RoPE} \\
        Softmax  &   100.0  &  100.0  &  100.0  & 99.3 & 81.4 & 63.4 & 41.1 \\
        SSMax & 100.0 & 100.0 & 100.0 & 99.8 & 98.5 & 90.4 & 69.0 \\
        Entmax  & 100.0 & 98.2 & 94.8 & 83.7 & 65.5 & 45.7 & 31.3 \\
        ASEntmax & 100.0 & 100.0 & 100.0 & 95.0 & 61.2 & 36.2 & 22.1 \\
    \midrule
    \textit{NoPE} \\
Softmax  & 99.9 & 83.2 & 51.4 & 30.1 & 18.5 & 12.7 & 9.7 \\
  SSMax  & 100.0 & 80.5 & 47.2 & 27.2 & 16.9 & 11.8 & 9.2 \\
Entmax  & 100.0 & 77.3 & 50.5 & 33.2 & 20.7 & 13.7 & 10.2 \\
 ASEntmax  & 100.0 & 90.3 & 55.1 & 36.5 & 24.3 & 16.2 & 11.5 \\
\midrule
    \textit{NAPE} \\
 Softmax  & 100.0 & 100.0 & 100.0 & 100.0 & 100.0 & 100.0 & 100.0 \\
 SSMAx  & 100.0 & 100.0 & 100.0 & 100.0 & 100.0 & 100.0 & 100.0 \\
 Entmax  & 100.0 & 100.0 & 100.0 & 100.0 & 100.0 & 100.0 & 100.0 \\
 ASEntmax  & 100.0 & 100.0 & 100.0 & 100.0 & 100.0 & 100.0 & 100.0 \\
    \bottomrule
  \end{tabular}
\end{table}

Following the hypothesis of \citet{kazemnejad2023impact} that NoPE can learn a relative positional bias, we conducted experiments on the simple 2Back task, in which the model must predict the class of the token two positions earlier. Table~\ref{tab:res-2back-nope-layers} shows that models equipped with NoPE achieve perfect in-distribution performance. 
Moreover, the attention maps in Figure~\ref{fig:2Back-attn-maps} (left) confirm that NoPE indeed acquires relative positional encodings, thus supporting the hypothesis. However, the OOD attention maps (right part) reveal that, as sequence length increases, the recency bias diffuses unevenly across positions.  
Such behavior is detrimental in tasks requiring attention to a fixed-size local context (e.g., associative recall, previous instructions in code, n-grams in text). By contrast, ALiBi constrains attention to a local window irrespective of content. Moreover, we observe that ASEntmax partially mitigates diffusion in the attention maps (bottom right), which is also reflected in the accuracy gains shown in Table \ref{fig:2Back-attn-maps}.
In our design of positional encoding, NoPE + ALiBi (NAPE), half of the attention heads employ a faster-decaying variant of ALiBi to enforce a short-span focus, while the remaining heads use NoPE which can 1) learn a focus that spans longer and depends on position 2) guide attention semantically.

\begin{figure}
    \centering
    \begin{subfigure}{\textwidth}
        \centering
        \textbf{\small Softmax}\\[1ex]
        \includegraphics[width=0.4\linewidth]{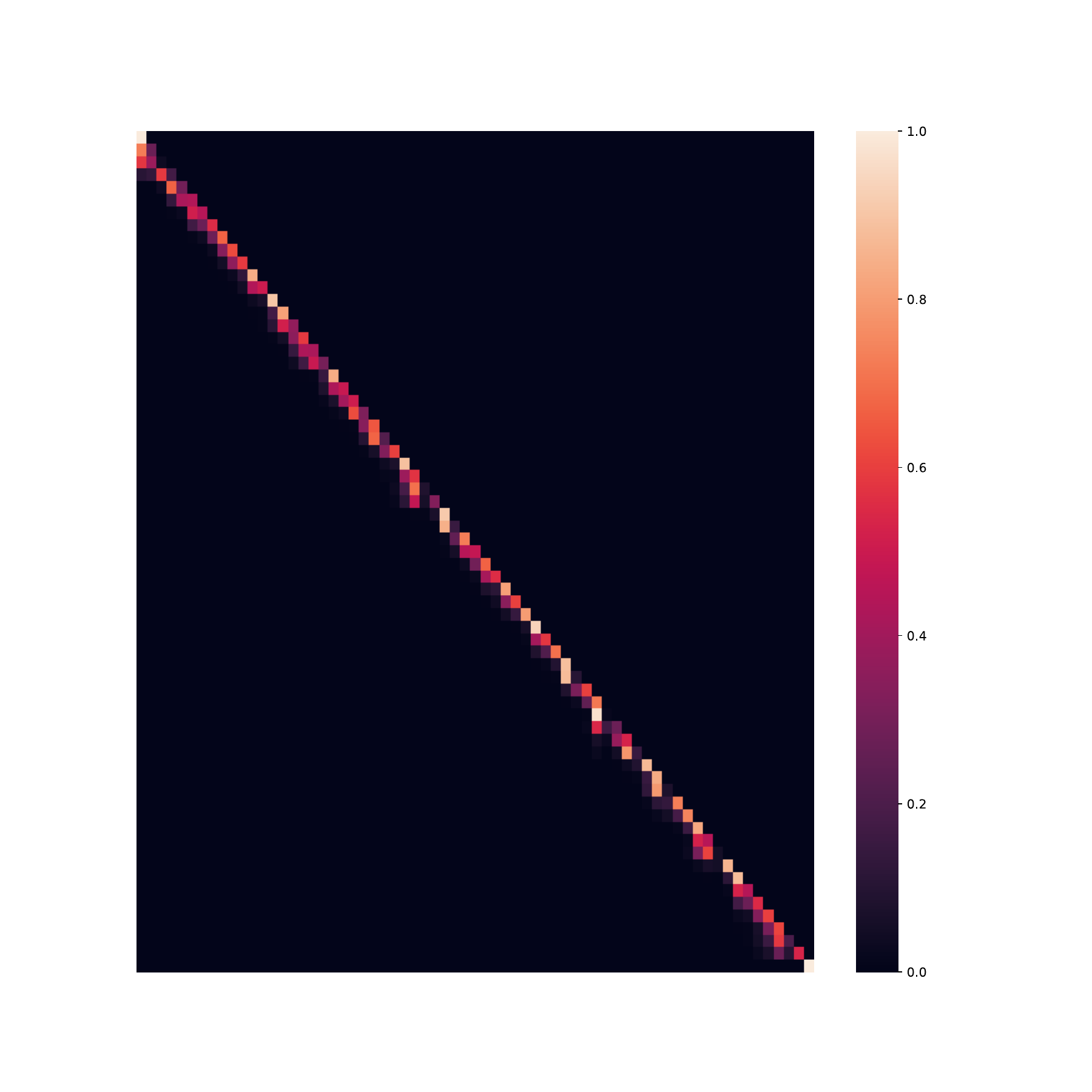 }
        \includegraphics[width=0.4\linewidth]{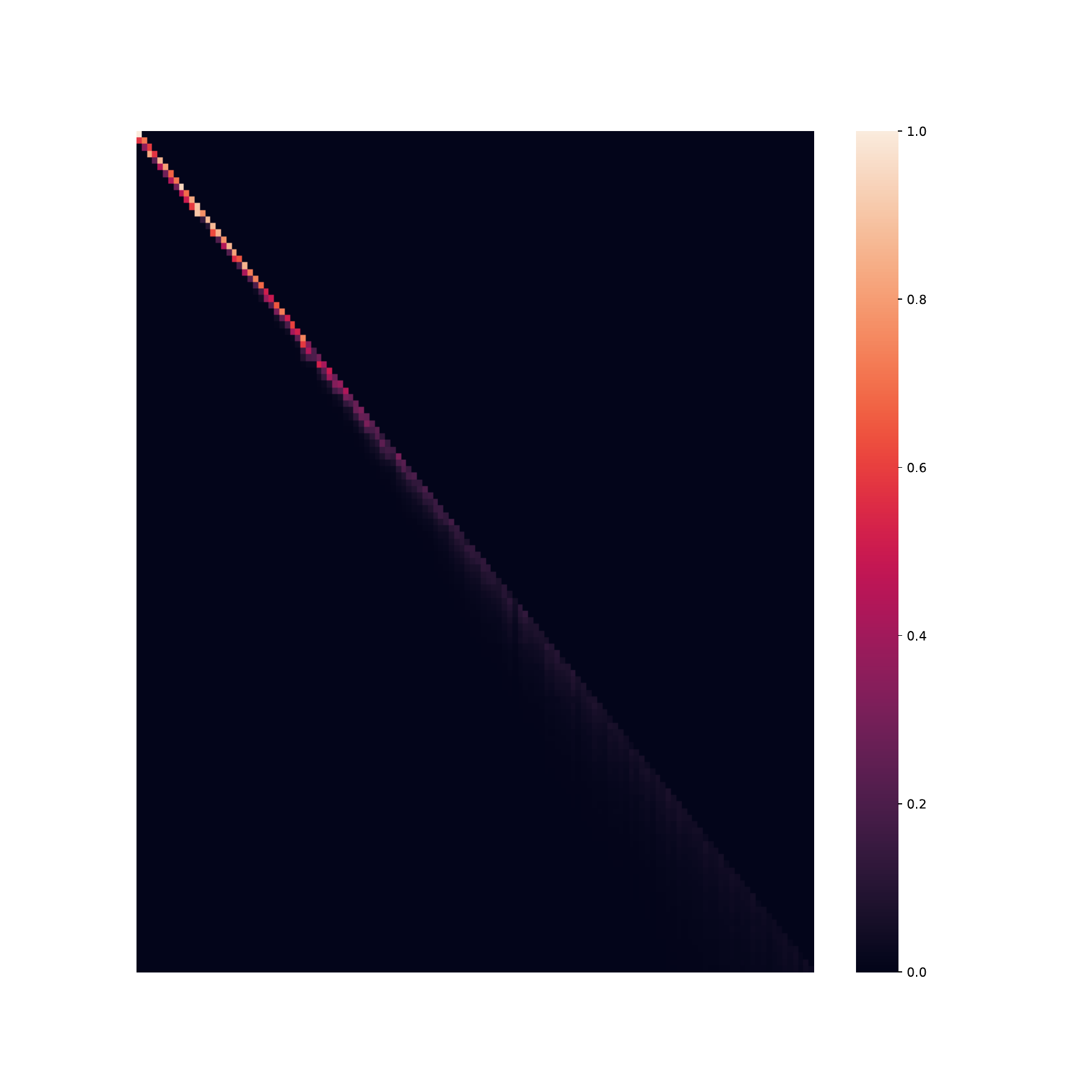}
  \end{subfigure}

  \begin{subfigure}{\textwidth}
        \centering
        \textbf{\small Entmax $\alpha=1.5$}\\[1ex]
        \includegraphics[width=0.4\linewidth]{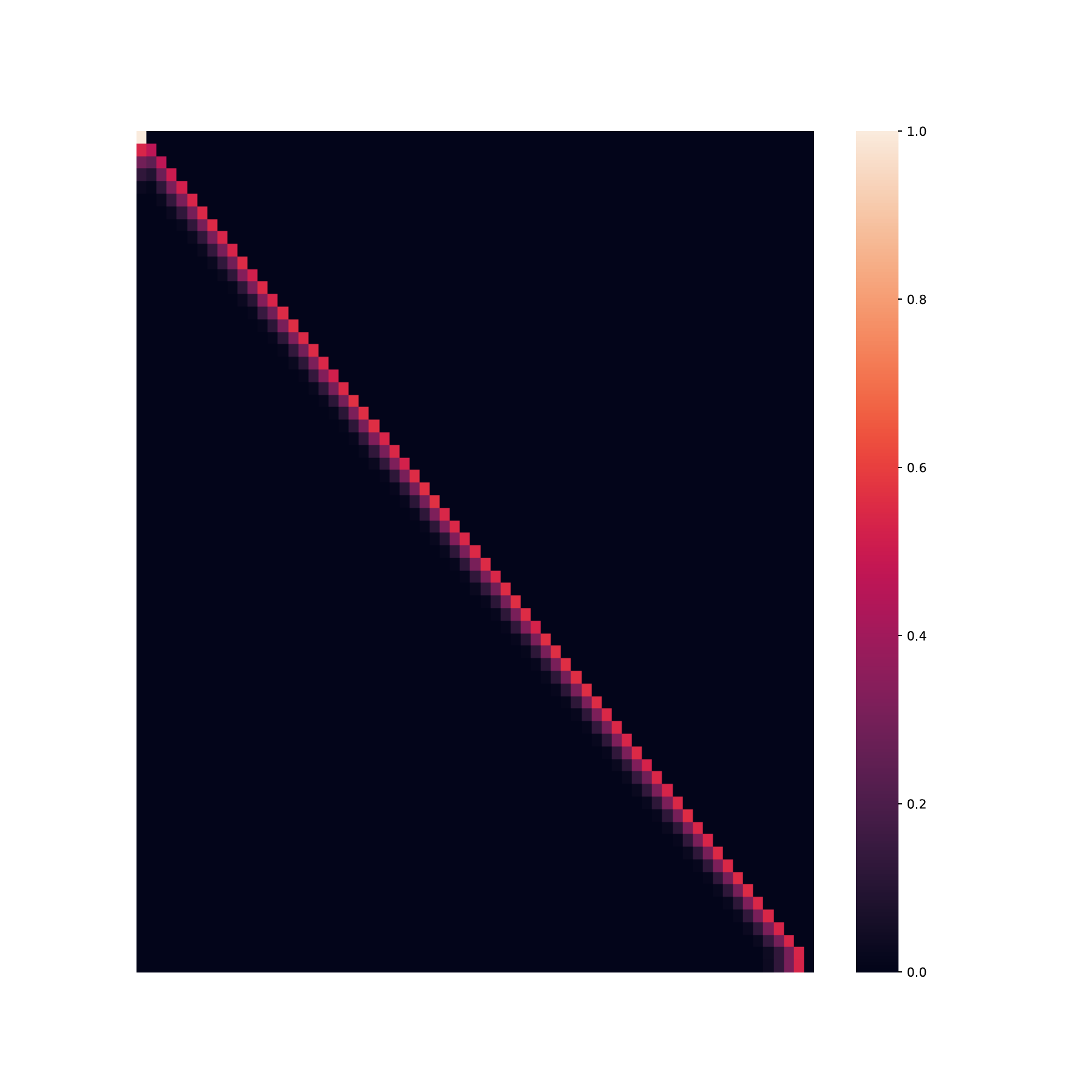}
        \includegraphics[width=0.4\linewidth]{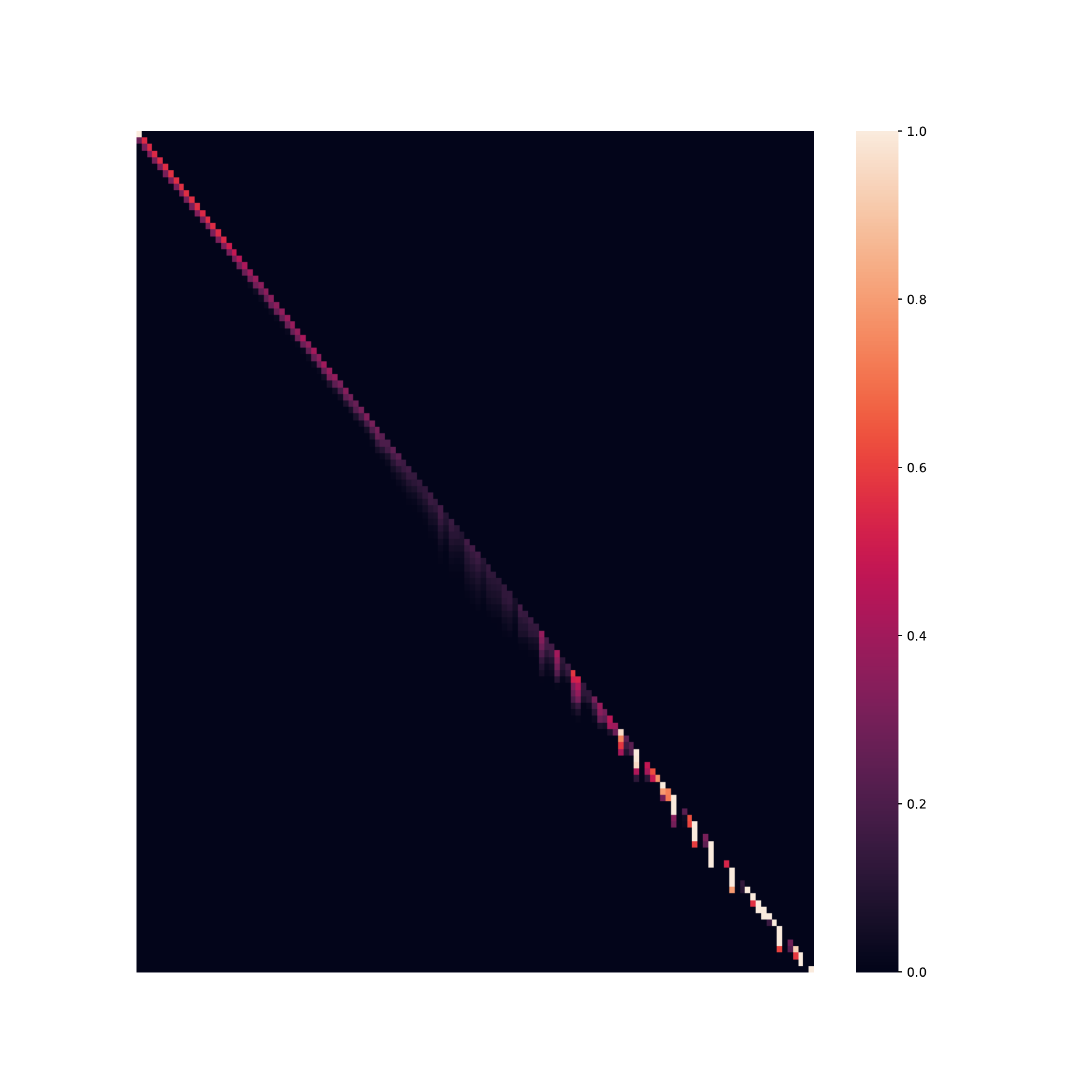}
  \end{subfigure}

  \begin{subfigure}{\textwidth}
        \centering
        \textbf{\small ASEntmax $\alpha=1.5$}\\[1ex]
        \includegraphics[width=0.4\linewidth]{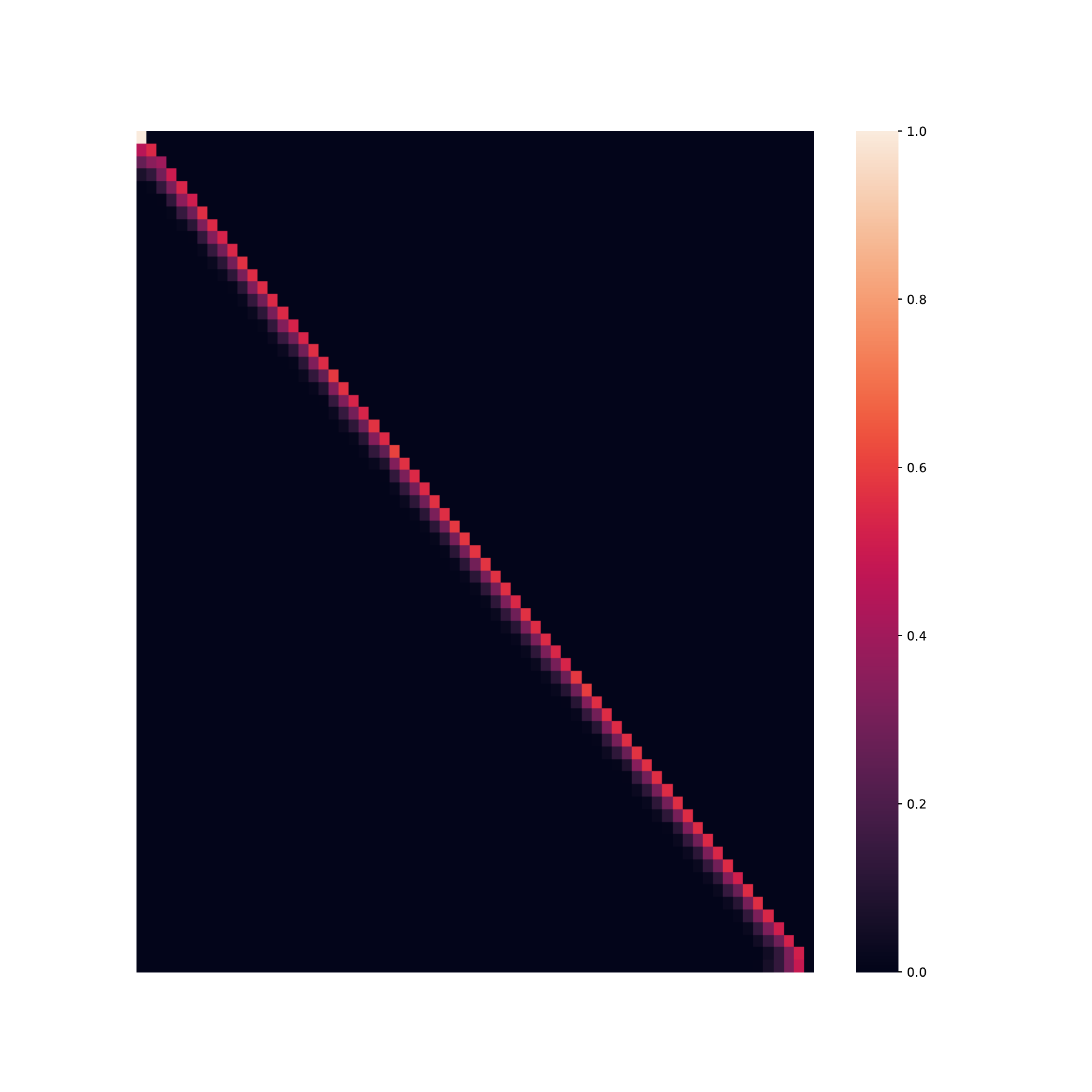}
        \includegraphics[width=0.4\linewidth]{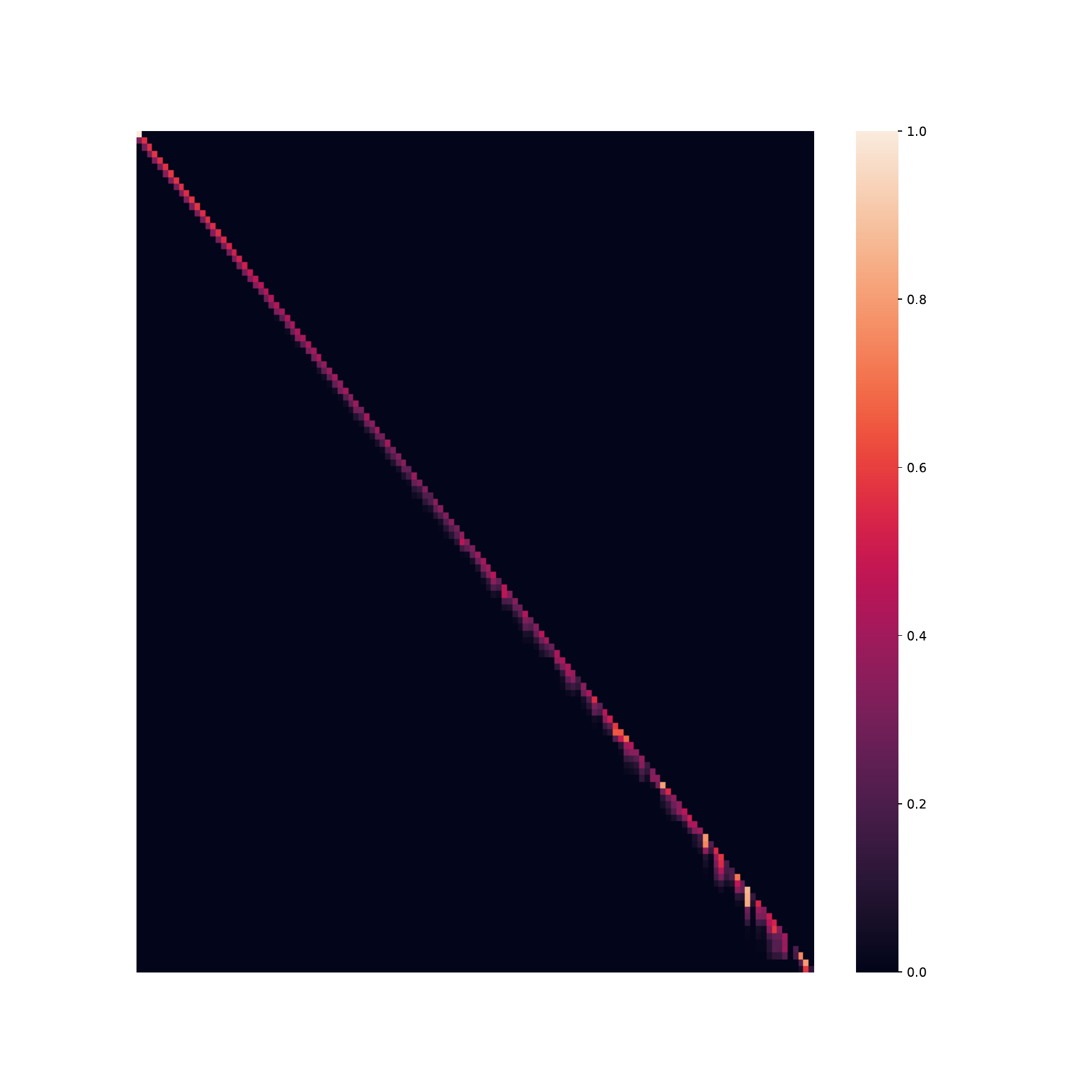}
  \end{subfigure}
    
    \caption{Comparison of attention maps for the 2Back task. \textbf{Left:} In-distribution, sequence length 64. \textbf{Right:} Out-of-distribution, sequence length 512 (for visualization clarity, we applied max pooling with a window size 4 and stride 4). The maps are shown for the second layer for all models. We can observe that diagonal patterns are less distorted with $\alpha$-entmax. Moreover, ASEntmax mitigate dispersal of the diagonal pattern up to $3\times$ the in-distribution sequence length and make it less distorted up to $8 \times$. }
    \label{fig:2Back-attn-maps}
\end{figure}

\subsection{Local Count}\label{sec:local-count-results}
As observed (Table \ref{tab:res-local-counting-3L}), models using ALiBi perfectly solve the task, which is unsurprising given that ALiBi induces a recency bias. Furthermore, the results for ALiBi and NAPE suggest that models can rely exclusively on ALiBi heads in case of NAPE. 
With NoPE, however, the model is challenged because identical tokens are not distinguishable at the very first layer. Therefore, the model must develop a mechanism to locate the current cluster. Figure \ref{fig:local-count-attn-maps} indicates that by the third layer, the NoPE model exhibits a relative positional bias. Combined with the bias observed in 2Back, this indicates that NoPE models can acquire various content-based recency biases that differ from those induced by ALiBi or RoPE. Finally, for Local Count, we observe no improvement in NoPE models when using attention scaling.

\begin{table}[!ht]
  \caption{Accuracy (\%) on Local Count.}
  \label{tab:res-local-counting-3L}
  \centering
  \begin{tabular}{lrrrrrr}
    \toprule
    & ID & \multicolumn{5}{c}{Out-of-distribution}                   \\
    \cmidrule(lr){2-2} \cmidrule(lr){3-7} 
    \bf Model   & \bf 128    & \bf 256   & \bf 512   &  \bf 1024  & \bf 2048  & \bf 4096  \\
    \midrule
    \textit{RoPE} \\
    Softmax  & 100.0 & 99.4 & 91.6 & 55.2 & 31.1 & 17.3 \\
    SSMAx & 100.0 & 100.0 & 81.3 & 42.6 & 23.0 & 13.4 \\
    Entmax  & 100.0 & 99.9 & 89.3 & 47.1 & 24.9 & 14.1 \\
    ASEntmax & 100.0 & 100.0 & 79.1 & 41.4 & 22.4 & 13.1 \\
    \midrule
    \textit{NoPE} \\
    Softmax  & 99.1 & 71.7 & 36.5 & 18.3 & 9.2 & 4.6 \\
    SSMax  & 99.1 & 71.4 & 36.8 & 18.6 & 9.3 & 4.7 \\
    Entmax  & 99.8 & 80.8 & 45.6 & 25.0 & 13.8 & 7.7 \\
    ASEntmax   & 99.6 & 78.1 & 42.6 & 22.5 & 11.9 & 6.4 \\
    \midrule
    \textit{ALiBi} \\
    Softmax  & 100.0 & 100.0 & 100.0 & 100.0 & 100.0 & 100.0 \\
    Entmax  & 100.0 & 100.0 & 100.0 & 100.0 & 100.0 & 100.0 \\
    \midrule
    \textit{NAPE} \\
 Softmax  & 100.0 & 100.0 & 100.0 & 100.0 & 100.0 & 100.0 \\
    SSMax  & 100.0 & 100.0 & 99.9 & 99.9 & 99.8 & 99.8 \\
    Entmax  & 100.0 & 100.0 & 100.0 & 100.0 & 100.0 & 100.0 \\
    ASEntmax  & 100.0 & 100.0 & 100.0 & 100.0 & 100.0 & 100.0 \\
    \bottomrule
  \end{tabular}
\end{table}

\begin{figure}[!htb]
    \centering
    \includegraphics[width=0.49\linewidth]{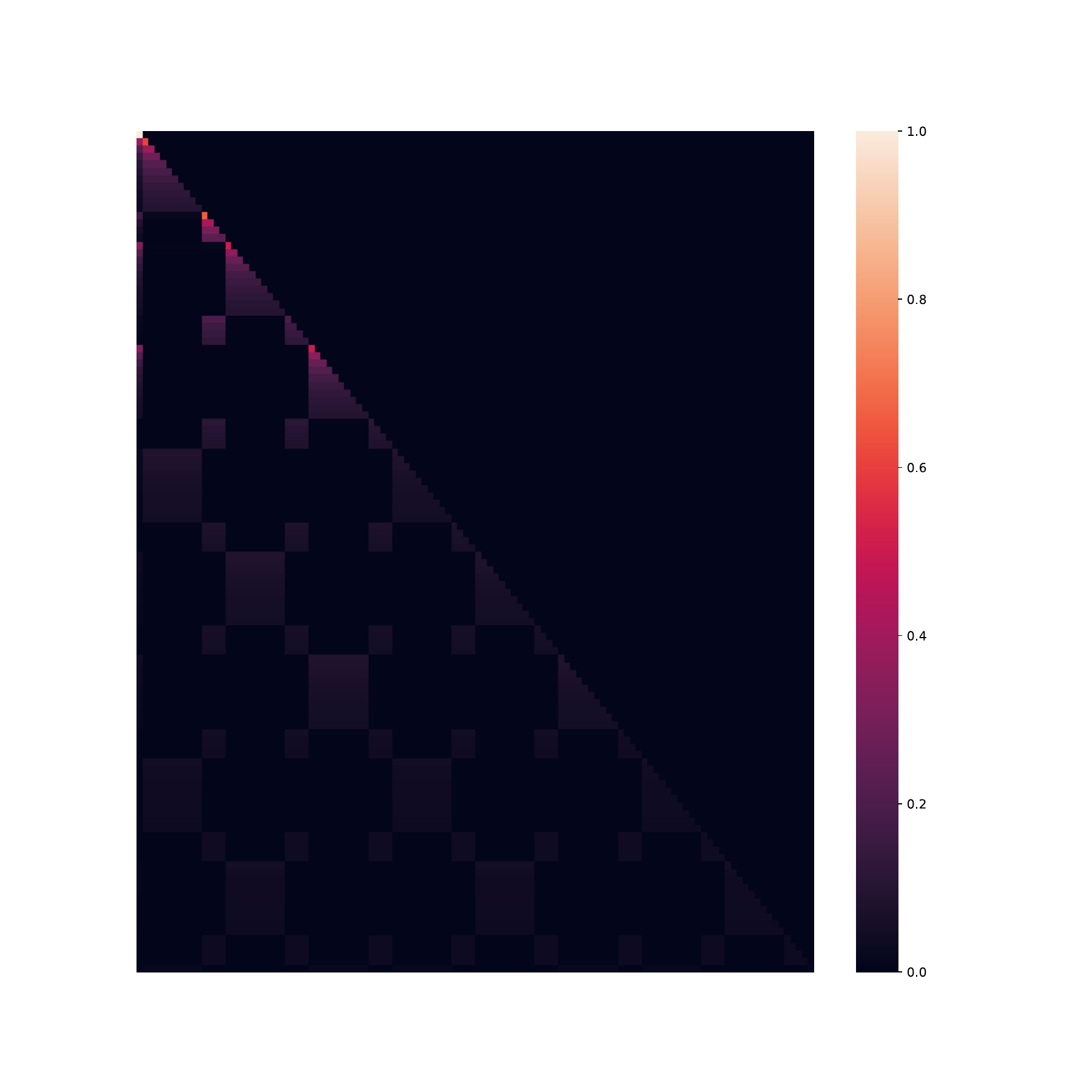}
    \includegraphics[width=0.49\linewidth]{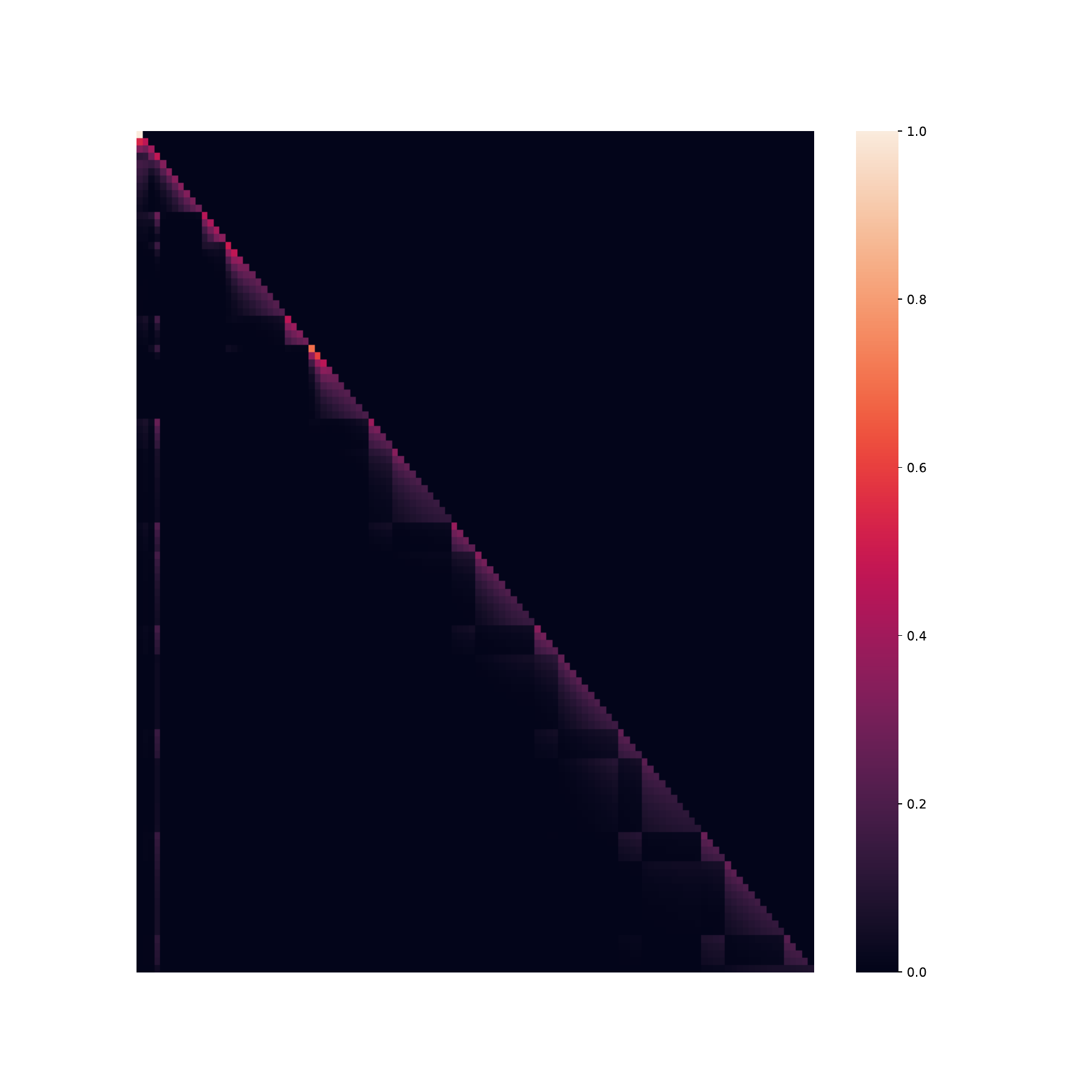}
    \caption{Attention maps of Entmax model on Local Count. \textbf{Left:} Layer 1. \textbf{Right:} Layer 3; We observe a local pattern: attention weights fade as relative distance increases. Input sequence: \texttt{(1..1 $\times$10, 2..2 $\times$4, 3..3 $\times$10, 2..2 $\times$4, 4..4 $\times$10, 2..2 $\times$4)..$\times$4}}
    \label{fig:local-count-attn-maps}
\end{figure}

\subsection{Max Retrieval}\label{sec:res-max-retrieval}
 Solving this task requires an extremely concentrated attention distribution. Such distribution can be achieved by either lowering the temperature $\theta$ for softmax or increasing the entmax parameter $\alpha$. As Table \ref{tab:res-max-retrieval} shows, increasing $\alpha$
yields substantial performance gains. However, if $\alpha$ becomes too large, the distribution may collapse to a one-hot vector, causing entmax to lose all gradient signal and hindering learning (e.g. $\alpha > 16$). Instead, this issue can be alleviated by scaling entmax based on the sequence length. With this approach, ASEntmax with $\alpha=1.5$, learned $\beta$, and elevated $\gamma$ achieves substantially improved performance on the task. 
We also note that Top-K with K=2 works roughly as well as SSMax. However, lowering K to 1 is not possible since this would essentially reduce Top-K to an argmax operation, a non-differentiable function.

\begin{table}[!ht]
  \caption{Accuracy (\%) on Max Retrieval}
  \label{tab:res-max-retrieval}
  \centering
  \setlength{\tabcolsep}{4pt}
  \begin{tabular}{lccccccccc}
    \toprule
    & ID & \multicolumn{8}{c}{Out-of-Distribution}                   \\
    \cmidrule(lr){2-2} \cmidrule(lr){3-10} 
    \bf Model   & \bf 16    & \bf 32    & \bf 64    & \bf 128   & \bf 256   & \bf 512   & \bf 1024  & \bf 2048  & \bf 4096  \\
    \midrule
    Softmax \citet{velivckovic2024softmax}  & 98.6  & 97.1  & 94.3  & 89.7  & 81.3  & 70.1  & 53.8  & 35.7  & 22.6  \\
    Adapt. temp. \citet{velivckovic2024softmax} & 98.6 & 97.1 & 94.5 & 89.9 & 82.1 & 72.5 & 57.7 & 39.4 & 24.9 \\
    \midrule
    Softmax $\theta=\sqrt{d}$    & 99.2  & 98.5  & 96.7  & 93.2  & 86.7  & 73.5  & 54.4  & 36.4  & 24.1  \\
    Softmax $\theta=0.1$   & 99.5 & 99.0 & 97.8 & 95.1 & 89.6 & 77.9 & 60.2 & 41.2 & 28.5 \\
    Softmax $\theta=0.0004$ & 99.2 & 98.4 & 97.0 & 94.2 & 89.4 & 81.8 & 71.4 & 58.4 & 43.4 \\
    SSMax & 99.4 & 98.9 & 97.8 & 95.9 & 92.3 & 85.0 & 74.7 & 59.9 & 44.7 \\ 
    Top-K, $K=2$ & 99.4 & 98.6 & 97.6 & 95.4 & 91.3 & 85.0 & 75.3 & 62.4 & 48.3 \\
    Top-K, $K=4$ & 98.8 & 97.7 & 95.0 & 89.5 & 79.1 & 64.6 & 48.9 & 38.9 & 32.4 \\
    \midrule    
    Entmax $\alpha=1.5$ & 99.4 & 98.8 & 97.4 & 94.7 & 89.9 & 80.1 & 65.1 & 50.0 & 36.8 \\
    Entmax $\alpha=2$   & 99.5 & 99.1 & 98.0 & 96.0 & 92.1 & 84.5 & 72.0 & 58.4 & 44.6 \\
    Entmax $\alpha=4$ & 99.5 & 98.9 & 97.7 & 95.9 & 92.1 & 84.8 & 75.2 & 61.4 & 46.9  \\
    Entmax $\alpha=16$ & \underline{99.6} & \underline{99.4} & 98.7 & 97.5 & 95.2 & 91.0 & 82.8 & 70.3 & 53.4 \\
    Entmax $\alpha=32$ & 99.4 & 98.7 & 97.5 & 95.5 & 91.5 & 83.8 & 72.6 & 57.5 & 41.7 \\
    Entmax $\alpha=64$ & 99.1 & 98.4 & 96.8 & 93.9 & 88.7 & 78.6 & 64.6 & 45.5 & 28.1 \\
     \midrule
    ASEntmax, $\alpha=1.5$, $\beta_{learn}$, $\gamma=1$ & 99.5 & 99.0 & 98.1 & 96.3 & 93.1 & 86.1 & 76.2 & 61.9 & 44.5 \\
    ASEntmax, $\alpha=1.5$, $\beta_{learn}$, $\gamma=2$ & 99.6 & 99.2 & 98.4 & 96.9 & 94.4 & 89.0 & 81.4 & 69.5 & 55.1 \\
    ASEntmax, $\alpha=1.5$, $\beta_{learn}$, $\gamma=3$ & \underline{99.6} & \underline{99.4} & \textbf{99.0} & \textbf{98.0} & \textbf{96.0} & \textbf{92.4} & \textbf{85.9} & \textbf{76.1} & \textbf{62.7} \\
    ASEntmax, $\alpha=1.5$, $\beta_{learn}$, $\gamma=4$ & 99.3 & 98.7 & 97.6 & 95.4 & 91.3 & 84.6 & 73.6 & 59.7 & 45.9 \\
    \bottomrule
  \end{tabular}
\end{table}

\subsection{MQMTAR}
We observe the same pattern across all tasks: despite theoretical extrapolation to $16 \times$via RoPE scaling, RoPE models poorly generalize beyond $4 \times$.   Moreover, although models with ALiBi can extrapolate up to $64 \times$, ALiBi’s limited span inevitably leads to performance degradation on very long sequences. However, NAPE provides a substantial boost in all models. As in the copy task, Entmax alone underperforms Softmax, but adaptive scaling in ASEntmax makes the model superior, extending the generalization to an impressive $1024 \times$. We also conducted experiments with ASSMax to demonstrate that, despite adaptive scaling benefits and improved performance in comparison with the model without scaling, softmax dispersion still causes a significant performance drop on very long sequences.

\begin{table}[!ht]
  \caption{Exact match accuracy ($\%$) on Multi-query Multi-token Associative Recall.}
  \label{tab:res-mqmtar}
  \centering
  \setlength{\tabcolsep}{4pt}
  \begin{tabular}{lrrrrrrrrrrr}
    \toprule
    & ID & \multicolumn{10}{c}{Out-of-Distribution}                   \\
    \cmidrule(lr){2-2} \cmidrule(lr){3-12} 
    \bf Model   & \bf 64    & \bf 128   & \bf 256   & \bf 512   & \bf 1024  & \bf 2048  & \bf 4096 &  \bf 8192 & \bf 16K & \bf 32K & \bf  65K \\
    \midrule
    \textit{RoPE} \\
    Softmax & 100.0 & 3.1 & 0 & 0 & 0 & 0 & 0 & 0 & 0 & 0 & 0 \\
    SSMAx & 99.8 & 6.2 & 0 & 0 & 0 & 0 & 0 & 0 & 0 & 0 & 0 \\
    Entmax & 99.8 & 49.4 & 4.5 & 0 & 0 & 0 & 0 & 0 & 0 & 0 & 0 \\
    ASEntmax & 100.0 & 66.9 & 0.8 & 0 & 0 & 0 & 0 & 0 & 0 & 0 & 0 \\
    \midrule
    \textit{NoPE} \\
    Softmax & 100.0 & 30.6 & 0 & 0 & 0 & 0 & 0 & 0 & 0 & 0 & 0 \\
    Entmax & 100.0 & 26.1 & 0 & 0 & 0 & 0 & 0 & 0 & 0 & 0 & 0 \\
    SSMax & 100.0 & 39.1 & 0 & 0 & 0 & 0 & 0 & 0 & 0 & 0 & 0 \\ 
    ASEntmax & 100.0 & 58.3 & 0 & 0 & 0 & 0 & 0 & 0 & 0 & 0 & 0 \\
    \midrule
    \textit{ALiBi} \\
    Softmax  & 100.0 & 100.0 & 99.5 & 99.0 & 98.0 & 93.9 & 77.8 & 38.8 & 6.8 &  0  &  0  \\
    Entmax  & 99.5 & 97.5 & 90.6 & 75.4 & 44.7 & 11.7 & 1.1 &  0  &  0  &  0  &  0 \\
    \midrule
    \textit{NAPE} \\
    Softmax  & 100.0 & 100.0 & 100.0 & 99.4 & 99.5 & 98.2 & 97.8 & 90.2 & 80.2 & 34.2 & 3.0 \\
    SSMax & 100.0 & 100.0 & 99.9 & 99.9 & 99.8 & 99.6 & 98.3 & 97.4 & 90.6 & 74.0 & 26.7 \\
    ASSMax & 100.0 & 100.0 & 100.0 & 99.7 & 99.8 & 99.7 & 98.7 & 94.2 & 85.2 & 72.8 & 21.7 \\
    Top-K, $K=16$ & 100.0 & 100.0 & 99.9 & 41.8 & 5.8 & 0.3 & 0 & 0 & 0 & 0 & 0 \\
    Top-K, $K=32$ & 100.0 & 100.0 & 99.9 & 45.9 & 6.2 & 0.8 & 0 & 0 & 0 & 0 & 0\\
    Entmax   & 100.0 & 100.0 & 100.0 & 99.9 & 99.2 & 97.8 & 92.7 & 86.2 & 66.8 & 35.8 & 9.3 \\
    ASEntmax & 100.0 & 100.0 & 100.0 & 100.0 & 100.0 & 99.7 & 99.6 & 99.2 & 99.0 & 97.8 & 95.3 \\
    \midrule
    SB Attention & 100.0 & 100.0 & 100.0 & 99.7 & 99.8 & 99.2 & 94.6 & 68.6 & 12.4 & 0.4 & 0 \\
    \bottomrule
  \end{tabular}
\end{table}

\subsection{Copy}\label{sec:copy-results}
As shown by \citet{jelassi2023length}, transformers generalize to the Copy task especially with appropriate positional encodings. Table \ref{tab:res-copy} shows that the Softmax transformer generalizes up to $64\times$ the ID length. Notably, SSMax outperforms all other models which might suggest that scaling is crucial for this task.

As we can also see, without scaling, $\alpha$-entmax can hurt performance, leading to a noticeable drop in accuracy in the OOD scenario. 
Introducing an adaptive temperature, however, substantially mitigates this effect: ASEntmax matches Softmax performance outperforming Entmax 4 times of the sequence length. 
We hypothesize that Copy requires less sparse attention patterns, which can be accomplished by applying a negative power to the logarithm function. 
We confirm this hypothesis in Figure~\ref{fig:copy-gamma-dist}, which shows that ASEntmax learns negative values of $\gamma$ in all heads, resulting in more spread-out attention distributions.

\begin{figure}
    \centering
    \includegraphics[width=1.0\linewidth]{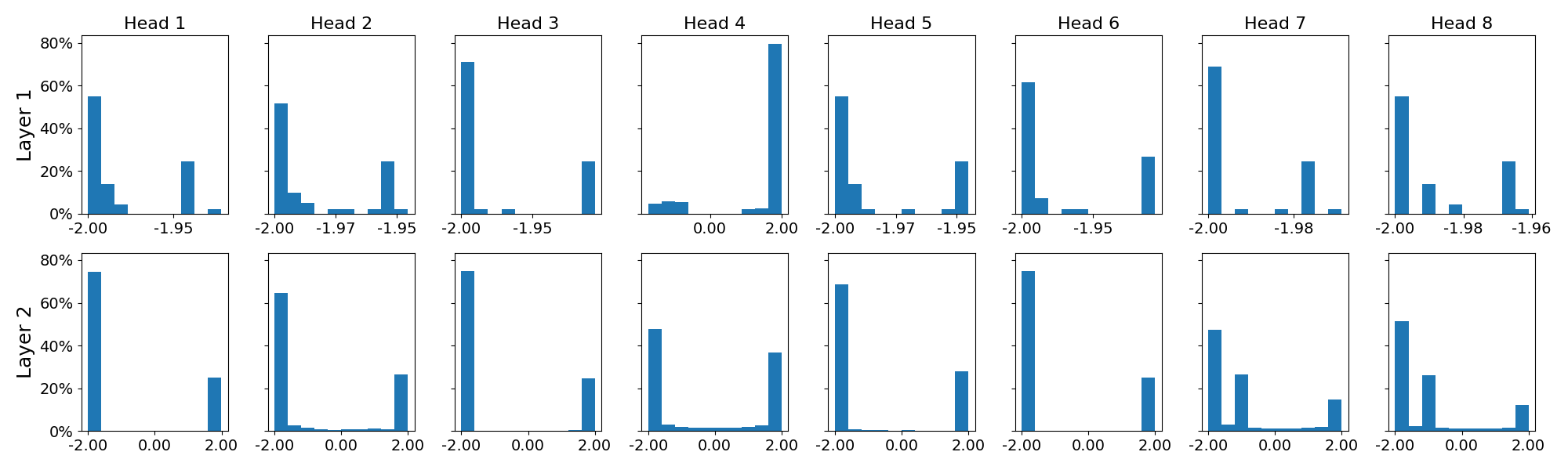}
    \caption{Distributions of $\gamma$ per head and layer for ASEntmax trained on Copy.}
    \label{fig:copy-gamma-dist}
\end{figure}

\begin{table}[!ht]
  \caption{Exact match accuracy (\%) on Copy task.}
  \label{tab:res-copy}
  \centering
  \begin{tabular}{lrrrrrrr}
    \toprule
    & ID & \multicolumn{6}{c}{Out-of-Distribution}                   \\
    \cmidrule(lr){2-2} \cmidrule(lr){3-8} 
    \bf Model   & \bf 64    & \bf 128   & \bf 256   & \bf 512   & \bf 1024  & \bf 2048  & \bf 4096  \\
    \midrule
    \textit{RoPE} \\
    Softmax  & 100.0 & 2.8 &  0  &  0  &  0  &  0  &  0  \\
    SSMax  & 100.0 &  0  &  0  &  0  &  0  &  0  &  0  \\
    ASSMax  & 99.9 & 19.9 &  0  &  0  &  0  &  0  &  0  \\
    Entmax  & 100.0 & 34.3 &  0  &  0  &  0  &  0  \\
    ASEntmax  & 100.0 & 5.3 &  0  &  0  &  0  &  0  &  0  \\
    \midrule
    \textit{NoPE} \\
    Softmax & 56.3 & 0  &  0  &  0  &  0  &  0  &  0  \\
    SSMax & 56.1 &  0  &  0  &  0  &  0  &  0  &  0  \\
    Entmax & 34.6 &  0  &  0  &  0  &  0  &  0  &  0  \\
    ASEntmax & 45.8 &  0  &  0  &  0  &  0  &  0  &  0  \\
    \midrule
    \textit{ALiBi} \\
    Softmax & 100.0 & 99.8 & 99.8 & 98.8 & 98.3 & 93.9 & 26.8 \\
    Entmax & 100.0 & 100.0 & 96.6 & 14.6 & 0.1 & 0 & 0 \\
    \midrule
    \textit{NAPE} \\
    Softmax  & 100.0 & 100.0 & 99.9 & 99.9 & 99.4 & 96.1 & 85.5 \\
    SSMax  & 100.0 & 100.0 & 100.0 & 99.9 & 99.6 & 99.3 & 95.8 \\
    ASSMax  & 99.9 & 99.8 & 99.7 & 99.3 & 97.5 & 91.1 & 72.8 \\
    Top-K, $K=16$ & 100.0 & 99.9 & 86.3 & 0.6 & 0.0 & 0.0 & 0.0 \\
    Top-K, $K=32$ & 100.0 & 99.7 & 96.8 & 26.7 & 0.0 & 0.0 & 0.0 \\
    Entmax  & 100.0 & 99.0 & 86.0 & 28.5 & 0.2 &  0  &  0  \ \ \\
    SEntmax  & 100.0 & 100.0 & 99.9 & 99.0 & 96.2 & 69.7 & 6.5 \\
    ASEntmax  & 100.0 & 100.0 & 99.9 & 99.7 & 99.4 & 96.3 & 86.6 \\
    \midrule
    SB Attention & 100.0 & 100.0 & 100.0 & 99.6 & 98.8 & 48.6 & 0.0 \\
    \bottomrule
  \end{tabular}
\end{table}

\subsection{Flip-Flop}\label{sec:flip-flop-results}
We first conducted an ablation study to evaluate model performance with various RoPE scaling factors (Table \ref{tab:res-flip-flop-rope}). Although the random baseline accuracy for Flip-Flop is $50\%$, our generative training setup with a vocabulary of 7 tokens (4 main and 3 special) can yield accuracies below $50\%$. Therefore, we treat accuracies at or below $50\%$ as poor and select a scaling factor of 16 as optimal. The RoPE scaling factor defines the expansion multiple to which the model must generalize. Throughout all experiments, however, we observe that RoPE models poorly generalize at sequence lengths $8\times$ the in-distribution length.

While Flip-Flop is considered a challenging task for testing length extrapolation \citep{liu2023exposing}, we found that ALiBi and NAPE strategy almost perfectly solves both the sparse and dense variants. 
Surprisingly, RoPE models generalize better with the sparse variants.

\begin{table}[!ht]
  \caption{Exact match accuracy (\%) for ablation of LLaMA 3 RoPE scaling on Flip-Flop (sparse)}
  \label{tab:res-flip-flop-rope}
  \centering
  \begin{tabular}{lrrrrrrrr}
    \toprule
    & & ID & \multicolumn{6}{c}{Out-of-Distribution}                   \\
    \cmidrule(lr){3-3} \cmidrule(lr){4-9} 
    \bf Model  & \bf Factor & \bf 64    & \bf 128   & \bf 256   & \bf 512   & \bf 1024  & \bf 2048  & \bf 4096  \\
    \midrule
    Softmax & - & 100.0 & 79.9 & 54.4 & 51.5 & 48.8 & 50.8 & 50.8 \\
    \cdashlinelr{1-9}
    Softmax & 4 & 100.0 & 99.6 & 33.8 & 11.2 & 3.3 & 0.5 & 0.0 \\
    Softmax & 8 & 100.0 & 100.0 & 72.6 & 0.2 & 0 & 0 & 0 \\
    Softmax & 16 & 100.0 & 99.9 & 97.3 & 36.7 & 0 & 0 & 0 \\
    Softmax & 32 & 100.0 & 99.2 & 71.6 & 51.3 & 51.1 & 49.3 & 49.2 \\
    \bottomrule
  \end{tabular}
\end{table}

\begin{table}[!ht]
  \caption{Accuracy (\%) on Flip-Flop (sparse).}
  \label{tab:res-flip-flop}
  \centering
  \begin{tabular}{lrrrrrrrrrr}
    \toprule
    & ID & \multicolumn{9}{c}{Out-of-Distribution}                   \\
    \cmidrule(lr){2-2} \cmidrule(lr){3-11} 
    \bf Model   & \bf 64    & \bf 128   & \bf 256   & \bf 512   & \bf 1024  & \bf 2048  & \bf 4096 & \bf 8192 & \bf 16K & \bf 32K \\
    \midrule
    \textit{RoPE} \\
 Softmax  & 100.0 & 99.9 & 97.2 & 36.6 & 0.0 & 0.0 & 0.0 & - & - & -  \\
    SSMax  & 100.0 & 99.8 & 91.8 & 77.8 & 52.2 & 22.0 & 39.2 & - & - & - \\
    Entmax  & 100.0 & 99.9 & 89.0 & 64.0 & 50.6 & 50.6 & 55.1 & - & - & - \\
    ASEntmax  & 100.0 & 99.8 & 98.9 & 51.4 & 51.4 & 50.2 & 49.2 & - & - & - \\
    \midrule
    \textit{ALiBi} \\
Softmax  & 100.0 & 99.9 & 99.8 & 99.9 & 99.9 & 100.0 & 99.7 & 99.7 & 99.9 & 99.7 \\
    Entmax  & 100.0 & 99.9 & 99.8 & 99.8 & 99.8 & 99.9 & 99.7 & 99.7 & 99.7 & 99.7 \\
    \midrule
    \textit{NAPE} \\
Softmax  & 100.0 & 99.8 & 99.6 & 99.3 & 99.7 & 99.6 & 99.6 & 99.6 & 99.3 & 99.4\\
    SSMax  & 100.0 & 99.8 & 99.9 & 99.8 & 99.8 & 99.7 & 99.8 & 100.0 & 99.9 & 99.6 \\
    Entmax  & 100.0 & 99.9 & 99.8 & 99.9 & 99.9 & 100.0 & 99.7 & 99.7 & 99.8 & 99.7 \\
    ASEntmax  & 100.0 & 99.8 & 99.6 & 99.3 & 99.5 & 99.3 & 99.5 & 99.7 & 99.6 & 99.5 \\
    \bottomrule
  \end{tabular}
\end{table}

\begin{table*}[!ht]
  \caption{Accuracy (\%) on Flip-Flop (dense).}
  \label{tab:res-flip-flop}
  \centering
  \begin{tabular}{lrrrrrrr}
    \toprule
    & ID & \multicolumn{6}{c}{Out-of-Distribution}                   \\
    \cmidrule(lr){2-2} \cmidrule(lr){3-8} 
    \bf Model   & \bf 64    & \bf 128   & \bf 256   & \bf 512   & \bf 1024  & \bf 2048  & \bf 4096  \\
    \midrule
    \textit{RoPE} \\
    Softmax  & 100.0 & 70.2 & 62.2 & 53.2 & 49.2 & 50.3 & 53.1 \\
    SSMax  & 100.0 & 69.1 & 60.4 & 53.2 & 48.5 & 51.0 & 53.1 \\
    Entmax  & 100.0 & 80.4 & 73.6 & 60.3 & 49.3 & 51.2 & 53.1 \\
    ASEntmax & 100.0 & 100.0 & 100.0 & 49.6 & 48.9 & 51.1 & 53.1 \\
    \midrule
    \textit{NAPE} \\
Softmax  & 100.0 & 100.0 & 100.0 & 100.0 & 100.0 & 100.0 & 100.0 \\
    SSMAx  & 100.0 & 100.0 & 100.0 & 100.0 & 100.0 & 100.0 & 100.0 \\
    Entmax  & 100.0 & 100.0 & 100.0 & 100.0 & 100.0 & 100.0 & 100.0 \\
    ASEntmax  & 100.0 & 100.0 & 100.0 & 100.0 & 100.0 & 100.0 & 100.0 \\
    \bottomrule
  \end{tabular}
\end{table*}

\subsection{Reverse}
From Table \ref{tab:res-reverse}, we can see that ASEntmax with NAPE achieved impressive $8 \times$ length generalization which to our knowledge, represents the largest extrapolation reported. Moreover, RoPE models fail even at a sequence length of 96. Although NAPE improves Softmax and SSMax, it does not enable generalization beyond $1.5 \times$ the in-distribution length; however, applying adaptive scaling to Softmax (ASSMax) enables performance at $2 \times$ the in-distribution length.

\begin{table}[!ht]
  \caption{Exact match accuracy (\%) on Reverse.}
  \label{tab:res-reverse}
  \centering
  \begin{tabular}{lrrrrr}
    \toprule
    & ID & \multicolumn{4}{c}{Out-of-Distribution}                   \\
    \cmidrule(lr){2-2} \cmidrule(lr){3-6} 
    \bf Model   & \bf 64 & \bf 96  & \bf 128 & \bf 256 & \bf 512   \\
    \midrule
    \textit{RoPE} \\
    Softmax & 100.0 & 0 & 0 & 0 & 0 \\
    SSMax & 100.0 & 0 & 0 & 0 & 0 \\
    ASSMax & 100.0 & 0 & 0 & 0 & 0 \\
    Entmax & 100.0 & 0 & 0 & 0 & 0 \\
    ASEntmax & 100.0 & 0 & 0 & 0 & 0 \\
    \midrule
    \textit{NoPE} \\
    Softmax & 100.0 & 0 &  0  &  0  &  0  \\
    SSMax & 100.0 & 0 &  0  &  0  &  0  \\
    Entmax & 100.0 & 77.1 & 0  &  0  &  0  \\
    ASEtmax & 100.0 & 74.4 & 0  &  0  &  0  \\
    \midrule
    \textit{ALiBi} \\
    Softmax & 100.0 & 0 &  0  &  0  &  0  \\
    Entmax & 100.0 & 96.1 & 78.5 & 0 & 0 \\
    \midrule
    \textit{NAPE} \\
    Softmax  & 100.0 & 36.0 &  0  &  0  &  0  \\
    SSMax  & 100.0 & 54.6 &  0  &  0  &  0  \\
    Top-K, $K=16$ & 100.0 & 99.4 & 83.3 & 0.0 & 0.0 \\
    Top-K, $K=32$ & 100.0 & 100.0 & 98.7 & 57.0 & 0 \\
    Top-K, $K = \frac{n}{2}$ & 100.0 & 100.0 & 98.8 & 68.9 & 0 \\
    ASSMax  & 100.0 & 98.7 & 62.4 &  0  &  0  \\
    Entmax  & 100.0 & 99.0 & 86.0 & 28.5 & 0.2 \\
    SEntmax  & 100.0 & 100.0 & 98.1 & 51.4 & 0.0 \\
    ASEntmax & 100.0 & 100.0 & 99.8 & 96.4 & 56.7 \\
    \midrule
    SB Attention & 100.0 & 100.0 & 99.1 & 0.0 & 0.0 \\
    \bottomrule
  \end{tabular}
\end{table}

\subsection{Sort}\label{sec:sort-results}
Table \ref{tab:res-sort} demonstrates superiority of $\alpha$-entmax on Sort, with two-layer models generalizing almost perfectly to $2 \times$ under both NoPE and NAPE configurations. Furthermore, Softmax models with NAPE experience a performance decline relative to their NoPE counterparts, and adaptive scaling degrades performance for both Softmax and $\alpha$-entmax (we also report results for NoPE + SEntmax to be convinced). However, combining NAPE with adaptive scaling enhances $\alpha$-entmax. This pattern suggest that sparsity, adaptive scaling, and NAPE can act complementarily.

\begin{table}[!ht]
  \caption{Exact match accuracy (\%) on Sort.}
  \label{tab:res-sort}
  \centering
  \begin{tabular}{lrrrr}
    \toprule
    & ID & \multicolumn{3}{c}{Out-of-Distribution}                   \\
    \cmidrule(lr){2-2} \cmidrule(lr){3-5} 
    \bf Model   & \bf 64    & \bf 128   & \bf 256   & \bf 512   \\
    \midrule
    \textit{RoPE} \\
    Softmax & 100.0 & 0 & 0 & 0 \\
    SSMax & 100.0 & 0 & 0 & 0 \\
    Entmax & 100.0 & 0 & 0 & 0 \\
    ASentmax & 100.0 & 0 & 0 & 0 \\
    \midrule
    \textit{NoPE} \\
    Softmax & 100.0 & 97.6 & 46.6 & 0 \\
    SSMax & 100.0 & 96.3 & 29.8  & 0\\
    Top-K, $K=16$ & 99.7 & 48.0 & 0 & 0 \\
    Top-K, $K=32$ & 100.0 & 92.5 & 0 & 0 \\
    Entmax & 100.0 & 99.9 & 66.2 & 0\\
    SEntmax & 100.0 & 99.4 & 47.7 & 0\\
    ASEnmtax & 100.0 & 97.5 & 20.8 & 0\\
    \midrule
    \textit{ALiBi} \\
    Softmax & 99.9 & 0 & 0 & 0 \\
    Entmax & 99.2 & 0 & 0 & 0 \\
    \midrule
    \textit{NAPE} \\
    Softmax & 100.0 & 0 & 0 & 0 \\
    SSMax & 100.0 & 0 & 0 & 0 \\
    ASSMax & 100.0 & 99.5 & 9.4 & 0 \\
    Entmax & 100.0 & 99.3 & 57.8 & 0 \\
    ASEntmax & 100.0 & 100.0 & 79.7 & 0 \\
    \midrule
    SB Attention & 100.0 & 36.5 & 0 & 0 \\
    \bottomrule
  \end{tabular}
\end{table}

\subsection{Language Modeling}\label{sec:extra_lm_results}

We present a comparison between RoPE and NAPE models on standard benchmarks in Table \ref{tab:short_lm_results_with_rope}, and on length generalization in Table \ref{tab:extra_ruler_lm_ruler_res}. NAPE alone improves short-context results on LAMBADA, HellaSwag, and PIQA. 
Furthermore, length generalization capabilities only emerge when using either RoPE with ABF or NAPE. 
While  SSMax, Entmax, and ASEntmax with RoPE + ABF achieve near-perfect generalization on the S-NIAH-1 task, NAPE combined with scalable models demonstrates more consistent performance across all sequence lengths in both S-NIAH-1 and S-NIAH-2.
Perhaps more importantly, we find that RoPE with ABF leads to a decline in in-distribution performance. For instance, Lambada perplexity/accuracy drops, as does accuracy on HellaSwag, PIQA, ARC-C, and the in-distribution S-NIAH tasks. 
Moreover, both Softmax + ABF and SSMax + ABF fail dramatically on S-NIAH-2.
Given that NAPE shows more consistent performance across different tasks and sequence lengths, we believe it stands out as a more robust positional encoding approach than RoPE.

\begin{table}[h]
  \caption{Downstream-task results on short-context datasets encompassing a comparison between RoPE and NAPE.}
  \label{tab:short_lm_results_with_rope}
  \centering
  \small
  \setlength{\tabcolsep}{4.2pt}
  \begin{tabular}{cl cccccccc}
    \toprule
& \bf Method & \small Lambada (PPL) & Lambada & Hellaswag & PIQA & Arc-C & WinoGrande & OpenbookQA  \normalsize \\   
    \midrule
    \parbox[t]{2mm}{\multirow{8}{*}{\rotatebox[origin=c]{90}{\small RoPE}}}
    & Softmax   &  59.7 & 30.3 & 32.5 & 64.3 & 26.3 & 50.8 & 28.6  \\
    & SSMax     &  54.5 & 30.9 & 32.3 & 63.4 & 26.0 & 51.7 & 31.0  \\
    & Entmax    &  53.0 & 31.1 & 32.9 & 62.9 & 25.8 & 49.9 & 27.8  \\
    & ASEntmax  &  56.5 &	29.9 &	32.2 &	62.4 & 25.5 & 52.3 & 27.8  \\
    \cdashlinelr{2-10}
    & + ABF, Softmax & 126.3 & 19.0 & 31.5 & 63.8 & 24.6 & 51.8 & 28.8 \\
    & + ABF, SSMax & 96.8 & 21.6 & 31.9 & 63.2 & 24.7 & 51.1 & 30.2 \\
    & + ABF, Entmax & 101.9 & 20.3 & 32.3 & 62.2 & 25.5 & 50.6 & 27.2 \\
    & + ABF, ASEntmax & 99.0 & 20.0 & 29.4 & 62.7 & 24.1 & 52.0 & 29.2 \\
    \cdashlinelr{1-10}
    \parbox[t]{2mm}{\multirow{4}{*}{\rotatebox[origin=c]{90}{\small NAPE}}}  
    & Softmax & 52.3 & 30.9 & 33.1 & 65.1 & 25.6 & 49.5 & 28.2  \\
    & SSMax & 48.9 & 31.6 & 32.9 & 65.1 & 25.0 & 51.5 & 30.4 \\
    & Entmax  & 47.9 & 32.1 & 32.8 & 63.6 & 24.6 & 50.9 & 29.0  \\
    & ASEntmax & 41.6 & 34.3 & 33.4 & 63.8 & 26.0 & 50.0 & 28.6  \\
    \bottomrule
  \end{tabular}
\end{table}

\begin{table}[t]
\caption{Retrieval performance on RULER benchmark}
\label{tab:extra_ruler_lm_ruler_res}
\centering
\small
\setlength{\tabcolsep}{4pt}
\begin{tabular}{clccccc cccc}
\toprule
& & \multicolumn{5}{c}{S-NIAH-1} & \multicolumn{4}{c}{S-NIAH-2} \\
\cmidrule(lr){3-7} \cmidrule(lr){8-11}
& & \multicolumn{2}{c}{ID} & \multicolumn{3}{c}{OOD} & \multicolumn{2}{c}{ID} & \multicolumn{2}{c}{OOD} \\
\cmidrule(lr){3-4} \cmidrule(lr){5-7} \cmidrule(lr){8-9} \cmidrule(lr){10-11}
& \bf Model & 1K & 2K & 4K & 8K & 16K & 1K & 2K & 4K & 8K \\
\midrule
\parbox[t]{2mm}{\multirow{8}{*}{\rotatebox[origin=c]{90}{\small RoPE}}}
& Softmax & 99.8 & 79.0 & 0.0 & 0.0 & 0.0 & 99.6 & 11.4 & 0.0 & 0.0 \\
& SSMax & 99.0 & 83.0 & 0.0 & 0.0 & 0.0 & 99.6 & 53.6 & 0.0 & 0.0 \\
& Entmax & 100.0 & 79.6 & 0.0 & 0.0 & 0.0 & 99.6 & 53.0 & 0.0 & 0.0 \\
& ASEntmax & 99.8 & 87.2 & 0.0 & 0.0 & 0.0 & 99.0 & 83.6 & 0.0 & 0.0 \\
\cdashlinelr{2-11}
& + ABF, Softmax  & 99.2 & 97.2 & 93.4 & 75.4 & 75.6 & 0.0 & 0.0 & 0.0 & 0.0 \\
& + ABF, SSMax  & 98.2 & 98.0 & 97.6 & 97.4 & 98.0 & 30.8 & 37.4 & 4.4 & 0.2 \\
& + ABF, Entmax &  98.4 & 98.2 & 99.4 & 100.0 & 100.0 & 98.8 & 89.0 & 64.8 & 32.4 \\
& + ABF, ASEntmax & 100.0 & 99.6 & 99.6 & 99.6 & 94.0 & 98.6 & 83.6 & 30.8 & 7.2 \\
\cdashlinelr{1-11}
\parbox[t]{2mm}{\multirow{4}{*}{\rotatebox[origin=c]{90}{\small NAPE}}}
& Softmax & 100.0 & 99.4 & 94.2 & 11.4 & 0.8 & 100.0 & 100.0 & 4.8 & 0.0 \\
& SSMax & 100.0 & 99.8 & 99.2 & 92.0 & 75.2 & 99.4 & 99.2 & 64.4 & 14.8 \\
& Entmax & 99.8 & 99.8 & 89.0 & 21.6 & 1.2 & 99.6 & 99.4 & 64.8 & 7.2 \\
& ASEntmax & 99.6 & 100.0 & 100.0 & 99.8 & 97.4 & 99.4 & 99.4 & 83.2 & 25.4 \\
\bottomrule
\end{tabular}
\end{table}

\section{AI Assistants}
We used Cursor during development, and ChatGPT during paper writing for correcting grammar and polishing sentences.

\end{document}